\documentclass[nonacm]{jair}
\setcopyright{none}
\renewcommand\footnotetextcopyrightpermission[1]{}

\usepackage{subcaption}
\usepackage{graphicx}
\usepackage{url}
\usepackage{amsmath}
\usepackage{booktabs}
\usepackage{multirow}
\usepackage{algorithmic}
\usepackage{algorithm}
\usepackage[switch]{lineno}
\usepackage{verbatim}
\usepackage[capitalize,noabbrev]{cleveref}

\usepackage[dvipsnames]{xcolor}
\usepackage{natbib} 
\newcommand{\rev}[1]{{#1}}

\newcommand{\Z}{\Tilde{Z}}
\newcommand{\ZH}{\hat{Z}}

\newcommand{\algoName}{Robust Interval Search - Non-Sticky}
\newcommand{\shortAlgoName}{RIS-NS}

\newcommand{\algoNameSticky}{Robust Interval Search - Sticky}
\newcommand{\shortAlgoNameSticky}{RIS-S}

\newcommand{\modifiedBisection}{modified bisection search}
\newcommand{\modifiedBisectionShort}{MBS}

\newcommand{\twoStep}{{TZ}}
\newcommand{\parameterized}{P}

\newcommand{\problemName}{policy observation robustness}
\newcommand{\ProblemName}{Policy Observation Robustness}

\newcommand{\MC}{\mathcal{C}}

\usepackage{amsthm}

\newtheoremstyle{main}
                {1em}                                                
                {1em}                                              
                {\normalfont}                                        
                {0pt}                                                
                {\scshape}                                           
                {\\*}                                                
                {2pt}                                                
                {\thmname{#1}\thmnumber{ #2}: \thmnote{\itshape #3}} 

\newtheorem{example}{Example}
\newtheorem{remark}{Remark}

\newtheorem{lemma}{Lemma}
\newtheorem{definition}{Definition}

\newtheorem{problem}{Problem} 

\usepackage[normalem]{ulem} 

\usepackage{thmtools}
\usepackage{thm-restate}

\newcommand{\threshold}{\Delta}
\newcommand{\prball}[1]{\ensuremath{\mathcal{B}^P_\infty \hspace{-0.4ex} \left(#1 \right)}}

\renewcommand{\algorithmiccomment}[1]{\bgroup\hfill//~#1\egroup}

\usepackage{array}
\newcolumntype{H}{>{\setbox0=\hbox\bgroup}c<{\egroup}@{}}

\acmVolume{}
\acmArticle{}
\acmMonth{}
\acmYear{}

\begin{document}
\pagestyle{plain}

\title[Observation Robustness]{Robustness Analysis of POMDP Policies to Observation Perturbations}

\author{Benjamin Kraske}
\authornote{Corresponding Author.}
\email{Benjamin.Kraske@colorado.edu}
\orcid{https://orcid.org/0009-0000-9618-1392}
\affiliation{%
  \institution{University of Colorado Boulder}
  \city{Boulder}
  \state{Colorado}
  \country{USA}
}

\author{Qi Heng Ho}
\orcid{https://orcid.org/0009-0006-0743-9341}
\email{qihengho@vt.edu}
\affiliation{%
  \institution{Virginia Tech}
  \city{Blacksburg}
  \state{Virginia}
  \country{USA}
}

\author{Federico Rossi}
\orcid{https://orcid.org/0000-0002-8091-881X}
\email{federico.rossi@jpl.nasa.gov}
\affiliation{%
  \institution{Jet Propulsion Laboratory, California Institute of Technology}
  \city{Pasadena}
  \state{California}
  \country{USA}
}

\author{Morteza Lahijanian}
\orcid{https://orcid.org/0000-0001-7549-4365}
\email{Morteza.Lahijanian@colorado.edu}
\affiliation{%
  \institution{University of Colorado Boulder}
  \city{Boulder}
  \state{Colorado}
  \country{USA}
}

\author{Zachary Sunberg}
\orcid{https://orcid.org/0000-0001-9707-3035}
\email{Zachary.Sunberg@colorado.edu}
\affiliation{%
  \institution{University of Colorado Boulder}
  \city{Boulder}
  \state{Colorado}
  \country{USA}
}


\renewcommand{\shortauthors}{Kraske et al.}

\begin{abstract}
    
    {\bf Background:} Policies for Partially Observable Markov Decision Processes (POMDPs) are often designed using a nominal system model. In practice, this model can deviate from the true system during deployment due to factors such as calibration drift or sensor degradation, leading to unexpected performance degradation. This work studies policy robustness against deviations in the POMDP observation model.

    {\bf Objectives:} We introduce the \ProblemName{} Problem: to determine the maximum tolerable deviation in a POMDP's observation model that guarantees the policy's value remains above a specified threshold. We analyze two variants: the sticky \rev{variant}, where deviations are dependent on state and actions, and the non-sticky \rev{variant}, where they can be history-dependent.
    
    {\bf Methods:} We show that the \ProblemName{} Problem can be formulated as a bi-level optimization problem in which the inner optimization is monotonic in the size of the observation deviation. This enables efficient solutions using root-finding algorithms in the outer optimization. For the non-sticky \rev{variant}, we show that when policies are represented with finite-state controllers (FSCs) it is sufficient to consider observations which depend on nodes \rev{in the FSC} rather than full histories. This makes the problem tractable in the non-sticky \rev{variant}. We present Robust Interval Search, an algorithm with soundness and convergence guarantees, for both the sticky and non-sticky \rev{variants}. We show this algorithm has polynomial time complexity in the non-sticky \rev{variant} and at most exponential time complexity in the sticky \rev{variant}.
    
    {\bf Results:} We provide experimental results for implementations of Robust Interval Search. These validate the ability of our algorithms to find the robustness of POMDP policies to observation model deviations. We show the algorithms can scale to \rev{POMDP} problem \rev{instances} with tens of thousands of states. Case studies from robotics and operations research demonstrate its practical utility in providing robustness analyses to model and policy designers.
    
    {\bf Conclusions:} We introduce, analyze, and provide solution methods for the \problemName{} problem. We demonstrate the scalability and relevance of these methods empirically.

\end{abstract}




\maketitle

\section{Introduction}

\rev{Policies for autonomous systems are often designed using approximate models. Consider NASA's Volatiles Investigating
Polar Exploration Rover (VIPER) \citep{colaprete2019overview,HELDMANN2016308}. Here, methods for autonomous mission planning have been proposed by \citet{VIPERMDP}. In such cases, a system must navigate autonomously using sensors whose accuracy may degrade over time or differ from pre-deployment model assumptions, including assumptions of perfect observability. A policy that performs well on the approximate model or a model assumed to be fully observable may fail at deployment if these approximations or assumptions are inaccurate. This motivates a need to verify policy performance across the space of potential models a system may encounter at deployment.}

\rev{We address this challenge within the framework of Partially observable Markov decision processes (POMDPs), which provide a principled approach to sequential decision-making under stochastic transitions and partial state observability. Applications include robotics \citep{9899480}, \rev{autonomous navigation} \citep{10.5555/3535850.3535914}, human-computer interaction \citep{10.1145/3359616}, healthcare \citep{ayer_or_2012}, and logistics \citep{ICAA}. In these applications, ensuring that a policy remains robust is a prerequisite for real-world autonomy.}

\rev{POMDP policies are often obtained through approximate means, including heuristics and direct hand-specification, treating the system as a fully observable Markov decision process despite partial observability, or using approximate models with data-fitted or manually tuned parameters.
However, these approximations can come at a cost. MDP policies in particular tend to degrade in performance when directly deployed in settings which are, in reality, only partially observable through noisy observations \citep{sharma2024risk}. Additionally, even if a precise model is available, systems are prone to change over time, such as camera calibration degradation or temperature sensor damage. Determining the sensitivity of a policy to such model error or change allows for more reliable deployment in real world applications. While many works focus on finding policies robust to model error, known as \emph{robust policy synthesis} \citep{white_parameter_1986,givan_bounded-parameter_2000,cubuktepe_robust_2021,galesloot_pessimistic_2024,robustpolsynth}, it is not always the case that model error is well-quantified or accounted for during policy design. Indeed, directly estimating the model inaccuracy may not be feasible. In such cases, a designer may want to understand how much model inaccuracy can be tolerated by a given policy before performance degrades unacceptably. Consider the Airborne Collision Avoidance System Xu (ACAS Xu) \citep{9081758}, where a policy is found via an approximate observation function and the QMDP framework of certainty equivalence. Such a policy must be validated across a range of observation models representative of real-world conditions before deployment. This motivates a convenient means of completing such validation with formal guarantees.}

\begin{figure}
    \includegraphics[width=0.8\linewidth]{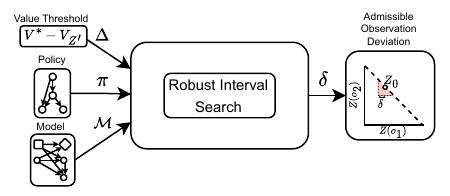}
    \caption{Solution approach to the \ProblemName{} Problem.}
    \Description{A diagram showing the solution approach to the Policy Observation Robustness Problem. It consists of inputs and outputs to Robust Interval Search. The inputs are: a value threshold, a policy, and a model. The output is an admissible observation deviation, visualized as the space of observation probabilities in a two-dimensional probability simplex.}
    \label{fig:overall}
\end{figure}

\rev{In this work, we introduce the \ProblemName{} Problem: find the maximum admissible deviation in a partially observable Markov decision process (POMDP) observation model that guarantees a policy's value remains above a specified threshold (see \cref{fig:overall}). This problem is complementary to the robust policy synthesis -- rather than finding a policy robust to a known uncertainty set, we quantify the uncertainty a given policy can tolerate. Solving this problem is crucial for quantifying the robustness of a policy to errors or misspecifications in the underlying observation model. This applies to any policy -- including heuristic policies, MDP policies, or robust policies -- providing insight into performance beyond the nominal or interval models for which they were designed.}

\rev{We show that this problem can be formulated as a bi-level optimization problem over Markov chains (MCs) with uncertain parameters. We analyze two variants of our problem which borrow notions of robustness established by \citet{bovy_imprecise_2024}:
\begin{enumerate}
      \item The \textit{sticky} variant where observation deviations 
    depend only on state and action, and are fixed for all timesteps, corresponding to a static POMDP model which minimizes the value of the policy. This can be thought of as an adversarial player, often referred to as nature \citep{bovy_imprecise_2024}, selecting a single POMDP model;
    \item The \textit{non-sticky} variant, where deviations may depend on history, in addition to the state and action, and therefore vary across timesteps. This captures adversarial perturbations that exploit the history dependence of the policy. This can be thought of as dynamic, time-varying POMDP model selection by nature \citep{bovy_imprecise_2024}.
\end{enumerate}}

\rev{Our key structural insight is that the worst-case policy value is monotonic in the size of the observation deviation, enabling efficient and sound bisection search over a single scalar $\delta$ using root-finding algorithms. For the non-sticky variant, we further show that it is sufficient to consider deviations depending on finite state controller nodes rather than the full history, reducing a potentially infinite dimensional problem to a tractable interval Markov chain evaluation. }

Building on these theoretical results, we present Robust Interval Search, a novel algorithm with convergence and soundness guarantees. The algorithm performs an efficient search for the maximum deviation in the outer optimization, while the solving an inner optimization which determines whether all models within the set defined by the deviation satisfy the value threshold. This approach yields a polynomial-time solution for the non-sticky variant and a solution with at most exponential time complexity for the more challenging sticky variant. Finally, we provide empirical validation of the algorithm implementations which use efficient near-optimal approximations for value calculations, demonstrate their scalability to \rev{POMDP} problems with tens of thousands of states, and provide relevant case studies from robotics, medicine, and operations research.

\rev{Our contributions are the following:}
\begin{enumerate}
    \item \rev{We introduce and solve a new problem, the \ProblemName{} problem for both the sticky and non-sticky variants.}

    \item \rev{We show that the structure of these problems enables bi-level optimization, which we leverage in algorithms that have soundness and convergence guarantees when exact value calculations are used. We also provide complexity analysis of these algorithms.}
    
    \item \rev{For the non-sticky variant, we prove that it is sufficient to consider deviations which depend on nodes in a finite state controller rather than histories, reducing the potentially infinite dimensionality of the problem and enabling solutions to infinite horizon problems. We also show that worst-case value under a deviation in this case can be calculated using a novel Markov chain structure, a \textit{two-step Interval Markov chain}, in polynomial time using existing methods.}
    
    \item \rev{We provide empirical results validating and demonstrating the scalability of implementations of the algorithms with approximate value calculations to POMDP problems with tens of thousands of states. We also provide case studies demonstrating applications of the problem and algorithms.}
\end{enumerate}
\subsection{Related Work}
\noindent \textit{MDP and POMDP Solutions}

\rev{Markov decision processes (MDPs) provide a principled framework for sequential decision-making problems with uncertainty due to stochastic state transitions.} POMDPs extend this by incorporating state uncertainty and partial observability. While finding optimal POMDP policies is NP-hard in the finite-horizon case \citep{c785ceef-c26f-38c4-b878-e08088df3cc3} and undecidable in the worst case \citep{MADANI20035}, various solution methods exist to find near-optimal solutions to POMDPs. These are generally classified into offline \citep{smith2004heuristic, kurniawati_sarsop_2008} and online \citep{silver_monte-carlo_2010,ye_despot_2017} methods. Additionally, a significant subset of solution methods generate Finite-State Controller (FSC) policies with desirable properties, such as satisfying reach-avoid specifications, near-optimality, etc. \cite{meuleau_solving_2013,junges_finite-state_nodate,chatterjee_qualitative_2015,ijcai2025p958,Bai2011}. Likewise, several works are concerned with the verification and synthesis of POMDP policies to meet given specifications \citep{norman2017verification,bork_verification_2020}.
In this work, we focus on analysis of the robustness of policies represented as FSCs to changes in the POMDP observation model.

\noindent \textit{Robust POMDP Solutions}

In many scenarios, there is uncertainty over (PO)MDP model parameters. A substantial literature addresses computing policies which are robust to uncertainty over model parameters \cite{white_parameter_1986,givan_bounded-parameter_2000,cubuktepe_robust_2021,galesloot_pessimistic_2024,robustpolsynth}. \citet{givan_bounded-parameter_2000} introduce Bounded-Parameter MDPs, also referred to as Interval MDPs (iMDPs), where uncertainty in transition and reward functions are specified with intervals. They introduce Interval Policy Evaluation (IPE) and Interval Value Iteration as solution methods. More recently, \citet{cubuktepe_robust_2021} presented a method to compute robust finite-memory policies for uncertain POMDPs while \citet{galesloot_pessimistic_2024} developed a technique to find policies robust to intervals over transitions, which can be generalized to observations, through iterative selection of pessimistic models and FSC solutions to these models. 

These "forward" robust planning methods assume the uncertainty parameter intervals or sets are known a-priori. However, in some applications, designers often lack precise intervals over model parameters and may assume or estimate nominal values that are inherently inaccurate. For example, characterizing precisely the calibration of a perception system or the degree to which a state may be observed is nontrivial. In such situations, a designer computes a policy for a nominal (PO)MDP via a solver or by hand-specifying a policy to ensure interpretability or simplicity. Given such policies, it is desirable to have a notion of the model accuracy required to achieve some value or performance threshold. This paper addresses an "inverse problem", the \ProblemName{} Problem: given a policy and a performance threshold, we determine the maximum allowable deviation for the observation function that guarantees the threshold is met.

\citet{konsta_what_2024} solve a problem involving both model design and policy synthesis: converting an MDP to a POMDP by designing an observation function such that a strategy can be found that meets some minimum value requirement while respecting a budget on the number of available observations. In contrast, our work finds admissible bounds on the deviation from a nominal observation function in an existing POMDP model given a fixed policy.

\noindent \textit{Robustness Analysis}

A related set of works analyze the robustness of MDP and POMDP policies to modeling error \cite{kara_robustness_2021, fard_variance_2008,mastin_loss_2012,tan_sensitivity_2011,doi:10.1137/100808976,demirci2025sensitivity}. \citet{ou2025sequential} survey the robust MDP literature in particular, discussing formulations of uncertainty and their tradeoffs.
\citet{mastin_loss_2012} provide an analysis on the loss in reward given a bound on the error of estimated transition probabilities in MDPs.
\citet{tan_sensitivity_2011} analyze the range of reward parameters over which an MDP policy remains optimal. 
\citet{fard_variance_2008} develop a method to approximate the bias and variance introduced into the value of a POMDP policy when this value is calculated using empirical POMDP models.
\rev{\citet{doi:10.1137/100808976} study the optimization of observation models, leveraging results on the continuity of optimal cost under different notions of convergence and the compactness of classes of observation models, with application to quantization of models.} 
\citet{kara_robustness_2021} establish convergence results and error bounds on the value of policies derived from approximate transition models, with application to learned MDP and POMDP models.
\rev{\citet{demirci2025sensitivity} provide computable bounds on the suboptimality of policies derived from  approximate transition and observation models. These bounds are expressed in terms of distances between the true and approximate transition and observation kernels. They derive analogous bounds for policies obtained from finite quantizations of the state and measurement spaces.}
In general, these works focus on the analysis of policy performance as approximate models converge to the true model or provide value bounds for a known approximate model. In contrast, our work is focused on finding the maximum allowable deviation in observation model under which a policy meets some performance threshold. This can be thought of as finding the space of models, as defined by the deviation, over which the policy is robust.

Of note are works which discuss notions of robustness for MDPs and POMDPs. \rev{\citet{iyengar2005robust,nilim2005robust} analyze robust dynamic programming over uncertainty sets and provide results for different notions of uncertainty. \citet{iyengar2005robust} introduce two notions of uncertainty, static and dynamic, for robust MDPs.}
\citet{bovy_imprecise_2024} extend these notions to POMDP policy robustness, highlighting how differing worst-case assumptions (e.g., stationary vs. dynamic adversaries) impact resulting policies in the POMDP case. These concepts of robustness inform the two variants of our problem previously introduced. 

Related to our work are parameter synthesis works \citep{10.1007/978-3-319-46520-3_4,cubuktepe2021convex,junges_parameter_2024,kroening_prophesy_2015,spel_finding_2021}. \citet{spel_finding_2021} propose a method for finding near-optimal parameters for parametric Markov chains (pMCs) that provide a near-optimal upper bound on reachablility probabilities. Most closely related is the work of \citet{junges_parameter_2024}, which addresses parameter synthesis by characterizing regions of a general parameter space satisfying a given specification for pMCs, parametric MDPs, and POMDP policies. Our work instead seeks the largest uniform ball around a nominal observation function guaranteeing a value threshold — a structurally different problem that reduces to a univariate search over a single scalar $\delta$. We prove and exploit the monotonicity of the worst-case value in $\delta$ to solve this via bisection, using the parameter lifting method in \citep{junges_parameter_2024} as a subroutine. While the method in \citet{junges_parameter_2024} could in principle be adapted to directly solve the problem in this paper, our approach yields favorable computational guarantees: in the non-sticky variant, we use interval Markov chain evaluation rather than parameter synthesis, yielding polynomial complexity in problem size; the sticky variant uses parameter lifting \citep{junges_parameter_2024} as a subroutine within a bisection over a single scalar rather than iterative refinement over the full parameter space.

\subsection{Outline}
The rest of the paper is organized as follows. We begin with relevant preliminaries in \cref{sec:preliminaries}. We provide a formal definition for sticky and non-sticky variants of the \problemName{} problem, as well as additional intuition on the problem structure, in \cref{sec:problem}. In \cref{sec:overall method}, we introduce a general bi-level approach to solving the problem, Robust Interval Search (RIS). In \cref{sec:solution_sticky,sec:nonsticky_solution}, we tailor this approach for the sticky and non-sticky variants of the problem, respectively, and analyze the resulting algorithms, \algoNameSticky{} (\shortAlgoNameSticky{}) and \algoName{} (\shortAlgoName{}). Finally, we provide validation and scalability results in addition to illustrative case studies for implementations of these algorithms in \cref{sec:Evaluation}.

\section{Preliminaries} \label{sec:preliminaries}
Before giving a formal problem statement, we provide definitions for POMDPs, finite-state controllers as POMDP policies, and Markov chains as a representation of the execution of FSCs on POMDPs.

\subsection{POMDPs}
POMDPs provide a model for sequential decision-making under both transition (aleatory) and state (epistemic) uncertainty.
\begin{definition}[POMDP]\label{def:POMDP}
    A \emph{partially observable Markov decision process} (POMDP) is a tuple\\ $\mathcal{M} = (S,A,O,T,Z,R,\gamma,b_0)$, where $S$ is a finite state space, $A$ is a finite action space, $O$ is a finite observations space, $T: S\times A \times S \rightarrow [0,1]$ is a transition probability function, $Z: A\times S \times O \rightarrow [0,1]$ is an observation probability function, $R: S\times A \rightarrow [R_{min},R_{max}]$ with $R_{min}, R_{max} \in \mathbb{R}$ 
    is the immediate reward function, $\gamma \in [0,1]$
    is a discount factor, and $b_0 \in \mathbf{D}(S)$ is the initial belief where $\mathbf{D}(S)$ is the probability simplex (the set of all probability distributions) over $S$.
\end{definition}

A history $h_t=(b_0,a_1,o_1,...,a_{t-1},o_{t-1})$ denotes the set of actions taken and observations received from some initial belief $b_0$ up until timestep $t$. A POMDP policy $\pi$ is a mapping from history $h_t$ to action $a_t$. The belief induced by history $h$ can be recursively computed according to the expression

\begin{equation}
    \begin{aligned}
         b_{h_t}(s') \propto \: & Z(o_{t-1}|a_{t-1},s') \cdot \sum_{s\in S}T(s'|s,a_{t-1})b_{h_{t-1}}(s),
    \end{aligned}
\end{equation}
where $b_{h_0}=b_0$.

Denote the set of reachable histories in POMDP $\mathcal{M}$ under policy $\pi$ as $H(\pi)$. Let $V_{Z}^{\pi, d}$ denote the $d$-step discounted total reward for a history $h_{\tau-d}$ under policy $\pi$ on a POMDP $\mathcal{M}$ for a horizon $\tau$. Then, the value can be computed recursively as

\begin{align}\label{eq:fh_value}
\begin{aligned}
&V_Z^{\pi, 0}(h_{\tau}) = 0 && \quad \forall h_{\tau} \in H(\pi),\\
&V_Z^{\pi, d}(h_{\tau - d}) = \sum_{s \in S} b_{h_{\tau - d}}(s) \bigg[ R(s, \pi(h_{\tau - d})) \\
&\hspace{4em} + \gamma \sum_{s' \in S} T(s' \mid s, \pi(h_{\tau - d})) \sum_{o \in O} Z(o \mid \pi(h_{\tau - d}), s') \cdot V_Z^{\pi, d - 1}(h_{\tau - d + 1}) \bigg] && \quad \forall d \le \tau.
\end{aligned}
\end{align}

\noindent
We refer to this as the finite-horizon case. When $d = \infty$, for any $t \geq 0$, we have that
\begin{align}\label{eq:ih_value}
\begin{aligned}
V_Z^{\pi, \infty}(h_t) &= \sum_{s \in S} b_{h_t}(s) \left[ R(s, \pi(h_t)) + \gamma \sum_{s' \in S} T(s' \mid s, \pi(h_t)) \sum_{o \in O} Z(o \mid \pi(h_t), s') \cdot V_Z^{\pi, \infty}(h_{t+1}) \right] \qquad \forall h_t \in H(\pi),
\end{aligned}
\end{align}

\noindent where $h_{t+1}$ denotes the history resulting from receiving observation $o$ after taking action $a$. We refer to this as the infinite-horizon case.

\begin{remark}
    In this paper, for $\tau \in \mathbb{N}$, we consider $\gamma \in [0,1]$. When $\tau = \infty$, we restrict $\gamma \in [0,1)$ to ensure that the value function is a contraction mapping  \citep{hauskrecht1997planning}.
\end{remark}

\subsection{Finite-State Controllers}
In order to promote tractability, in this work we consider policies with finite memory. Given that (potentially countably-infinite-node) state controllers can be used to represent any policy \citep{meuleau_solving_2013}, finite-state controllers (FSCs) are able to represent arbitrary finite memory history-based policies. 

Here we overload $\pi$ to denote both a functional policy which maps histories to actions and an FSC which represents such a policy. 
Denote $H(n)$ as the set of histories which reach some node $n$ in $\pi$. Here we define \raisebox{-0.25em}{$\overset{Z_0}{\rightharpoonup}$} as a partial function such that it is only defined for observations with non-zero probabilities of being received from the beliefs which correspond to nodes (as any history maps to some belief), i.e., overloading notation $O(n) = \{o|\sum_{s\in b_{h}} Z(o|a(n),s) > 0 \; \forall h \in H(n)\}$.

\begin{definition}[FSC]
    A finite-state controller (FSC) for a POMDP is defined by a tuple $\pi = (N,n_0,A,\alpha,O,T_{FSC})$ where $N$ is a finite set of memory nodes, $n_0$ is the initial memory node, A is as in \cref{def:POMDP}, $\alpha: N \rightarrow A$ is a function which maps each node to an action, O is as in \cref{def:POMDP}, and $T_{FSC}: N \times O$\raisebox{-0.25em}{$\overset{Z_0}{\rightharpoonup}$}$N$ is the memory update. Furthermore, we define $\mathcal{O}: N \times N \to 2^O$ to be a function that maps a memory transition (update) $(n,n')$ in the FSC to the set of observations that enable it:, i.e., $\mathcal{O}(n,n') = \{o 
    \in O
    \mid T_{FSC}(n,o) = n'\}$.
\end{definition}

The definition of $T_{FSC}$ is motivated by the assumption that policy designers will only specify actions for observations which can be received under the nominal model. Likewise, many solvers \cite{kurniawati_sarsop_2008,ye_despot_2017} only provide $\epsilon$-optimal solutions for a nominal model, not considering off-nominal observations.

The performance of an FSC can be evaluated via Monte Carlo simulation. 
However, achieving accurate value estimates is computationally expensive, with a convergence rate of $\frac{1}{\sqrt{n}}$. Instead, we consider a Markov chain representation. 

\subsection{Evaluation of FSCs on POMDPs: Product Markov Chains}
The execution of an FSC policy $\pi$ on a POMDP $\mathcal{M}$ can be represented as an induced product Markov chain \citep{meuleau_solving_2013}. We restate this representation
for clarity.

\label{sec:pomdpevolution}
\begin{definition}[Product Markov Chain]
\label{def:MC}
Given a POMDP $\mathcal{M}$ and an FSC policy $\pi$, the induced Markov chain is defined as $\MC = \mathcal{M} \times \pi = (q_I, Q_{\pi}, T_{\pi}, R_{\pi})$, where
\begin{itemize}
    \item $q_I = (s_I, n_0)$ is the initial Markov chain state, where $s_I$ is an augmented initial state used to capture the initial belief distribution in the MC transition function;
    \item $Q_{\pi} = (S \times N)  \cup \{q_I\}$ 
    is the set of product states;
    \item The transition function $T_\pi: Q \times Q \rightarrow [0,1]$ is defined for states $q = (s,n)$ and $q' = (s',n')$ as 
    \begin{align*}
        &T_{\pi}((s', n') | (s,n))= 
        \begin{cases}
            T(s' | s, \alpha(n)) \sum_{o \in \mathcal{O}(n,n')} Z(o|\alpha(n),s') & \text{ if } s \neq s_I \\
            b_0(s') & \text{ if } s = s_I, \; n' = n_0\\
            0 & \text{ o.w.}
        \end{cases}
    \end{align*}
    \item The reward function $R_{\pi}: (S \times N) \rightarrow \mathbb{R}$ is defined as $R_{\pi}(s, n) = R(s, \alpha(n))$.
\end{itemize}
    
\end{definition}
Note that in this definition, $T_\pi$ is a composite transition kernel that captures both the transition and observation function into one transition.

The $d$-step (discounted) total reward value of an FSC at node $n$ on POMDP from state $s$ is
\begin{equation}\label{eq:FSCVal}
    \begin{aligned}
         &V^{\pi,d}(s,n) = R(s,\alpha(n))+ \gamma \sum_{(s',n')\in Q}T_{\pi}((s',n')|(s,n)) V^{\pi,d-1}(s',n'),
    \end{aligned}
\end{equation}
with $d=\infty$ as the infinite-horizon case as in \cref{eq:ih_value}.

Likewise for an FSC on a POMDP:
\begin{equation}\label{eq:FSCValInit}
    V^{\pi,d}(h_0) = \sum_{s\in S}\left[b_0(s) V^{\pi,d}(s,n_0(\pi)) \right].
\end{equation}

Thus, we now have a means of obtaining the value of a given FSC on a POMDP, which is parameterized by the observation function $Z$.
\section{Concepts of Policy Robustness and Problem Statement} \label{sec:problem}

Characterizing how changes to POMDP observation models, whether model error or drift, affect policy performance is both non-trivial and essential for safe deployment. This work analyzes the robustness of POMDP policies to perturbations in the observation function. Specifically, we seek to identify the admissible deviations in the observation model under which a policy can be safely deployed. This motivates our problem definition.
We begin by introducing the first variant of our problem.

\subsection{The Policy Observation Robustness Problem (Sticky)}\label{sec:real_problem}

In this work, we consider history-dependent policies represented as finite-state controllers (FSCs). FSCs constitute a broad class of POMDP policies, encompassing piecewise-linear and convex $\alpha$-vector representations that can approximate the optimal value function arbitrarily well \citep{sondik1978}. 

Given a nominal POMDP model $\mathcal{M}$ and an FSC policy $\pi$, let the nominal value of the policy be $V^{\pi,d}_{Z_0}(h_0)$, where $Z_0$ is the nominal observation function. When the true observation function deviates from the nominal model, denoted by $Z'$, the value of the policy $\pi$ changes. We define the \textit{value degradation} as the decrease in value caused by this deviation:
\begin{equation}
    \threshold_D^{Z} = V^{\pi,d}_{Z_0}(h_0)- V^{\pi,d}_{Z'}(h_0).
\end{equation}

Our primary objective is to find the maximum tolerable perturbation to the observation function such that the worst-case value degradation remains below a specified threshold $\threshold$. We define a deviation of magnitude $\delta$ using a probability ball. The set of possible deviated observation functions within distance $\delta$ is:
\begin{equation}\label{eq:pball_sticky}
    \begin{aligned}
        \prball{Z_0, \delta} = &\left\{ Z'(o|a,s') \; \Big| \; |Z_0(o|a,s')-Z'(o|a,s')| \leq \delta,\right.\\
        &Z_0(o|a,s')>0 \implies Z'(o|a,s') \geq \epsilon_p,\\
        &Z_0(o|a,s')=0 \implies Z'(o|a,s')=0,\\
        &\left. \sum_{o\in O}Z'(o|a,s') = 1 \; \forall o \in O, a \in A, s'\in S  \right\}.
    \end{aligned}
\end{equation}

This definition allows for changes of magnitude $\delta$ to one or more probabilities in $Z$.
We use this definition as it is a stronger worst-case than considering a single parameter change. Additionally, this definition assumes no new observations may be added to the support while existing observations in the support may have their probabilities reduced to $\epsilon_p$. 
This structure-preserving assumption is common in the literature \citep{spel_finding_2021,junges_parameter_2024,cubuktepe2021convex,kroening_prophesy_2015}.

Formally, we define the policy observation robustness problem as follows. We consider deviations in the observation function that are applied uniformly across the state and action spaces and do not depend on the internal history or memory of the policy. This aligns with the notion of \emph{full stickiness} of an adversary's perturbations introduced by \citet{bovy_imprecise_2024}, i.e. an adversary's choice of a perturbation must "stick" after it is selected. For clarity and consistency with our setting, we refer to this as the \emph{sticky} observation robustness model.

\begin{problem}[\emph{Sticky} Observation Robustness of POMDP Policies]\label{prob:sticky}
Given a policy $\pi$ and a POMDP model $\mathcal{M}$, with nominal observation function $Z_0$, value degradation threshold $\threshold$, and horizon $d \in \mathbb{N}^0 \cup \{\infty\}$, find the maximum admissible observation function deviation $\delta_{S}$, i.e., 
\begin{subequations}\label{eq:prob_sticky}
    \begin{align}
            \delta_S = \max_{\delta\in[0,1]} & \: \delta\label{eq:prob_sticky_delta}\\
            \textnormal{s.t. } & V^{\pi,d}_{Z_0}(h_0)- V^{\pi,d}_{Z'}(h_0)\leq \threshold \qquad \forall \: Z' \in \prball{Z_0, \delta}.\label{eq:prob_sticky_constraint}
\end{align}
\end{subequations}
\end{problem}

In contrast to forward policy robustness problems, which evaluate a policy's value for a fixed perturbation level $\delta$, \cref{prob:sticky} instead seeks to compute the largest admissible perturbation radius. That is, we determine the maximum $\delta$ such that all observation models within the corresponding $\delta$-ball satisfy a prescribed value constraint for a fixed policy.

\subsection{Non-Sticky Problem}\label{sec:nonstickyproblem}

\cref{prob:sticky} induces a bi-level optimization problem, in which the inner optimization is constrained by the requirement that the adversary (nature) applies a uniform observation distribution $Z'(o \mid a, s')$ across all controller nodes. As we show in \cref{sec:AlgorithmSticky}, this problem is computationally challenging due to the structure of the inner optimization.

Motivated both by this computational difficulty and by alternative modeling assumptions on how observation uncertainty may manifest, we additionally consider a different notion of robustness. The \textit{zero stickiness} setting introduced by \citet{bovy_imprecise_2024} removes the requirement that observation deviations be shared across controller nodes. In our setting, this corresponds to allowing the observation distribution to depend not only on $a$ and $s'$, but also on the policy history $h$. We refer to this as the \textit{non-sticky} variant.

The history-dependent observation distribution function is defined as:
\begin{equation}
    \ZH: A \times S \times H \times O \rightarrow [0,1].
\end{equation}

By extending the probability ball definition from \cref{eq:pball_sticky} to include history, we construct $\prball{\ZH_0, \delta}$, which allows deviations to vary based on $h \in H(\pi)$. Formally,
\begin{equation}\label{eq:pball}
    \begin{aligned}
        \prball{\ZH_0, \delta} = &\left\{ \ZH'(o|a,s',h) \; \Big| \; |\ZH_0(o|a,s',h)-\ZH'(o|a,s',h)| \leq \delta,\right.\\
        &\ZH_0(o|a,s',h)>0 \implies \ZH'(o|a,s',h) \geq \epsilon_p,\\
        &\ZH_0(o|a,s',h)=0 \implies \ZH'(o|a,s',h)=0,\\
        &\left. \sum_{o\in O}\ZH'(o|a,s',h) = 1 \; \forall o \in O, a \in A, s'\in S, h \in H(\pi)  \right\}.
    \end{aligned}
\end{equation}

We \rev{then} define value degradation \rev{in this variant} as the decrease in value caused by a change in $\ZH$,
\begin{equation}
    \threshold_D^{\ZH} = V^{\pi,d}_{\ZH_0}(h_0)- V^{\pi,d}_{\ZH'}(h_0),
\end{equation}

\noindent where $\ZH_0$ refers to the nominal observation function, and $\ZH'$ refers to the deviated observation function. Note that while the original nominal POMDP model is inherently sticky ($\ZH_0(o|a,s',h)=Z_0(o|a,s') \; \forall h \in H(\pi)$), the adversarial deviation $\ZH'$ is free to be non-sticky.

\begin{problem}[Non-Sticky Observation Robustness Problem]\label{prob:non-sticky}
Given a policy $\pi$ and a POMDP model $\mathcal{M}$, with nominal observation function $\hat{Z}_0$, value degradation threshold $\threshold$, and horizon $d \in \mathbb{N}^0 \cup \{\infty\}$, find the maximum admissible history-dependent observation function deviation, $\delta_{NS}$, i.e.,
\begin{subequations}\label{eq:prob_nonsticky}
    \begin{align}
       \delta_{NS} =  \max_{\delta\in[0,1]} & \: \delta\label{eq:prob_nonsticky_delta}\\
        \textnormal{s.t. } & V^{\pi,d}_{\ZH_0}(h_0)- V^{\pi,d}_{\ZH'}(h_0)\leq \threshold \qquad \forall \: \ZH' \in \prball{\ZH_0, \delta}\label{eq:prob_nonsticky_constraints}.
    \end{align}
\end{subequations}
\end{problem}

This non-sticky formulation represents a distinct modeling assumption, allowing observation uncertainty to vary across information states induced by the policy. Such variability may arise, for example, from context-dependent sensing pipelines or history-dependent estimation procedures. At the same time, the non-sticky formulation strictly enlarges the adversary’s feasible set relative to the sticky case. Consequently, it yields a worst-case value that lower bounds that of the sticky formulation, and thus provides a conservative robustness guarantee. As we show in subsequent sections, it also admits more tractable solution methods.
\Cref{tab:stickyvsnonsticky} provides an overview of the differences in the problem variants.

\begin{table}[ht]
    \centering
    \caption{Summary Comparison of the Sticky and Non-Sticky Problem Variants}
    \small
    \renewcommand{\arraystretch}{1.3}
    \begin{tabular}{lcc}
        \toprule
        & \multicolumn{2}{c}{\textbf{Problem Variant}} \\
        \cmidrule(lr){2-3}
        & \textbf{Sticky} (\Cref{prob:sticky}) & \textbf{Non-Sticky} (\Cref{prob:non-sticky}) \\
        \midrule
        Notion of Robustness & Static Adversarial Perturbations & Dynamic Adversarial Perturbations \\
        Perturbation Dependencies & $Z$: State, Action & $\ZH$: State, Action, History \\
        Solution Complexity Upper Bound & Exponential [\Cref{thm:PLAComplex2}] & Polynomial [\Cref{thm:IPEComplexLeq,thm:IPEComplexEq}] \\
        Admissible Deviation Relationship & \multicolumn{2}{c}{Non-Sticky is Lower Bound on Sticky [\cref{prop:deltastickybetter}]}  \\
        \bottomrule
    \end{tabular}
    \label{tab:stickyvsnonsticky}
\end{table}

Now, we illustrate the distinction between the sticky and non-sticky formulations with the following example, which highlights how relaxing the stickiness constraint enables the adversary to induce strictly lower worst-case values.

\begin{figure}[ht]
    \begin{subfigure}[t]{0.45\textwidth}
        \centering
        \includegraphics[width=0.5\linewidth]{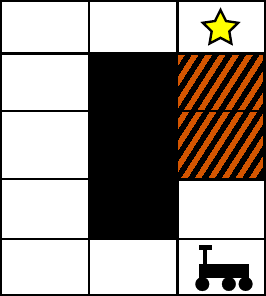}
        \caption{The Rover model environment. The striped region may or may not contain impassable sand.}
        \label{fig:rover_env}
    \end{subfigure}
    \begin{subfigure}[t]{0.49\textwidth}
        \centering
        \includegraphics[width=0.7\linewidth]{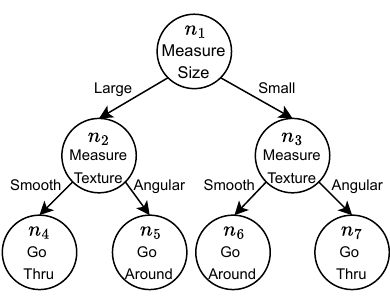}
        \caption{An FSC for the toy rover POMDP problem.}
        \label{fig:roverpg}
    \end{subfigure}
    \caption{Rover environment and FSC}
    \label{fig:rovers}
    \Description{A diagram showing the rover POMDP problem environment and the finite-state controller policy used for this example. The environment consists of two corridors which both lead to a goal location, one of which is longer than the other. The shorter corridor has striping to indicate it may or may not be traversable due to impassable sand. The policy first measures to determine the size of the sand. If it observes large sand, it then measures the texture of the sand. It takes the go through (the sandy region) action if smooth is observed and the go around action if angular is observed. If it observes small sand, it measures the texture of the sand in this case as well. It takes the go around (the sandy region) action if smooth is observed and the go through action if angular is observed.}
\end{figure}

\begin{table}[ht]
    \centering
    \caption{Traversability of the sand corridor as a function of sand properties for the rover POMDP problem.}
    \small
    \renewcommand{\arraystretch}{1.3}
    \begin{tabular}{lcc}
        \toprule
         & Large & Small \\
        \midrule
        Smooth & Yes & No \\
        Angular & No & Yes \\
        \bottomrule
    \end{tabular}
    \label{tab:traversability}
\end{table}

\begin{example}\label{ex:rover}
    Consider a rover which must autonomously navigate the environment shown in \cref{fig:rover_env}. Here, the rover may navigate through a potentially sandy corridor, which may cause the rover to become immobilized, or take a longer route, which does not have this risk, in order to reach a target location. Using the policy shown in \cref{fig:roverpg}, the rover imperfectly observes two properties of the sand, texture and size, in order to determine if the first corridor is likely to be successfully traversed.\footnote{This FSC represents a history dependent policy. We will show later by \cref{thm:history-node} that it is sufficient to analyze nodes rather than histories for such policies.} \Cref{tab:traversability} shows when the corridor is traversable. 
    Sand is traversable when it is small and angular, or when it is large and smooth. Otherwise it is not traversable. Reward of $1$ is assigned for reaching the target and $-0.1$ for taking the left corridor, to encourage efficient traversal when possible. 

    To achieve a worst-case value for this policy, nature would generally maximize the probability of an incorrect observation, e.g., an angular observation in smooth sand. However, in the sticky \rev{variant}, the observation distribution is constrained to be the same at both nodes 2 and 3, i.e. $Z'(o|a,s',n_2)=Z'(o|a,s',n_3)$. Assume there is some probability that both nodes 2 and 3 are reached from node 1 such that both nodes impact the value of the policy. Consider the case where the state is angular, large sand, causing the corridor to not be traversable. Nature then wants to increase the probability of reaching node 4, which has 0 value, from node 2 as much as possible, i.e. $\min_{Z'} V(n_2)=P_{\max}0+(1-P_{\max})1$.
    Nature then assigns more probability to smooth sand at node 2, i.e. $Z'(o_1|a,s',n_2)= Z(\text{smooth}|\text{measure texture},(\text{angular},\text{large}))=P_{\max}$. However, the sticky constraint will force the same probability at node 3, i.e. $Z'(o_1|a,s',n_3) = Z(\text{smooth}|\text{measure texture},(\text{angular},\text{large}))=P_{\max}$. This will increase the probability of reaching node 6, which has value 1, i.e. $V(n_3)=(1-P_{\max})0+P_{\max}1\neq \min_{Z'} V(n_3)$.
    In contrast, in the non-sticky \rev{variant}, nature can optimize both nodes independently such that $\mathrm{argmin}_{Z'} V(n_2) \neq \mathrm{argmin}_{Z'} V(n_3)$. This allows both to have values near 0, reducing the value at the root node. We elaborate on this example in greater detail in \cref{sec:toyrover}.
\end{example}

\subsection{The Non-Sticky \rev{Variant} as a Lower Bound on the Sticky \rev{Variant}}

An immediate observation is that the solution to the non-sticky \rev{variant} serves as a lower bound for the solution to the sticky \rev{variant}, i.e. for any given problem $\delta_{NS} \leq \delta_{S}$. This is a result of nature exploiting not only $s$ and $a$, but also the node $n$ as discussed in \cref{ex:rover}. Here, there is no constraint requiring that, for a given state, all nodes which share an action have the same probability assigned. Without this constraint, a lower worst-case bound may be achieved.

We begin by showing that the worst-case value over $\prball{\ZH_0,\delta}$ is a lower bound on the worst-case value over $\prball{Z_0,\delta}$. \rev{Similar results have been established in the literature \citep{spel_finding_2021,bovy_imprecise_2024,galesloot_pessimistic_2024}. \citet{bovy_imprecise_2024} and \citet{galesloot_pessimistic_2024} in particular address the impact of stickiness in robust POMDPs. Here, we present a brief proof for our specific formulation.}

\begin{restatable}[]{proposition}{deltastickybetter}\label{prop:deltastickybetter}
    The non-sticky admissible observation function deviation $\delta_{NS}$ is a lower bound on its sticky counterpart $\delta_S$, i.e. $\delta_{NS} \leq \delta_{S}$.
\end{restatable}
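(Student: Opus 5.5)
The plan is to show that the sticky ball embeds into the non-sticky ball, so the non-sticky constraint is at least as demanding. The argument is then a comparison of the feasible sets of the two outer maximizations in \cref{prob:sticky} and \cref{prob:non-sticky}.

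\textbf{Embedding.} Fix $\delta \in [0,1]$ and any $Z' \in \prball{Z_0,\delta}$. Define the history-independent lift $\ZH'(o|a,s',h) := Z'(o|a,s')$ for all $h \in H(\pi)$. Since $\ZH_0(o|a,s',h) = Z_0(o|a,s')$, each condition in \cref{eq:pball} for a fixed $h$ is literally the corresponding condition in \cref{eq:pball_sticky}. These conditions are the $\delta$-closeness, the $\epsilon_p$ lower bound on the support, support preservation, and normalization. Hence $\ZH' \in \prball{\ZH_0,\delta}$.

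\textbf{Value identity.} Next I will check that $V^{\pi,d}_{\ZH'}(h_0) = V^{\pi,d}_{Z'}(h_0)$. The history-dependent value recursion \cref{eq:fh_value}/\cref{eq:ih_value}, with $\ZH'$ in place of $Z$, coincides term by term with the recursion under $Z'$, because $\ZH'$ ignores $h$. This covers both the belief updates and the observation weights. For finite $d$ this is an induction on $d$. For $d=\infty$ it follows because both are the unique fixed point of the same contraction, using $\gamma<1$. In the same way, $V^{\pi,d}_{\ZH_0}(h_0) = V^{\pi,d}_{Z_0}(h_0)$, so the nominal values agree. Consequently
\[
\sup_{Z'\in\prball{Z_0,\delta}} \left(V^{\pi,d}_{Z_0}(h_0)-V^{\pi,d}_{Z'}(h_0)\right) \le \sup_{\ZH'\in\prball{\ZH_0,\delta}} \left(V^{\pi,d}_{\ZH_0}(h_0)-V^{\pi,d}_{\ZH'}(h_0)\right).
\]
In words, the worst-case non-sticky value lower bounds the worst-case sticky value.

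\textbf{Conclusion.} Any $\delta$ satisfying \cref{eq:prob_nonsticky_constraints} therefore also satisfies \cref{eq:prob_sticky_constraint}. The feasible set of \cref{prob:non-sticky} is thus contained in that of \cref{prob:sticky}, and taking maxima gives $\delta_{NS} \le \delta_S$. If the maxima are not known to be attained, I would argue the same inequality with suprema; containment of feasible sets suffices either way.

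\textbf{Main obstacle.} The only delicate point is the value identity. The value is defined over reachable histories $H(\pi)$, so I must confirm that the lift does not change which histories are reachable. This holds because support preservation makes reachability identical under $Z_0$, $Z'$ and $\ZH'$. It also ensures that the FSC transition $T_{FSC}$, which is partial on observations of zero nominal probability, stays well defined.
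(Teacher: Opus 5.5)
Your proof is correct and follows the same route as the paper. Both arguments embed each sticky $Z'$ as the history-independent $\ZH'(o|a,s',h)=Z'(o|a,s')$, note that this lift lies in $\prball{\ZH_0,\delta}$, and conclude that the non-sticky feasible set of $\delta$ is contained in the sticky one. Your explicit check that the lift preserves both the perturbed and the nominal values fills in a step the paper leaves implicit; the paper only says the sticky constraint ``effectively considers a subset.'' One small quibble: when $\epsilon_p=0$ the support can shrink, so reachability is not literally identical under $Z_0$ and $Z'$. This does not affect the argument, because all you need is that no new observations appear, and $Z_0=0\implies Z'=0$ guarantees that.
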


\begin{proof}
    Consider that any observation function $Z' \in \prball{Z_0, \delta}$ can be represented with a history-dependent observation function $\ZH'$, i.e. there is a $\ZH'$ such that $\ZH'(o|a,s',h)=Z'(o|a,s')\; \forall h \in H(\pi)$. Formally, the set of such $\ZH'$ is defined

    \begin{equation}
        \mathbf{\ZH_S}\left(Z_0,\delta\right) = \left\{\ZH(o|a,s',h) \middle| \ZH'(o|a,s',h)=Z'(o|a,s')\; \forall o \in O, a \in A, s'\in S, h \in H(\pi), Z' \in \prball{Z_0,\delta}\right\}.
    \end{equation}
    
    Due to the constraint in $\mathbf{\ZH_S}$, this set is a subset of all possible history-dependent observation functions, i.e. $\mathbf{\ZH_S}\left( Z_0,\delta \right) \subseteq \prball{\ZH_0, \delta}$. Then it follows for a given $\delta$, that if the non-sticky value constraint is satisfied and every non-sticky observation function satisfies, then the sticky value constraint is also satisfied as it effectively considers a subset of the non-sticky observation functions. Formally,
    \begin{equation}
        V^{\pi,d}_{\ZH^0}(h_0)- V^{\pi,d}_{\ZH'}(h_0)\leq \threshold \quad \forall \: \ZH' \in \prball{\ZH_0,\delta}
        \implies 
        V^{\pi,d}_{Z^0}(h_0)- V^{\pi,d}_{Z'}(h_0)\leq \threshold \quad \forall \: Z' \in \prball{Z_0,\delta}.
    \end{equation}

    Then every $\delta$ which satisfies \cref{eq:prob_nonsticky_constraints} also satisfies \cref{eq:prob_sticky_constraint}. However, the opposite need not be true.
    This implies that the feasible set of perturbations under the non-sticky constraint, $\boldsymbol{\delta}_{NS}$,
    is a subset of the feasible set of perturbations under the sticky constraint, $\boldsymbol{\delta}_{S}$,
    i.e. $\boldsymbol{\delta}_{NS} \subseteq \boldsymbol{\delta}_{S}$. Then the proposition follows from this set relationship.
\end{proof}

In \cref{sec:overall method}, we introduce a general search architecture to optimize $\delta$. In \cref{sec:solution_sticky}, we show how to solve the exact, sticky inner-optimization problem using parametric Markov Chains, while noting its exponential time complexity upper bound. Finally, in \cref{sec:nonsticky_solution}, we demonstrate how the non-sticky variant can be solved in polynomial time using interval Markov Chains, yielding a highly scalable policy robustness value that also acts as a lower bound on \cref{prob:sticky}.
\section{General Solution Framework: Robust Interval Search} \label{sec:overall method}

In this section, we present a general algorithmic framework, called \emph{Robust Interval Search} (RIS), for solving the policy observation robustness problems introduced in \cref{sec:problem}. The key idea is to separate the optimization over the admissible perturbation radius $\delta$ from the evaluation of the robustness constraint for a fixed perturbation set.

Rather than searching directly over the high-dimensional space of admissible observation models, RIS exploits a monotonicity property of the value constraints with respect to $\delta$. This observation allows us to reduce the outer optimization to a one-dimensional search over $\delta \in [0,1]$. For any fixed $\delta$, we query an inner feasibility evaluator that determines whether the corresponding uncertainty set satisfies the prescribed value threshold. 

This outer-loop structure is shared by both the sticky and non-sticky formulations; the distinction between the two variants lies  in how the feasibility evaluator, and thus the underlying worst-case evaluation, is realized.

\subsection{Monotonicity of Value Constraints}\label{sec:monotonicity}

To enable efficient search over $\delta$, we first establish that the worst-case value is monotonic in the size of the uncertainty set. Intuitively, increasing $\delta$ enlarges the set of admissible observation perturbations available to the adversary, and therefore cannot improve the worst-case policy value.

We formalize this property for both the sticky and non-sticky formulations.

\begin{restatable}[Monotonicity of Value Constraints]{lemma}{Monosticky}
    \label{lm:monotonicity} The constraints in
    \cref{eq:prob_sticky_constraint,eq:prob_nonsticky_constraints} exhibit monotone non-decreasing violation in $\delta$. That is, if the constraints are violated at $\delta_b$, they remain violated for all $\delta_c > \delta_b$; if they are satisfied at $\delta_b$, they remain satisfied for all $\delta_a < \delta_b$.
\end{restatable}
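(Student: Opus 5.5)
The argument is a set-inclusion one: the probability balls are nested in $\delta$, and the worst-case value is the infimum over the ball, so it cannot increase as $\delta$ grows. I would carry it out for the sticky constraint \cref{eq:prob_sticky_constraint} and note that the non-sticky case \cref{eq:prob_nonsticky_constraints} is identical with $Z$ replaced by $\ZH$ and the index set enlarged by $h \in H(\pi)$.

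\textbf{Step 1 (nesting).} Fix $\delta_a < \delta_b$ and show $\prball{Z_0,\delta_a} \subseteq \prball{Z_0,\delta_b}$. Take any $Z'$ in the smaller ball. It satisfies $|Z_0(o|a,s') - Z'(o|a,s')| \leq \delta_a < \delta_b$. The other three defining conditions of \cref{eq:pball_sticky} are the $\epsilon_p$ lower bound on the nominal support, the zero outside the support, and normalization. None of them involves $\delta$, so they carry over unchanged and $Z'$ lies in the larger ball. The same check applied to \cref{eq:pball} gives $\prball{\ZH_0,\delta_a} \subseteq \prball{\ZH_0,\delta_b}$.

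\textbf{Step 2 (rewrite the constraint).} The nominal value $V^{\pi,d}_{Z_0}(h_0)$ does not depend on $\delta$. The universally quantified constraint is therefore equivalent to
\[
V^{\pi,d}_{Z_0}(h_0) - \inf_{Z' \in \prball{Z_0,\delta}} V^{\pi,d}_{Z'}(h_0) \leq \threshold .
\]
Define $g(\delta) = \inf_{Z'\in\prball{Z_0,\delta}} V^{\pi,d}_{Z'}(h_0)$. Taking an infimum over a larger set can only lower it, so Step 1 gives $g(\delta_b) \leq g(\delta_a)$ whenever $\delta_a<\delta_b$. Hence the left-hand side of the constraint is non-decreasing in $\delta$.

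\textbf{Step 3 (both directions).} The two claims follow directly from Step 1, without needing the infimum to be attained.
\begin{itemize}
\item \emph{Violation persists upward.} If the constraint is violated at $\delta_b$, some witness $Z'\in\prball{Z_0,\delta_b}$ has degradation greater than $\threshold$. For $\delta_c>\delta_b$ the same $Z'$ lies in $\prball{Z_0,\delta_c}$, so the constraint is still violated.
\item \emph{Satisfaction persists downward.} If the constraint holds for every element of $\prball{Z_0,\delta_b}$, it holds on the subset $\prball{Z_0,\delta_a}$ for every $\delta_a<\delta_b$.
\end{itemize}

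\textbf{Main subtlety.} The step needing care is the set of admissible $\delta$ values, which may be empty. If $\epsilon_p$ is large relative to the support size, normalization can conflict with the lower bound, and the ball is then empty for every $\delta$. An empty ball is harmless here: the constraint is vacuously satisfied, and vacuous satisfaction is consistent with the monotone statement.

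A second point is that the value $V^{\pi,d}_{Z'}$ must be well defined for every admissible $Z'$. Two facts make it so. First, zero-probability observations stay zero in every admissible $Z'$, and the $\epsilon_p$ bound keeps the nominal support unchanged, so the partial memory update $T_{FSC}$ remains defined on every observation that can occur. Second, in the infinite-horizon case $\gamma<1$, so each value is finite.
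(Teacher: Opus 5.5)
Your proposal is correct and follows essentially the same argument as the paper. Both rest on the nesting $\prball{Z_0,\delta_a}\subseteq\prball{Z_0,\delta_b}$ for $\delta_a<\delta_b$, carry a violating witness up to larger $\delta$, restrict universal satisfaction down to the smaller ball, and treat the non-sticky case identically. Your infimum reformulation and the remarks on empty balls and well-definedness are not needed for the lemma. One small correction there: with $\epsilon_p=0$ the support can shrink rather than stay unchanged, which is harmless because no new observations are introduced.
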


\begin{proof}
    We consider the sticky variant. The same argument follows for the non-sticky variant. We first consider $\delta_b < \delta_c$. If \cref{eq:prob_sticky_constraint} is violated for $\delta_b$, by definition $\exists Z' \in \prball{Z_0,\delta_b}$ such that  $V^{\pi,d}_{Z_0}(h_0)- V^{\pi,d}_{Z'}(h_0) > \threshold$. For $\delta_b<\delta_c$, $\prball{Z_0,\delta_b} \subseteq \prball{Z_0,\delta_c}$, i.e. every observation model in the set defined by $\delta_b$ is also in the set defined by $\delta_c$. Then, it follows that  $Z' \in  \prball{Z_0,\delta_c}$ and $V^{\pi,d}_{Z_0}(h_0)- V^{\pi,d}_{Z'}(h_0) > \threshold$.

    A similar argument follows for $\delta_a < \delta_b$. If \cref{eq:prob_sticky_constraint} is satisfied for $\delta_b$, by definition $\forall Z' \in \prball{Z_0,\delta_b}$, $V^{\pi,d}_{Z_0}(h_0)- V^{\pi,d}_{Z'}(h_0) \leq \threshold$. For $\delta_a<\delta_b$, $\prball{Z_0,\delta_a} \subseteq \prball{Z_0,\delta_b}$, i.e. every observation model in the set defined by $\delta_a$ is also in the set defined by $\delta_b$. Then, it follows that  $\forall Z' \in \prball{Z_0,\delta_a}$, $V^{\pi,d}_{Z_0}(h_0)- V^{\pi,d}_{Z'}(h_0) \leq \threshold$.
\end{proof}

Given this monotonicity in the value constraints, the maximal admissible perturbation radius can be computed via interval search rather than exhaustive exploration of the perturbation space.

\subsection{Robust Interval Search}

Algorithm~\ref{alg:overall} summarizes the complete RIS framework. It 
first evaluates the nominal policy value under $Z_0$, then defines a 
feasibility evaluator $f(\delta)$ over candidate perturbation radii, and 
finally returns the largest $\delta$ satisfying the robustness constraint 
via a modified bisection search (described in 
\cref{sec:bisection}). All model-specific reasoning is encapsulated in 
\texttt{feasibility\_evaluation}; the outer search is entirely agnostic 
to whether the sticky or non-sticky formulation is in use. 

\begin{algorithm}[ht]
    \caption{Robust Interval Search (RIS)}
    \label{alg:overall}
    \begin{algorithmic}[1]
        \STATE $V^d_0 \gets \texttt{evaluate}(\mathcal{M}, \pi, 0)$
        \STATE $\delta_a \gets 0$
        \STATE $\delta_b \gets 1$
        \STATE $f(\delta) \gets \texttt{feasibility\_evaluation}(\mathcal{M}, \pi, V^d_0, \threshold, \delta)$
        \RETURN \texttt{\modifiedBisectionShort{}}$(f,\delta_a,\delta_b,\epsilon_{\modifiedBisectionShort{}})$
    \end{algorithmic}
\end{algorithm}

\subsubsection{Outer Optimization via Modified Bisection Search}\label{sec:bisection}

\begin{algorithm}[ht]
    \caption{Modified Bisection Search (\modifiedBisectionShort{})}
    \label{alg:modifiedbisection}
    \hspace{-7.9cm} \texttt{\modifiedBisectionShort{}}($f,\delta_a,\delta_b,\epsilon_{\modifiedBisectionShort{}}$)
    \begin{algorithmic}[1]\label{alg:MBS}
        \STATE $f_a \gets f(\delta_a)$, $f_b \gets f(\delta_b)$
        
        \IF{$sign(f_a)=sign(f_b)$}
            \RETURN $\delta_b$
        \ENDIF
        \WHILE{$\delta_b-\delta_a>\epsilon_{\modifiedBisectionShort{}}$}
            \STATE $\delta_c \gets \frac{\delta_a+\delta_b}{2}$, $f_c \gets f(\delta_c)$
            
            \IF{$f_c=0$}
             \color{brown}\STATE {$\delta_a \gets \delta_c$, $f_a \gets f_c$}
                \COMMENT Assign as lower bound instead of terminating
            \ELSIF{$sign(f_c)=sign(f_a)$}
                \STATE $\delta_a \gets \delta_c$, $f_a \gets f_c$
                
            \ELSE
                \STATE $\delta_b \gets \delta_c$, $f_b \gets f_c$
                
            \ENDIF
        \ENDWHILE
        \RETURN $\delta_a$
    \end{algorithmic}
\end{algorithm}

To find $\delta^*$, we evaluate the feasibility evaluator $f(\delta)$
at candidate radii and apply a 
modified bisection procedure (Algorithm~\ref{alg:modifiedbisection}). Broadly, $f(\delta)$ indicates whether the constraints of \cref{eq:prob_sticky_constraint,eq:prob_nonsticky_constraints} are satisfied. We provide more detail on the specific forms of these functions below.
At $\delta = 0$ the uncertainty set is a singleton containing only $Z_0$ 
itself, so the worst-case value equals the nominal value and the 
constraint is satisfied with zero margin. At $\delta = 1$ we evaluate 
the feasibility evaluator at the widest possible uncertainty set. If the 
constraint holds at both endpoints, any $\delta \in [0,1]$ is admissible 
and we return $\delta = 1$. Otherwise, we perform bisection over the 
interval to locate $\delta^*$ within tolerance $\epsilon_\text{MBS}$. Line 8 in \cref{alg:MBS} highlights the key modification relative to standard bisection: when 
$f(\delta_c) = 0$, rather than terminating, we assign $\delta_c$ as the 
new lower bound. This ensures we return the \emph{largest} admissible 
radius rather than merely any root of $f$. This is formalized in the following propositions:

\begin{restatable}[Soundness of MBS]{proposition}{bisectsound}
\label{lm:bisect_sound}
    Given a function $f$ which is well-defined on $\delta \in [0,1]$, an initial lower bound $\delta_a=0$ such that $f(\delta_a) \leq 0$, and upper bound $0 \leq \delta_b \leq 1$, the solution returned by \modifiedBisection{} (\modifiedBisectionShort{}), $\delta_a$, will give $f(\delta_a) \leq 0$.
\end{restatable}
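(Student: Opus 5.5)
The plan is to prove this by a loop invariant on Algorithm~\ref{alg:MBS}: at every point during execution, the stored lower bound satisfies $f(\delta_a) \leq 0$ and the stored value satisfies $f_a = f(\delta_a)$. If this holds throughout, then in particular it holds for the value returned after the loop, which is exactly the claim. The one place where the returned value is not the loop's $\delta_a$ is the early exit, and I will treat that case separately.

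\emph{Base case.} By hypothesis $\delta_a = 0$ and $f(\delta_a) \leq 0$. Line 1 sets $f_a = f(\delta_a)$, so the invariant holds before the first iteration.

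\emph{Maintenance.} Consider one pass of the while loop with midpoint $\delta_c$. There are three branches.
\begin{itemize}
\item If $f_c = 0$, the update is $\delta_a \gets \delta_c$ and $f_a \gets f_c$. The new lower bound has $f(\delta_a) = 0 \leq 0$.
\item If $\mathrm{sign}(f_c) = \mathrm{sign}(f_a)$ with $f_c \neq 0$, then $f_a$ cannot be $0$, since otherwise the signs would differ. So $f_a < 0$, hence $f_c < 0$, and the new $\delta_a = \delta_c$ satisfies the invariant.
\item Otherwise only $\delta_b$ changes, and $\delta_a, f_a$ are untouched.
\end{itemize}
The loop terminates because $\delta_b - \delta_a$ halves each iteration, and it then returns a $\delta_a$ satisfying the invariant.

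\emph{Early exit.} This is the main obstacle: Line 3 returns $\delta_b$, not $\delta_a$. It is taken only when $\mathrm{sign}(f_a) = \mathrm{sign}(f_b)$. Since $f(0) \leq 0$, there are two subcases.
\begin{itemize}
\item If $f(0) < 0$, then equal signs force $f(\delta_b) < 0$, so the returned value is feasible.
\item If $f(0) = 0$, then $f_b = 0$ as well under the usual convention $\mathrm{sign}(0) = 0$, so again $f(\text{returned}) \leq 0$.
\end{itemize}
I would state the $\mathrm{sign}(0) = 0$ convention explicitly. I would also note that in RIS the feasibility evaluator is arranged so that $f(0) \leq 0$ corresponds to the nominal model satisfying the constraint with zero margin. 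The early-exit branch therefore corresponds to the constraint holding at both endpoints, consistent with the discussion preceding the proposition.

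Combining the early-exit case with the invariant at loop termination gives $f(\delta_a) \leq 0$ for the value returned. No monotonicity (Lemma~\ref{lm:monotonicity}) is needed for soundness; it is needed only for optimality.
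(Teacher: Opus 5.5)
Your proof is correct and follows the paper's argument: the paper likewise notes that $\delta_a$ is only ever overwritten by a $\delta_c$ with $f(\delta_c)=0$ or with the same sign as $f(\delta_a)$, and it handles the early-exit branch separately. Your version is slightly more careful. In the early-exit branch the paper argues only $f(\delta_a)<0 \implies f(\delta_b)<0$ and leaves the $f(0)=0$ subcase implicit, which your explicit $\mathrm{sign}(0)=0$ convention covers.
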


\begin{restatable}[Convergence of MBS]{proposition}{bisectconverge}
\label{lm:bisectconverge}
    Given a function $f$ which is monotone non-decreasing in $\delta$, the solution to the optimization problem
    \begin{equation}\label{eq:bisectminmax}
    \begin{aligned}
        \delta^* = \max_{\delta \in [0,1]}&\delta\\
        \textnormal{s.t. }& f(\delta) \leq 0\\
    \end{aligned}
    \end{equation}
    an initial lower bound $\delta_a=0$ such that $f(\delta_a) \leq 0$, and an initial upper bound $\delta_b = 1$, \modifiedBisection{} (\modifiedBisectionShort{}) is guaranteed to converge to $\delta^* - \delta < \epsilon_{\modifiedBisectionShort{}}$ for any $\epsilon_{\modifiedBisectionShort{}} > 0$.
\end{restatable}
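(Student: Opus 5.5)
The plan is to prove the convergence of \modifiedBisectionShort{} (\cref{lm:bisectconverge}) by maintaining a loop invariant on the bracket $[\delta_a,\delta_b]$ and then bounding the number of iterations. We handle separately the early-return branch (line 2 of \cref{alg:MBS}) and the main loop. Throughout, monotonicity of $f$ is what makes the feasible set an interval; for the feasibility evaluators used in RIS it is inherited from \cref{lm:monotonicity}.

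First, since $f$ is monotone non-decreasing and $f(0)\le 0$, the feasible set $\{\delta\in[0,1] : f(\delta)\le 0\}$ is an interval containing $0$, with supremum $\delta^*$. When $f$ is not right-continuous, the maximum may be only a supremum; we interpret $\delta^*$ as that supremum.

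Second, we treat the early return. If $\mathrm{sign}(f(0))=\mathrm{sign}(f(1))$, we want $f(1)\le 0$, so that $\delta^*=1$ and the returned $\delta_b=1$ gives error $0$. The subtle case is $f(0)=0$ together with $f(1)>0$. Here we must read the sign convention so that zero is grouped with the nonpositive values, consistent with line 8 treating $f_c=0$ as a lower bound. Under that reading, the early return happens exactly when $f(1)\le 0$. We will state this convention explicitly.

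Third, the main loop satisfies the invariant
\[
f(\delta_a)\le 0 < f(\delta_b)
\quad\text{and}\quad
\delta_a\le\delta^*\le\delta_b .
\]
It holds initially by the hypotheses together with the negation of the early-return test. It is preserved because each branch replaces one endpoint by the midpoint $\delta_c$ according to the sign of $f(\delta_c)$:
\begin{itemize}
\item if $f(\delta_c)\le 0$, the update $\delta_a\gets\delta_c$ keeps $f(\delta_a)\le 0$, and monotonicity gives $\delta_c\le\delta^*$;
\item if $f(\delta_c)>0$, the update $\delta_b\gets\delta_c$ keeps $f(\delta_b)>0$, and monotonicity gives $\delta^*\le\delta_c$.
\end{itemize}
The main obstacle is again the sign bookkeeping. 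The \texttt{elsif} compares $\mathrm{sign}(f_c)$ with $\mathrm{sign}(f_a)$, and $f_a$ may equal $0$ (at the start, or after line 8). So we must check that a strictly negative $f_c$ still moves $\delta_a$ rather than $\delta_b$. We will argue this under the same nonpositive-versus-positive sign convention, which makes the test equivalent to ``$f(\delta_c)\le 0$''.

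Finally, each iteration halves $\delta_b-\delta_a$, so after $k$ iterations the width is $2^{-k}$. The loop therefore terminates after at most $\lceil\log_2(1/\epsilon_{\modifiedBisectionShort{}})\rceil$ iterations with $\delta_b-\delta_a\le\epsilon_{\modifiedBisectionShort{}}$. The invariant then gives $0\le\delta^*-\delta_a\le\delta_b-\delta_a\le\epsilon_{\modifiedBisectionShort{}}$. The returned value satisfies $f(\delta_a)\le0$, which is consistent with \cref{lm:bisect_sound}. If the strict inequality in the statement is required, we can either note the non-strict bound or run one extra halving.
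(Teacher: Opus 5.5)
Your proposal is correct and follows essentially the paper's route. You split on the early-return test, where monotonicity forces $f(1)\le 0$ and hence $\delta^*=1$; otherwise you keep $f(\delta_a)\le 0$ and $\delta_a\le\delta^*\le\delta_b$ while the bracket halves, which you make more explicit than the paper does. The one difference is your two-valued sign convention, which is not needed: with the standard $\mathrm{sign}(0)=0$, monotonicity ($\delta_c>\delta_a \Rightarrow f_c\ge f_a$) rules out $f_c<0$ when $f_a=0$, and the only extra case, $f(0)<0=f(1)$, is the paper's separate sub-case in which only $\delta_a$ moves and $\delta^*=\delta_b=1$.
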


These results follow from the construction of \modifiedBisectionShort{} and the structure of $f$. Proofs are provided in \cref{sec:bisect_sound,sec:bisect_converge}. 

The remainder of the methodology instantiates this framework for the two robustness notions introduced earlier. In \cref{sec:solution_sticky}, we present a feasibility evaluator for the sticky formulation, where the adversary must apply a history-independent observation perturbation. In \cref{sec:nonsticky_solution}, we then present the corresponding feasibility evaluation for the non-sticky formulation, where perturbations may vary across policy histories. Both inner evaluations slot 
directly into the outer bisection loop described here.
\section{Exact Evaluation for the General Sticky Problem Variant} \label{sec:solution_sticky}

Having established the general framework for optimizing the admissible deviation, $\delta$, we now solve the sticky variant of the inner optimization.
While the MBS outer optimization efficiently searches for $\delta$, an inner feasibility evaluation is necessary to \rev{determine if all sticky observation models in the probability ball with radius $\delta$} satisfy the value threshold constraint, i.e., $V^{\pi,d}_{Z_0}(h_0)- V^{\pi,d}_{Z'}(h_0)\leq \threshold \; \forall \: Z' \in \prball{Z_0, \delta}$.

\subsection{Inner Optimization: Parametric Markov Chain Evaluation} \label{sec:bilevels}

In order to evaluate \rev{\cref{eq:prob_sticky}}, we use a product Markov chain to represent an FSC operating on a POMDP. However, since in this case observation distributions are only a function of action and state, probabilities must be shared between product Markov chain nodes. \rev{To address this, we leverage parametric Markov chains (pMCs) \citep{daws2004symbolic}, which have been applied in prior work to parameterize finite-state POMDP policies \citep{junges_finite-state_nodate}. Here, we use them to parameterize the observation function.}

We represent observation probabilities parametrically with the set of parameters denoted $p=\{p_1,...,p_n\} \in P \subseteq [0,1]^n$.
We denote the support of Z as $\text{supp}(Z(a,s')) = \{o |Z(o|a,s')>0\}$ and the size of the support as $k = \lvert\text{supp}(Z_0(a,s'))\rvert$. Let $o_i$ be the $i^{th}$ observation in the support and $F: P \rightarrow \{ \sum_{p_i\in p} a_i p_i + c \mid a_i,c \in \mathbb{R}, p \in P\}$ be a function mapping a set of parameters to the polynomials of degree 1 in $P$.
Define the parametric observation function $Z_P: A\times S \times O \rightarrow F(P)$ as follows
\begin{equation}
    Z_P(o_i|a,s') =
    \begin{aligned}
        \begin{cases}
            p_{j+i} &\text{if } o_i <k \\
            1- \sum_{l=1}^{k-1}p_{j+l} &\text{if } o_i = k\\
            0 &\text{o.w.}
        \end{cases},
    \end{aligned}
\end{equation}
where $j = j(a, s')$ is an offset indexing the parameters associated with each $(a, s')$ pair.

\begin{definition}[Parametric Markov Chain]
    Given a POMDP model $\mathcal{M}$, an FSC $\pi$, an induced parametric Markov chain is defined $\MC_{\parameterized} = (q_I,Q_{\parameterized}, P, T_{\parameterized}, R_{\parameterized}, \gamma_{\parameterized})$, where 
    \begin{itemize}
        \item $q_I = (s_I, n_0)$ is the initial Markov chain state, where $s_I$ is an augmented initial state used to capture the initial belief distribution in the pMC transition function;
        \item $Q_{\parameterized} = (S \times N ) \cup \{q_I\}$
        is the cross product of the POMDP states and controller nodes;
        \item $P$ is the space of possible parameter values which determine observation probabilities and therefore Markov chain transition probabilities;
        \item The transition function $T_{\parameterized}: Q_{\parameterized} \times Q_{\parameterized} \rightarrow F(P)$ 
        is defined as 
             \begin{align*}
            &T_{\parameterized}((s', n') | (s, n))= 
            \begin{cases}
                T(s' | s, \alpha(n)) \cdot Z_P(\mathcal{O}(n,n')|\alpha(n),s') &\text{if } s \neq s_I \\
                b_0(s') &\text{if } s=s_I, n' = n_0\\
                0 &\text{o.w.}
            \end{cases};
        \end{align*}
        \item The reward function $R_{\parameterized}: (S \times N) \rightarrow \mathbb{R}$ is defined as $R_{\parameterized}(s, n) = R(s, \alpha(n))$;
        \item  The discount factor $\gamma_{\parameterized}$.
    \end{itemize}
\end{definition}

Let $P(\delta)$, the region generated by a given $\delta$, be defined

\begin{equation}
    \begin{aligned}
         P(Z,\delta) &= [p^{\downarrow}(Z,\delta),p^{\uparrow}(Z,\delta)] \text{ where }\\
         & p^{\downarrow}(Z,\delta) = \{\max(Z(o|a,s')-\delta,\epsilon_p) \; \forall s \in S, a \in A, o \in \{o_1,...,o_{k-1}\}\},\\ 
         &p^{\uparrow}(Z,\delta) = \{\min(Z(o|a,s')+\delta,1-\epsilon_p) \; \forall s \in S, a \in A, o \in \{o_1,...,o_{k-1}\} \}.\\
    \end{aligned}
\end{equation}

A pMC can be used to evaluate an FSC on a POMDP. This is formalized in the following statement. 

\begin{restatable}[]{lemma}{pMCequiv}
    \label{prop:pMCequiv}
    Let $\mathcal{M}$ be a POMDP with a finite-state controller policy $\pi$. Let $\MC_{\parameterized}$ be the induced parametric Markov chain constructed with $\mathcal{M},\pi$ and instantiated with $\epsilon_p=0$ and $p=p^{\uparrow}(Z,0)=p^{\downarrow}(Z,0)$ 
    such that $Z_P=Z$. For any horizon $d$, we have that
    \begin{equation}
        V^{\pi,d}_{Z}(h_0) = V^{\MC_{\parameterized},d}_{p}(q_I).
    \end{equation} 
\end{restatable}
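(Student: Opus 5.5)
The plan is to reduce the statement to the known equivalence between an FSC executed on a POMDP and its induced product Markov chain (\cref{def:MC}, \cref{eq:FSCVal,eq:FSCValInit}). We then show that the instantiated pMC coincides with that product chain transition by transition.

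\textbf{Step 1: instantiation.} With $\epsilon_p=0$ and $p=p^{\uparrow}(Z,0)=p^{\downarrow}(Z,0)$, each parameter equals the corresponding nominal probability: $p_{j(a,s')+i}=Z(o_i|a,s')$ for $i<k$. The last support element is then fixed by normalization,
\[
1-\sum_{l=1}^{k-1}p_{j+l}=Z(o_k|a,s'),
\]
and off-support observations receive $0$. Hence $Z_P(o|a,s')=Z(o|a,s')$ for all $(o,a,s')$. Interpreting $Z_P(\mathcal{O}(n,n')|\alpha(n),s')$ as $\sum_{o\in\mathcal{O}(n,n')}Z_P(o|\alpha(n),s')$, we obtain $T_{\parameterized}=T_\pi$ entrywise, including the augmented initial transitions $b_0(s')$. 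Since $R_{\parameterized}=R_\pi$ and $\gamma_{\parameterized}=\gamma$, the instantiated pMC is exactly $\MC=\mathcal{M}\times\pi$.

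\textbf{Step 2: finite horizon $d$, by induction.} We prove that $V^{\pi,d}_Z(h)=\sum_s b_h(s)V^{\pi,d}(s,n_h)$ for every reachable history $h$, where $n_h$ is the FSC node reached by $h$. The base case $d=0$ is trivial.

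For the inductive step, expand \cref{eq:fh_value}. Because the FSC action depends only on the node, $\pi(h)=\alpha(n_h)$. Next, regroup the sum over $o$ into sums over successor nodes $n'$ via the sets $\mathcal{O}(n_h,n')$, using $T_{FSC}(n_h,o)=n'$.

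The key identity is that $b_h(s)\,T(s'|s,a)\,Z(o|a,s')$, summed over $s$, equals $\Pr(o|h)\,b_{ho}(s')$. This follows from the belief update with its normalizer. Substituting it lets the inductive hypothesis at $ho$ be applied, yielding \cref{eq:FSCVal} averaged over $b_h$.

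Observations with zero probability contribute nothing, which is consistent with $T_{FSC}$ being only a partial function. Applying the result at $h_0$ together with \cref{eq:FSCValInit} gives the value from $q_I$, once we account for the extra initial step through $s_I$ (with zero reward and no discount at that step, as in the paper's convention).

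\textbf{Step 3: infinite horizon.} For $\gamma<1$, both value functions are limits of their $d$-step values: each is the unique fixed point of a contraction, and finite-horizon iterates converge to it. Passing to the limit in Step 2 gives the claim.

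\textbf{Main obstacle.} The hardest part is the bookkeeping in Step 2: converting belief-space history recursion into state-node recursion. This requires handling the belief normalization constant carefully, and it relies on the fact that distinct observations leading to the same node can be merged because the future value depends on the history only through $(b,n)$. I would first prove the stronger pointwise identity above and then specialize it to $h_0$.
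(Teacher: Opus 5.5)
Your proof is correct, but it takes a different route from the paper. The paper's sketch argues at the level of trajectories: every state-augmented history generated by $\pi$ on $\mathcal{M}$ is generated by the instantiated pMC with the same probability, and vice versa. Hence the expected discounted rewards coincide, with the details deferred to the parameter-synthesis literature.

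You instead work directly with the recursive definitions. First you show that the instantiation at $\delta=0$, $\epsilon_p=0$ collapses the pMC to the product chain $\mathcal{M}\times\pi$. Then you prove by induction on $d$ the stronger pointwise statement $V^{\pi,d}_Z(h)=\sum_s b_h(s)V^{\pi,d}(s,n_h)$, using the Bayes identity $\sum_s b_h(s)T(s'|s,a)Z(o|a,s')=\Pr(o|h)\,b_{ho}(s')$. Finally you pass to the limit via contraction for $\gamma<1$.

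Your route checks the claim against the belief-based recursion that actually defines $V^{\pi,d}_Z$ in the paper. The path argument leaves the link between path expectations and that recursion implicit. Your route also brings out two conventions the paper glosses over:
\begin{itemize}
\item $T_{FSC}$ must be defined on every observation that has positive probability from a reachable $(s,n)$ pair.
\item The value at $q_I$ is meant as $\sum_s b_0(s)V^{\pi,d}(s,n_0)$, with no extra discounted step.
\end{itemize}
The paper's route is shorter and treats finite and infinite horizons uniformly. The cost is that it rests on ``by construction'' and a citation.
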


\begin{proof}[Proof Sketch]
    Given the same initial state, by construction any state-augmented history reached in the POMDP $\mathcal{M}$ with the FSC $\pi$ can be reached in $\MC_\parameterized$ constructed with $\mathcal{M},\pi$ and equipped with $p$ and \textit{vice versa}. Likewise, given the same initial state, the probability of reaching these state-augmented histories is the same. Then, the discounted reward assigned to these histories will be the same in both models. This equivalence of models follows in a similar manner to existing proofs for parametric Markov chains \cite{junges_finite-state_nodate}. 
\end{proof}

We now show that the feasibility constraint over the parametric Markov chain with region $P(Z,\delta)$ is equivalent to the feasibility constraint in \cref{eq:prob_sticky}.

\begin{restatable}[]{theorem}{plainner}
\label{thm:pla_inner}
    Given a POMDP $\mathcal{M}$, FSC policy $\pi$, horizon $d$, and $\delta$, construct $\MC_{\parameterized}$. Then, 
    \begin{equation}
        V^{\MC_{\parameterized},d}_{p^0}(q_I)- V^{\MC_{\parameterized},d}_{p'}(q_I)\leq \threshold \quad \forall \: p' \in P(Z_0,\delta) 
        \iff
        V^{\pi,d}_{Z_0}(h_0)- V^{\pi,d}_{Z'}(h_0)\leq \threshold \quad \forall \: Z' \in \prball{Z_0, \delta}
    \end{equation}
\end{restatable}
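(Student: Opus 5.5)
The plan is to prove the equivalence by exhibiting a value-preserving correspondence between the parameter region $P(Z_0,\delta)$ and the probability ball $\prball{Z_0,\delta}$, and then invoking \cref{prop:pMCequiv} pointwise. Concretely, I will define a map $\Phi$ that sends a parameter vector $p'$ to the observation function $Z_{p'}$. For each pair $(a,s')$, it sets $Z_{p'}(o_i|a,s') = p'_{j(a,s')+i}$ for $i<k$ and $Z_{p'}(o_k|a,s') = 1-\sum_{l<k} p'_{j+l}$, and it assigns zero outside the support. The inverse map $\Phi^{-1}$ simply reads off the first $k-1$ supported probabilities of $Z'$. By construction $Z_P$ instantiated at $p'$ equals $\Phi(p')$. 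The argument of \cref{prop:pMCequiv}, which matches reachable state-augmented histories and their probabilities, does not depend on $p'$ being the nominal point. I will therefore apply it at arbitrary $p'$ to obtain $V^{\MC_{\parameterized},d}_{p'}(q_I) = V^{\pi,d}_{\Phi(p')}(h_0)$. In particular, at the nominal point, $V^{\MC_{\parameterized},d}_{p^0}(q_I) = V^{\pi,d}_{Z_0}(h_0)$.

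With this identity in hand, the two constraint families coincide term by term provided $\Phi$ restricted to $P(Z_0,\delta)$ is a bijection onto $\prball{Z_0,\delta}$. Both directions of the $\iff$ then follow by quantifying over corresponding elements.
\begin{itemize}
\item For ($\Leftarrow$): given $p'\in P(Z_0,\delta)$, I check that $\Phi(p')\in\prball{Z_0,\delta}$ and apply the right-hand constraint.
\item For ($\Rightarrow$): given $Z'$ in the ball, I check that $\Phi^{-1}(Z')\in P(Z_0,\delta)$ and apply the left-hand constraint.
\end{itemize}

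The membership checks are routine for the first $k-1$ coordinates. The bounds $\max(Z_0-\delta,\epsilon_p)\le p'_i\le \min(Z_0+\delta,1-\epsilon_p)$ are exactly $|Z_0-Z'|\le\delta$ and $Z'\ge\epsilon_p$ on the support. The zero-pattern and normalization conditions are built into $Z_P$.

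The main obstacle is the $k$-th, eliminated observation $o_k$. The box $P(Z_0,\delta)$ constrains only $p_1,\dots,p_{k-1}$. The ball, in contrast, also requires $|Z_0(o_k)-Z'(o_k)|\le\delta$ and $Z'(o_k)\ge\epsilon_p$, and these are not implied by the box: with $k-1$ coordinates each moved by $\delta$, the residual can move by up to $(k-1)\delta$ or become negative. So for $k\ge 3$ the box as literally written is larger than the ball, and only the direction ($\Leftarrow$ of the ball constraint implying the box constraint) could fail.

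I would handle this in one of two ways. The first option is to read $P(Z_0,\delta)$ as the box intersected with the linear constraints $\max(Z_0(o_k)-\delta,\epsilon_p)\le 1-\sum_l p_{j+l}\le \min(Z_0(o_k)+\delta,1-\epsilon_p)$. This is the polytope region that parameter lifting handles, and with it $\Phi$ is exactly a bijection. The second option is to argue directly that a minimizer over the box lies in the ball, but this is not true in general. I would therefore adopt the refined region definition explicitly, and note that for $k\le 2$ the box alone already suffices, since $Z'(o_2)=1-Z'(o_1)$ inherits the bounds.

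The final step is to confirm that the product-chain value identity holds for every admissible $p'$, including $\epsilon_p>0$ and the infinite-horizon case. This holds because each instantiated $T_{\parameterized}$ is a valid stochastic kernel, and because $\gamma<1$ gives a unique fixed point in \cref{eq:FSCVal}.
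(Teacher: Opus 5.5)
Your skeleton is the paper's: \cref{prop:pMCequiv} identifies pMC and POMDP values at each instantiation, and then $P(Z_0,\delta)$ and $\prball{Z_0,\delta}$ must be shown to describe the same observation functions. You differ on that second step, and there you are right and the paper's proof is not. The paper infers set equality from the claim that $p^{\downarrow}(Z,\delta)$ and $p^{\uparrow}(Z,\delta)$ ``induce'' $Z^{\downarrow}$ and $Z^{\uparrow}$. It remarks only that the $1-\epsilon_p$ cap protects $o_k$. That holds for $k\le 2$ but not for $k\ge 3$, where the box leaves $1-\sum_{l<k}p_{j+l}$ free to move by up to $(k-1)\delta$ or to go negative.

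So the ($\Leftarrow$) direction is false as literally stated, not merely unproved. Take one $(a,s')$ with $Z_0=(\tfrac13,\tfrac13,\tfrac13)$, $\epsilon_p=0$, $0<\delta\le\tfrac16$, and an FSC in which only $o_3=o_k$ leads to a rewarding node, so the value is increasing and affine in $Z'(o_3)$. The ball's worst case is $Z'(o_3)=\tfrac13-\delta$. The box admits $p_1=p_2=\tfrac13+\delta$, i.e.\ the valid distribution with $Z'(o_3)=\tfrac13-2\delta$. Any $\threshold$ strictly between the two resulting degradations makes the right-hand side true and the left-hand side false. Your ($\Rightarrow$) direction survives with the literal box, since $\Phi^{-1}$ maps the ball into it. Consequently RIS-S run on the box is still sound but possibly conservative.

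Your repair adds $\max(Z_0(o_k|a,s')-\delta,\epsilon_p)\le 1-\sum_{l<k}p_{j+l}\le \min(Z_0(o_k|a,s')+\delta,1-\epsilon_p)$ to the region. This makes $\Phi$ a bijection onto the ball and every instantiation a genuine Markov chain, and your pointwise argument is then complete. What the paper's box buys in exchange is rectangularity, and it is exact whenever $k\le 2$. That appears to be the case in all of the paper's RIS-S benchmarks.

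Two small corrections. First, parameter lifting works on rectangular regions, so your polytope is the right object for the theorem but not something PLA handles directly; it would have to be covered by boxes. Second, the cap $1-\epsilon_p$ is not literally part of $|Z_0-Z'|\le\delta$ and $Z'\ge\epsilon_p$. It follows from normalization plus the $\epsilon_p$ floor on the other supported observations, and it is redundant in your refined region.
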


\begin{proof}[Proof]
    By \cref{prop:pMCequiv}, any valid pMC realized on the interval $P(Z,\delta)$ is equivalent in value to the composition of a POMDP $\mathcal{M}$ and an FSC $\pi$. 
    Then if $P(Z_0,\delta)$ and $\prball{Z_0, \delta}$ describe the same set of possible probability distributions for transitioning from FSC node $n$ to node $n'$, for any $p'$ satisfying $V^{\MC_{\parameterized},d}_{p^0}(q_I)- V^{\MC_{\parameterized},d}_{p'}(q_I)\leq \threshold$, there exists a $Z'$ such that $V^{\pi,d}_{Z_0}(h_0)- V^{\pi,d}_{Z'}(h_0)\leq \threshold$ and vice versa.
    
    If the bounds on $P(Z_0,\delta)$ and $\prball{Z_0, \delta}$ for any $o,a,s'$ are the same it follows that their sets are the same. The bounds on $\mathbf{Z}=\prball{Z_0,\delta}$, the probability ball with radius delta around $Z_0$, for any given $o,a,s'$ are defined
    \begin{equation}
        Z^{\downarrow}(o|a,s')=\max(Z_0(o | a, s') - \delta, \epsilon_p)
    \end{equation}
    and 
    \begin{equation}
        Z^{\uparrow}(o|a,s')=
        \begin{aligned}
            \begin{cases}
                \min(Z_0(o | a, s') + \delta, 1) &\text{if } Z_0(o | a, s') > 0\\
                0 &\text{o.w.}
            \end{cases} .
        \end{aligned}
    \end{equation} 

    $Z^{\downarrow}, Z^{\uparrow}$ are bounds on all possible valid observation probability functions in $\mathbf{Z}$, although they themselves are not valid distributions. By construction $p^{\downarrow}(Z,\delta)$ and $p^{\uparrow}(Z,\delta)$ induce $Z^{\downarrow}$ and $Z^{\uparrow}$. We note here that the $1-\epsilon_p$ upper bound for $p^{\uparrow}(Z,\delta)$ is necessary to ensure that $o_k$ is observed with at least $\epsilon_p$ probability, i.e. that $Z^{\downarrow}$ is respected. This implies that $Z^{\uparrow}=1$ cannot be attained unless $\epsilon_p=0$. Thus, both problems have the same bounds on transitioning to new node $n'$ from node $n$ as a function of an observation $o$.
\end{proof}

Having established this equivalence, \rev{\cref{eq:prob_sticky}} can be written as
\begin{subequations}\label{eq:mcprob_sticky}
    \begin{align}
        \delta_S = \max_{\delta\in[0,1]} & \: \delta \label{eq:mcproba_sticky_obj}\\
        \text{s.t. } &  V^{\MC_{\parameterized},d}_{p^0}(q_I)-V^{\MC_{\parameterized},d}_{p'}(q_I) \leq \threshold \quad \forall p' \in P(Z_0,\delta) \label{eq:mcproba_sticky_con}
    \end{align}
\end{subequations}

where $V^{\MC_{\parameterized},d}_{p}$ is the value of pMC $\MC_{\parameterized}$ with parameter instantiation $p$ at the initial node $q_I$.

As shown in \cref{sec:monotonicity}, the feasibility constraint is monotonic as the uncertainty ball grows.
This monotonicity allows us to use an efficient modified bisection method to find a solution to the outer \cref{eq:mcprob_sticky} while guaranteeing convergence and soundness. We now turn our attention to the algorithm and its guarantees.

\subsection{\algoNameSticky{}}\label{sec:AlgorithmSticky}

In this section, we instantiate the RIS framework for the sticky formulation by specifying the corresponding feasibility evaluation. The resulting algorithm, \algoNameSticky{} (\shortAlgoNameSticky{}) is shown in \cref{alg:overall_sticky}.

At each iteration, the algorithm evaluates whether the robustness constraint is satisfied for a candidate $\delta$ by solving the corresponding inner optimization problem. When this inner evaluation (cf. \cref{eq:mcproba_sticky_con}) is performed exactly, \algoNameSticky{} returns a value of $\delta$ within $\epsilon_{\modifiedBisectionShort{}}$ of the optimal admissible deviation $\delta^*$ defined in \cref{eq:mcprob_sticky}.

The feasibility evaluation constructs a parametric Markov chain $\MC_{\parameterized}$ 
from the POMDP--FSC pair and evaluates the nominal policy value 
$V^{\MC_{\parameterized},d}_{p^0}$. For a candidate radius $\delta$, 
the feasibility evaluator
\begin{equation}\label{eq:f_p}
    f_{p}(\delta) = \begin{cases}
        -1 &\text{ if } V^{\MC_{\parameterized},d}_{p^0}(q_I)-V^{\MC_{\parameterized},d}_{p'}(q_I) \leq \threshold \quad p' \in P(Z_0,\delta) \\
         1 &\text{ if } V^{\MC_{\parameterized},d}_{p^0}(q_I)-V^{\MC_{\parameterized},d}_{p'}(q_I) > \threshold \quad p' \in P(Z_0,\delta) 
    \end{cases}
\end{equation}
returns $-1$ when the constraint is satisfied and 1 when it is violated.
This function, together with the search interval $[0,1]$ and tolerance $\epsilon_{\modifiedBisectionShort{}}$, 
is passed to \modifiedBisectionShort{} (\cref{alg:MBS}).

\begin{algorithm}
    \caption{\algoNameSticky{}}
    \label{alg:overall_sticky}
    \begin{algorithmic}[1]
        \STATE  $\MC_{\parameterized} \gets $\texttt{make\_pMC} $(\mathcal{M,\pi})$
        \STATE $V^{\MC_{\parameterized},d}_{p^0} \gets $\texttt{evaluate} (\texttt{make\_pMC}$(\MC_{\parameterized},0) )$ \COMMENT{Using the parametric extension of \cref{eq:FSCVal}}
        \STATE $\delta_a \gets 0$
        \STATE $\delta_b \gets 1$
        \STATE $f_{p}(\delta) \gets \texttt{feasibility}(V^{\MC_{\parameterized},d}_{p^0},\threshold,\delta)$ \COMMENT{\cref{eq:f_p}}
        \RETURN \texttt{\modifiedBisectionShort{}}($f_p,\delta_a,\delta_b,\epsilon_{\modifiedBisectionShort{}}$) \COMMENT{\cref{alg:MBS}}
    \end{algorithmic}
\end{algorithm}

\subsection{Analysis of \shortAlgoNameSticky{}}
We now address the soundness and convergence of  \shortAlgoNameSticky{}. We show \shortAlgoNameSticky{} converges to within $\epsilon_{\modifiedBisectionShort{}}$ of $\delta$ and does so in at most exponential time. We first provide a lemma on the monotonicity of $f_p$, which underpins the convergence of \shortAlgoNameSticky{}.

\begin{lemma}\label{prop:fp}
    The sticky feasibility evaluator $f_p$, \cref{eq:f_p}, is monotone non-decreasing in $\delta$.
\end{lemma}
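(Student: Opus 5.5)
The argument mirrors the proof of \cref{lm:monotonicity}, carried over to the parametric region $P(Z_0,\delta)$. Since $f_p$ takes only the values $-1$ and $1$, monotone non-decreasing means one thing: if $f_p(\delta_b)=1$ then $f_p(\delta_c)=1$ for every $\delta_c>\delta_b$. Equivalently, if $f_p(\delta_b)=-1$ then $f_p(\delta_a)=-1$ for every $\delta_a<\delta_b$.

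\textbf{Step 1: the regions are nested.} We first show that $\delta_a\le\delta_b$ implies $P(Z_0,\delta_a)\subseteq P(Z_0,\delta_b)$. This is a componentwise check.
\begin{itemize}
\item Each lower bound $\max(Z_0(o|a,s')-\delta,\epsilon_p)$ is non-increasing in $\delta$.
\item Each upper bound $\min(Z_0(o|a,s')+\delta,1-\epsilon_p)$ is non-decreasing in $\delta$.
\end{itemize}
So every box coordinate interval $[p^{\downarrow},p^{\uparrow}]$ can only widen as $\delta$ grows, and the region $P(Z_0,\delta)$ is the product of these intervals.

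\textbf{Step 2: the nominal term is independent of $\delta$.} The quantity $V^{\MC_{\parameterized},d}_{p^0}(q_I)$ is computed once from $p^0$. It does not change with $\delta$, so $\threshold$ is compared against a degradation whose nominal part is fixed.

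\textbf{Step 3: monotonicity of $f_p$.} Suppose $f_p(\delta_b)=1$. Then there is some $p'\in P(Z_0,\delta_b)$ with $V^{\MC_{\parameterized},d}_{p^0}(q_I)-V^{\MC_{\parameterized},d}_{p'}(q_I)>\threshold$. By Step 1, $p'\in P(Z_0,\delta_c)$ for any $\delta_c>\delta_b$, so $f_p(\delta_c)=1$. The contrapositive gives the satisfied case.

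As an alternative, one could combine \cref{thm:pla_inner} with \cref{lm:monotonicity} directly. The theorem shows that the sign of $f_p(\delta)$ matches the satisfaction of \cref{eq:prob_sticky_constraint}, and the lemma already gives monotonicity of that constraint. I would mention this as a remark.

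\textbf{Main obstacle.} The difficulty is interpretive rather than technical. In \cref{eq:f_p} the quantifier over $p'$ is implicit. It must be read as "for all $p'$" in the $-1$ case and "for some $p'$" in the $1$ case, which makes the two cases exhaustive and mutually exclusive. I would state this reading explicitly.

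A second point is that the region $P(Z_0,\delta)$ is a box over the first $k-1$ parameters. Points of this box can give an invalid $k$-th probability: $1-\sum p$ may fall below $\epsilon_p$ or fall outside $[Z_0-\delta,Z_0+\delta]$. I would restrict to the valid instantiations, meaning the box intersected with the fixed polytope cut out by the $k$-th coordinate's bounds. The $k$-th coordinate's bounds also relax monotonically in $\delta$, so this intersection is still nested in $\delta$, and the argument goes through unchanged.
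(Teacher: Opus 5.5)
Your proof is correct, but your main route differs from the paper's. The paper gets the lemma by transfer: it combines the equivalence in \cref{thm:pla_inner} (so $f_p(\delta)=-1$ exactly when \cref{eq:prob_sticky_constraint} holds at $\delta$) with the monotonicity of the original constraint from \cref{lm:monotonicity}. That is precisely the argument you relegate to a remark.

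You instead re-run the nesting argument directly on the parameter region, checking that each coordinate interval $[\max(Z_0-\delta,\epsilon_p),\min(Z_0+\delta,1-\epsilon_p)]$ widens as $\delta$ grows.

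The paper's route is shorter. It also ties the sign of $f_p$ to the original sticky constraint, which the subsequent corollary needs anyway.

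Your route is self-contained and somewhat more robust, because it uses only set inclusion. It therefore holds whether $P(Z_0,\delta)$ is read as the raw box or as the box intersected with the (also $\delta$-monotone) bounds on the $k$-th coordinate. It also does not depend on the box-versus-ball identification in the proof of \cref{thm:pla_inner}. That identification matches only coordinate bounds, yet, as you point out, for supports of size $k\ge 3$ the box can contain instantiations with an invalid $k$-th probability.

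One minor wording point: the polytope cut out by the $k$-th coordinate is not fixed but depends on $\delta$. Your next sentence already handles this correctly.
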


This lemma follows from the equivalence of \cref{eq:mcproba_sticky_con} and \cref{eq:prob_sticky_constraint} (\cref{thm:pla_inner}), the monotonicity of \cref{eq:prob_sticky_constraint} in $\delta$ (\cref{lm:monotonicity}), and the construction of \cref{eq:f_p}. We now provide the key result on the convergence and soundness of \shortAlgoNameSticky{} with exact evaluation.

\begin{restatable}[]{theorem}{AlgoThmSticky}
    \label{thm:AlgoThmSticky}
    Let $\delta^*$ be the unique optimal solution to the optimization problem in \cref{eq:mcprob_sticky}. Given an exact \rev{evaluation} of \cref{eq:mcproba_sticky_con}, \algoNameSticky{} (\Cref{alg:overall_sticky}) returns a solution $\delta$ such that $V^{\MC_{\parameterized},d}_{p^0}(q_I)-V^{\MC_{\parameterized},d}_{p'}(q_I) \leq \threshold$ and $|\delta^*-\delta| \leq \epsilon_{\modifiedBisectionShort{}}$.
\end{restatable}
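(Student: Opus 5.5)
The proof combines \cref{prop:fp} with the two MBS propositions. The only work is checking that their hypotheses hold for $f_p$ on $[0,1]$, and then translating the returned $\delta$ back into the constraint of \cref{eq:mcprob_sticky}.

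\textbf{Step 1: check the hypotheses of \cref{lm:bisect_sound,lm:bisectconverge}.}
Since \cref{eq:mcproba_sticky_con} is evaluated exactly, $f_p$ in \cref{eq:f_p} is well-defined on $[0,1]$. It takes value $-1$ exactly on the feasible set of \cref{eq:mcprob_sticky} and $+1$ elsewhere.

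At $\delta_a=0$, the region $P(Z_0,0)$ contains only $p^0$, so the value gap is $0\le\threshold$. We assume $\threshold\ge 0$, as is implicit in the problem. Hence $f_p(0)=-1\le 0$.

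By \cref{prop:fp}, $f_p$ is monotone non-decreasing. Therefore the feasible set $\{\delta: f_p(\delta)\le 0\}$ is an interval containing $0$, and $\delta^*$ is its supremum. Uniqueness of $\delta^*$ is given in the statement.

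One subtlety: the feasible interval may be open at its right end, so the maximum might not be attained. I would treat $\delta^*$ as this supremum, which is all that \cref{lm:bisectconverge} really uses.

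\textbf{Step 2: the early-return branch of \cref{alg:MBS}.}
If $\mathrm{sign}(f_p(0))=\mathrm{sign}(f_p(1))$, then $f_p(1)=-1$. Monotonicity then makes every $\delta\in[0,1]$ feasible, so $\delta^*=1$. The returned value is $\delta_b=1$, which is feasible and has error $0$.

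\textbf{Step 3: the bisection branch.}
Otherwise $f_p(1)=1$. Soundness follows from \cref{lm:bisect_sound}: the loop keeps the invariant $f_p(\delta_a)\le 0$. This holds because $\delta_a$ is only replaced by some $\delta_c$ with $f_p(\delta_c)=0$ or with the same sign as $f_a$. So the returned $\delta_a$ satisfies $f_p(\delta_a)=-1$. By the definition of $f_p$, this means $V^{\MC_{\parameterized},d}_{p^0}(q_I)-V^{\MC_{\parameterized},d}_{p'}(q_I)\le\threshold$ for all $p'\in P(Z_0,\delta)$.

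Accuracy follows from \cref{lm:bisectconverge}. By monotonicity, the loop also keeps $f_p(\delta_b)>0$. Hence $\delta_a\le\delta^*\le\delta_b$ throughout, and at termination $\delta_b-\delta_a\le\epsilon_{\modifiedBisectionShort{}}$. This gives $|\delta^*-\delta|\le\epsilon_{\modifiedBisectionShort{}}$. Termination takes $\lceil\log_2(1/\epsilon_{\modifiedBisectionShort{}})\rceil$ iterations, since the interval halves each time.

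\textbf{Step 4: transfer back to the POMDP.}
If desired, \cref{thm:pla_inner} converts the pMC guarantee into the POMDP constraint \cref{eq:prob_sticky_constraint}.

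\textbf{Main obstacle.}
The main point needing care is the bracketing invariant $\delta_a\le\delta^*\le\delta_b$, in particular the relation between $\delta^*$ and the boundary of the feasible set. Because $f_p$ is a $\pm1$ step function, the argument relies only on monotonicity and not on continuity, so I would state this invariant explicitly rather than lean on a generic root-finding argument.
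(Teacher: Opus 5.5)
Your proposal is correct and follows the same route as the paper's proof. Both establish that $f_p$ is well-defined, monotone non-decreasing by \cref{prop:fp}, and satisfies $f_p(0)\le 0$, and then invoke \cref{lm:bisect_sound,lm:bisectconverge}; your explicit bracketing invariant $\delta_a\le\delta^*\le\delta_b$ and your note that $f_p(0)\le 0$ requires $\threshold\ge 0$ only spell out what the paper leaves inside those cited propositions.
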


\begin{proof}\label{proof:AlgoThm_sticky}
    By \cref{prop:fp}, $f_p$ as defined in \cref{eq:f_p} is \rev{monotone non-decreasing} in $\delta$. \rev{It is well-defined, returns an exact evaluation of \cref{eq:mcproba_sticky_con} by assumption, and, since $Z_0$ trivially satisfies \cref{eq:mcproba_sticky_con}, also has $f_{p}(0) \leq 0$.} Then, by \cref{lm:bisect_sound} and \cref{lm:bisectconverge} modified bisection search on $f$ will converge to $\delta$ within $\epsilon_{MBS}$ of $\delta^*$ which ensures $V^{\MC_{\parameterized},d}_{p^0}(q_I)-V^{\MC_{\parameterized},d}_{p'}(q_I) \leq \threshold$.
\end{proof}

These guarantees extend to \cref{prob:sticky} as a result of \cref{thm:pla_inner}.

\begin{restatable}[]{corollary}{SolveOriginalNS}
     Let $\delta^*$ be the unique optimal solution to the optimization problem in \cref{eq:prob_sticky}. Given an exact \rev{evaluation of} \cref{eq:mcproba_sticky_con}, \algoNameSticky{} (\Cref{alg:overall_sticky}) returns a solution $\delta$ such that $V^{\pi,d}_{Z_0}(h_0)-V^{\pi,d}_{Z'}(h_0) \leq \threshold$ and $|\delta^*-\delta| \leq \epsilon_{\modifiedBisectionShort{}}$.
\end{restatable}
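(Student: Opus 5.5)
The corollary should follow by transporting the guarantees of \cref{thm:AlgoThmSticky} from the pMC formulation \cref{eq:mcprob_sticky} back to the original \cref{prob:sticky}, using the equivalence in \cref{thm:pla_inner}. The plan has three steps.

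\emph{Step 1: the two problems have the same feasible set of radii.} Fix any $\delta\in[0,1]$. \Cref{thm:pla_inner} states that the pMC constraint \cref{eq:mcproba_sticky_con} holds at $\delta$ if and only if the original constraint \cref{eq:prob_sticky_constraint} holds at $\delta$. Define
\[
\boldsymbol{\delta}_S=\{\delta\in[0,1] : \text{\cref{eq:prob_sticky_constraint} holds}\},
\]
and let $\boldsymbol{\delta}_P$ be the analogous set for \cref{eq:mcproba_sticky_con}. Then $\boldsymbol{\delta}_S=\boldsymbol{\delta}_P$. Both problems maximize the same objective $\delta$ over the same set, so their optimal values agree. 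In particular, the $\delta^*$ in this corollary equals the $\delta^*$ in \cref{thm:AlgoThmSticky}.

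\emph{Step 2: invoke \cref{thm:AlgoThmSticky}.} Under exact evaluation of \cref{eq:mcproba_sticky_con}, \algoNameSticky{} returns a $\delta$ satisfying two properties:
\begin{itemize}
\item $\delta\in\boldsymbol{\delta}_P$, i.e.\ the pMC constraint holds for all $p'\in P(Z_0,\delta)$;
\item $|\delta^*-\delta|\le\epsilon_{\modifiedBisectionShort{}}$.
\end{itemize}
By Step 1, the closeness bound transfers directly to the optimum of \cref{eq:prob_sticky}.

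\emph{Step 3: transfer the constraint.} Applying the forward direction of \cref{thm:pla_inner} at the returned $\delta$ gives
\[
V^{\pi,d}_{Z_0}(h_0)-V^{\pi,d}_{Z'}(h_0)\le\threshold \quad \text{for all } Z'\in\prball{Z_0,\delta},
\]
which is the claimed soundness statement.

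The only delicate point is Step 1, specifically making sure the two $\delta^*$ are the same object. This requires applying the biconditional in \cref{thm:pla_inner} pointwise in $\delta$, with the same $\epsilon_p$ convention in both $P(Z_0,\delta)$ and $\prball{Z_0,\delta}$. It also requires that the maximum be attained, so that a unique $\delta^*$ exists. Attainment follows from \cref{lm:monotonicity}, which makes the feasible set an interval containing $0$, together with the corollary's assumption that the optimum exists and is unique, as in \cref{thm:AlgoThmSticky}.
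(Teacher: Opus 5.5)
Your proposal is correct and follows the paper's route: the paper obtains this corollary by transporting \cref{thm:AlgoThmSticky} to \cref{prob:sticky} through the pointwise-in-$\delta$ equivalence of \cref{thm:pla_inner}, which is what your Steps 1--3 spell out. One small imprecision: \cref{lm:monotonicity} only makes the feasible set an interval containing $0$ (possibly half-open), so attainment of $\delta^*$ comes entirely from the stated hypothesis, not from monotonicity.
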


Thus, \shortAlgoNameSticky{} with exact feasibility evaluation solves the sticky variant of \problemName{} problem with soundness and convergence guarantees. Next, we provide an analysis of the time complexity of \shortAlgoNameSticky{}.

\begin{restatable}[]{proposition}{PLAComplex2}
\label{thm:PLAComplex2}
    \rev{Unless P=NP, \algoNameSticky{} (\Cref{alg:overall_sticky}) is not polynomial and is at most exponential in the number of parameters in the parametric Markov chain.}  
\end{restatable}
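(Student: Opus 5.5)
The proof splits into two parts: an upper bound showing \shortAlgoNameSticky{} runs in at most exponential time, and a hardness argument showing it cannot be polynomial unless P=NP.

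\textbf{Upper bound.} The outer loop is \modifiedBisectionShort{} (\cref{alg:MBS}). It makes at most $\lceil \log_2(1/\epsilon_{\modifiedBisectionShort{}}) \rceil + 2$ calls to $f_p$, which is independent of the problem size. The total cost is therefore governed by one evaluation of \cref{eq:mcproba_sticky_con}, i.e.\ computing $\min_{p'\in P(Z_0,\delta)} V^{\MC_{\parameterized},d}_{p'}(q_I)$ over a rectangular region.

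For this inner problem I would argue as follows. The value of a pMC whose transitions are affine in the parameters is a rational function that is multilinear in each parameter separately. Fixing all other parameters, the value as a function of one $p_i$ is a ratio of affine functions (via the Sherman--Morrison structure of $(I-\gamma T_p)^{-1}$ when $p_i$ enters a single row pattern). Such a function is monotone on an interval, so its minimum is attained at an endpoint. Applying this coordinatewise, a minimizer exists among the $2^n$ vertices of the box $P(Z_0,\delta)$.

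At each vertex the chain is an ordinary Markov chain, whose value is computable in polynomial time by a linear solve (infinite horizon) or backward recursion (finite horizon $d$). Enumerating all vertices gives total time $O(2^n \cdot \mathrm{poly}(|S||N|))$ per call, which is exponential in $n$. Alternatively, I could cite parameter lifting \citep{junges_parameter_2024}, whose refinement worst case is exponential in the number of parameters.

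\textbf{Lower bound.} I would reduce from a known NP-hard problem: deciding whether some parameter instantiation in a rectangular region of a pMC drives a reachability or expected reward value below a threshold. This is NP-hard (indeed ETR-related), as in the literature on robust and parametric Markov chains \citep{junges_parameter_2024}; equivalently, evaluating a memoryless policy under a sticky uncertain POMDP is NP-hard.

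The reduction encodes such a pMC instance as a POMDP--FSC pair whose induced $\MC_{\parameterized}$ reproduces it:
\begin{itemize}
\item parameters become observation probabilities $Z(o\mid a,s')$;
\item the shared-parameter (sticky) coupling arises from distinct FSC nodes sharing the same $(a,s')$;
\item the interval widths are set through $Z_0$ and $\delta$.
\end{itemize}
A polynomial \shortAlgoNameSticky{} with exact evaluation would then decide the feasibility of a single $\delta$ in polynomial time. Since $f_p$ is exactly that decision, P=NP would follow.

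The main obstacle is this reduction. Arbitrary pMC regions must be embedded into boxes of the form $[Z_0-\delta, Z_0+\delta]$ with a common radius $\delta$ and affine simplex coupling. I would first handle this by rescaling, encoding each desired interval through auxiliary states that compose a fixed transition probability with the observation parameter.
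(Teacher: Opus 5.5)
There is a genuine gap in your upper bound. Your lower bound follows essentially the paper's route, which transfers the NP-hardness of pMC feasibility and verification from the parametric Markov chain literature. Your vertex-enumeration argument, however, fails in exactly the sticky setting. Linear-fractionality, and hence monotonicity, in each parameter separately requires $p_i$ to enter $\MC_{\parameterized}$ as a rank-one perturbation, as your own Sherman--Morrison caveat indicates. A sticky parameter encoding $Z(o\mid a,s')$ instead appears in the row of every product state $(s,n)$ with $\alpha(n)=a$ and $T(s'\mid s,a)>0$. The perturbation is then in general no longer rank one, and the value is a higher-degree rational function of $p_i$ that can have interior minima.

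Here is a concrete counterexample. Take one static hidden state and a measuring action with $Z_0(o_1\mid a,s)=0.6$. Use an FSC that measures twice and reaches a node with reward $-1$ exactly when the two observations disagree, and reward $0$ otherwise. The parameter $p=Z_{\parameterized}(o_1\mid a,s)$ is shared by all three measuring nodes. With $\gamma=1$ and a horizon long enough to reach that node, the value is $-2p(1-p)$. For $\delta=0.2$ the box is $[0.4,0.8]$. The vertices give $-0.48$ (equal to the nominal value) and $-0.32$, but the minimum $-0.5$ is attained at $p=1/2$. With $\threshold=0.01$, vertex enumeration reports $f_p(0.2)=-1$ although the constraint is violated. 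The vertex property you use is the interval-MC fact that justifies IPE in the non-sticky variant, where each node-dependent distribution occupies its own row of the two-step chain. Sharing parameters across nodes is precisely what destroys it.

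Your fallback to parameter lifting does not close the gap. Lifting replaces shared parameters by independent copies, which yields only a conservative bound. Region refinement then gives no exact decision procedure with a finite worst-case bound when the worst-case value is at or near the threshold. The paper instead takes the upper bound from Hutschenreiter et al. and Baier et al., who reduce pMC feasibility to the existential theory of the reals (giving PSPACE membership), applied to the negated constraint. Concretely, you can express the value as a rational function of the parameters and decide whether a violating instantiation exists in the box with a singly exponential ETR procedure. Replace the vertex argument with this. Your count of bisection calls is fine as is.
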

\begin{proof}[Proof Sketch]
    \rev{The feasibility problem for a  parametric Markov chain with a single Probabilistic Computation Tree Logic with expectation and comparison (PCTL+EC) operator is NP-hard \cite{hutschenreiter_parametric_2017,BAIER2020104504}. Since the verification problem, i.e. checking if all parameter valuations in a region satisfy, is logically equivalent to the feasibility problem with a negated operator, i.e. checking if any instance fails to satisfy,
    the verification problem is also NP-hard.}
    \rev{For the upper bound, \citet{hutschenreiter_parametric_2017,BAIER2020104504} establish a PSPACE upper bound on the general feasibility problem with PCTL+EC via a reduction to an ETR problem. This {implies an upper bound which has exponential complexity in the number of parameters}. Here again, this verification problem is logically equivalent to a feasibility problem with a negated operator, implying that the verification problem is also in PSPACE and has an upper bound which has exponential complexity in the number of parameters.}
    Bisection search is known to be logarithmic in the inverse of the tolerance $\frac{1}{\epsilon_{\modifiedBisectionShort{}}}$ and the interval width \citep{burden38216numerical}. Then, since a logarithmic number of inner optimization calls is required, \shortAlgoNameSticky{} will \rev{not be polynomial in the number of parameters and will have an upper bound which is} exponential in the number of parameters in the pMC.
\end{proof}

\rev{Having established the soundness, convergence, and exponential complexity of \shortAlgoNameSticky{}, we now turn our attention to solving the non-sticky variant of the \problemName{} problem.}
\section{Exact Evaluation for the Non-Sticky Problem Variant} \label{sec:nonsticky_solution}

In this section, we present the RIS feasibility oracle for the non-sticky formulation of \cref{prob:non-sticky}. In contrast to the sticky case, the non-sticky setting allows observation perturbations to depend on the policy history, which leads to a more expressive but tractable inner optimization problem. Our key insight is that this additional flexibility enables the inner optimization of \cref{eq:prob_nonsticky_constraints} to be reformulated as a structured problem over interval-valued transitions. Specifically, we show that the feasibility evaluation can be carried out by a worst-case evaluation of a Markov chain representation with intervals defined over FSC node-dependent observation distributions. This reformulation yields a tractable procedure for evaluating the feasibility oracle at a fixed $\delta$. Combined with the monotonicity of the feasibility condition in $\delta$, this allows RIS to efficiently compute the admissible robustness radius for the non-sticky variant.

We next present the construction of this interval-based model and show how the inner optimization can be solved efficiently.

\subsection{Bi-Level Optimization for the Non-Sticky \rev{Variant}} \label{sec:bilevelns}

To find $\delta_{NS}$, we reformulate \cref{eq:prob_nonsticky} as a bi-level optimization problem. \rev{As we will show in \cref{sec:NSAnalysis}, this approach has computational advantages, enabling a tractable feasibility check via dynamic programming. We first observe that if the worst-case $\ZH'$ (that is the $\ZH'$ which results in the lowest value) in  $\mathbf{\ZH}=\prball{\ZH_0,\delta}$ satisfies the value constraint $\threshold$, then all other $\ZH' \in \mathbf{\ZH}$ also satisfy $\threshold$.}

Then, determining whether the set of $\mathbf{\ZH}$ satisfies the value constraint $\threshold$ can be simplified to ensuring that the worst-case $\ZH'$ in $\mathbf{\ZH}$ satisfies the value constraint. \Cref{eq:prob_nonsticky} can then be written as
\begin{subequations}\label{eq:nested2}
    \begin{align}
        &\begin{aligned}\label{eq:nested2pt1}
           \delta_{NS} = \max_{\delta\in[0,1]} & \: \delta\\
            \text{s.t. } & V^{\pi,d}_{\ZH_0}(h_0)- V^{\pi,d}_{\ZH'}(h_0)\leq \threshold \\ 
        \end{aligned}\\
        & \qquad \quad \begin{aligned} \label{eq:inner}
            & \qquad V^{\pi,d}_{\ZH'}(h_0) = \min_{\ZH \in \prball{\ZH_0, \delta}} \: V^{\pi,d}_{\ZH}(h_0),\\
        \end{aligned}
    \end{align}
\end{subequations}

Here, the inner optimization problem finds the value-minimizing $\ZH$ for a given $\delta$, while the outer finds the $\delta$ such that $\threshold$ is not violated. We now describe our solution to the inner problem, \cref{eq:inner}.

\subsection{Inner Optimization: Interval Markov Chain Policy Evaluation}

Assessing the worst-case value under some $\ZH$ is, in the general case, intractable due to the curse of history \citep{pineau2003point}. That is, the number of reachable histories grows exponentially with the horizon of the problem and is infinite in the infinite horizon case. This then necessitates a means of representing reachable histories in a tractable manner. To address this need, we consider perturbations on FSC node-dependent observation functions. We show that this representation is equivalent to \cref{eq:nested2} and present a solution to this node-dependent problem.

Consider an FSC node-dependent observation function $\Z$:
\begin{equation}
    \Z: A \times S \times N \times O \rightarrow [0,1].
\end{equation}
We begin by formulating the FSC node-dependent robustness problem as  
\begin{subequations}\label{eq:nested2node}
    \begin{align}
        &\begin{aligned}\label{eq:nested2pt1node}
            \delta_{NS} = \max_{\delta\in[0,1]} & \: \delta\\
            \text{s.t. } & V^{\pi,d}_{\Z_0}(h_0)- V^{\pi,d}_{\Z'}(h_0)\leq \threshold \\ 
        \end{aligned}\\
        & \qquad \quad \begin{aligned} \label{eq:innernode} 
            & V^{\pi,d}_{\Z'}(h_0) = \min_{D^d_{\Z}} \: V^{\pi,d}_{\Z}(h_0),\\
        \end{aligned}
    \end{align}
\end{subequations}
where $D^d_{\Z}$ is the set of observation function(s) being minimized over.

In the finite-horizon case which may have non-stationary value, i.e. value which depends on horizon $d$,
\begin{equation}\label{eq:fhZdomain}
    D^d_{\Z} = \left\{\Z^d,  ...\ , \Z^1\right\}
\end{equation}
where each $\Z^i$ lies in the probability ball $\prball{\Z_0, \delta}$ and $V_{\Z_0}^{\pi,d}(h)$ and $V_{\Z'}^{\pi,d}(h)$ are as in \cref{eq:fh_value} except that they operate on a unique $\Z$ at each $d$ instead of $Z$. 

In the infinite-horizon case which has a stationary value, i.e. a value which does not depend on the horizon $d$,
\begin{equation}\label{eq:ihZdomain}
    D^{\infty}_{\Z} = \left\{\Z \right\}
\end{equation}
where $\Z$ lies in the probability ball $\prball{\Z_0, \delta}$ and $V_{\Z_0}^{\pi,\infty}(h)$ and $V_{\Z'}^{\pi,\infty}(h)$ are as in \cref{eq:ih_value} except that they operate on $\Z$.

\begin{restatable}[]{theorem}{equivalence}
    \label{thm:history-node}
    For a history-dependent policy represented as an FSC $\pi$, the history-dependent observation function problem, \cref{eq:nested2}, and the node-dependent observation function problem, \cref{eq:nested2node}, achieve the same $\delta$.
\end{restatable}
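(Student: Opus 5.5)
Both problems share the same outer maximization over $\delta$, and by \cref{lm:monotonicity} each feasible set of $\delta$ is an interval starting at $0$. It therefore suffices to show that, for every fixed $\delta$, the inner worst-case values coincide:
\begin{equation*}
\min_{\ZH \in \prball{\ZH_0,\delta}} V^{\pi,d}_{\ZH}(h_0) \;=\; \min_{D^d_{\Z}} V^{\pi,d}_{\Z}(h_0).
\end{equation*}
We also need the nominal values to agree. This is immediate: $\ZH_0$ and $\Z_0$ are both the sticky $Z_0$ lifted to histories or nodes, and each equals $V^{\pi,d}_{Z_0}(h_0)$ by \cref{eq:FSCValInit}. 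Given equal worst-case values for every $\delta$, the constraint sets in \cref{eq:nested2} and \cref{eq:nested2node} are identical, so the maxima agree.

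\emph{Direction ``$\le$''.} Every node-dependent choice embeds into the history-dependent ball. Given $\Z$ (or the sequence $\Z^d,\dots,\Z^1$ in the finite-horizon case), define
\begin{equation*}
\ZH'(o|a,s',h_t) = \Z^{\tau-t}\bigl(o|a,s',n(h_t)\bigr),
\end{equation*}
where $n(h_t)$ is the FSC node reached by $h_t$. This node is well defined because $T_{FSC}$ is deterministic, and it is the node at which the action $\pi(h_t)=\alpha(n(h_t))$ is taken. The ball constraints in \cref{eq:pball} hold pointwise because they hold for $\Z^{\tau-t}$. Unrolling \cref{eq:fh_value} shows the two values coincide, so the history-dependent minimum is no larger.

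\emph{Direction ``$\ge$''.} This is the main step. I would argue by backward induction on the remaining horizon $d$, through the product Markov chain of \cref{def:MC}. The claim is that the worst-case value-to-go from history $h$ depends only on the pair (state distribution, node) $(b_h, n(h))$. The linear structure of \cref{eq:FSCVal} reduces this to showing that
\begin{equation*}
W^{d}(s,n) = \min \; \mathbb{E}[\text{value-to-go} \mid s, n, d]
\end{equation*}
is determined by $(s,n,d)$ alone.

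The inductive step has two parts. First, the future after $(s',n')$ is independent of how that pair was reached, so $W^{d-1}$ depends only on $(s',n')$. Second, the choice of $\ZH'(\cdot|a,s',h)$ at time $t$ affects only the transition from $(s,n)$ to $(s',n')$ through the aggregated sums $\sum_{o\in\mathcal{O}(n,n')}$. Because the adversary is unconstrained across histories, it can choose, separately for each $(a,s')$ at each history, the $\ZH'(\cdot|a,s',h)$ in the ball that minimizes $\sum_{n'} \Pr(n') W^{d-1}(s',n')$. This per-$(s',n)$ minimizer depends only on $(a,s',n,d)$, so it can be realized as $\Z^d(\cdot|a,s',n)$. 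Hence the history-dependent optimum is attained by a node-dependent (time-indexed) function.

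The obstacle I expect is that the adversary's choice at history $h$ is made per $(a,s',h)$, while the belief $b_h$ mixes states. I would resolve this by noting that the minimization decouples over $s'$: the objective is a sum over $s'$ with nonnegative weights $b_h(s)T(s'|s,a)$, and the constraint set is a product over $s'$. So a pointwise minimizer for each $s'$ is optimal regardless of $b_h$.

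For the infinite-horizon case, I would pass to the limit. The finite-horizon worst-case values are the iterates of a robust Bellman operator on $S\times N$, which is a $\gamma$-contraction for $\gamma<1$ and is of the interval-MDP type of \citet{givan_bounded-parameter_2000}. The contraction gives a fixed point, and its greedy minimizer is a single stationary $\Z$, matching \cref{eq:ihZdomain}. Truncating histories at a finite depth loses at most $\gamma^d(R_{max}-R_{min})/(1-\gamma)$ in the history-dependent problem, which closes the gap.
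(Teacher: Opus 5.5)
Your proposal is correct and follows essentially the paper's argument. You reduce to equality of the inner worst-case values, run a backward induction on the horizon showing that the robust Bellman value at $(s,h)$ equals the one at $(s,n(h))$ because the $\delta$-balls around the shared nominal $Z_0$ coincide, and then pass to the infinite horizon via contraction. Your explicit embedding for the ``$\le$'' direction, the decoupling over $s'$ (which justifies that the per-state minimization is realized by a single choice per $(a,s',h)$), and the truncation bound make rigorous points the paper leaves implicit; the paper instead shows that the history-based operator is also a contraction and matches the limits, so the route is the same.
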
 

\begin{proof}[Proof Sketch]
    Here, we provide a proof sketch. A full proof is provided in \cref{sec:history-node}. It is sufficient to show that \cref{eq:inner} achieves the same objective value as \cref{eq:innernode}. We show that \cref{eq:inner,eq:innernode} can be represented as recursive Bellman updates on $s,h$ which take the minimum value over all $\ZH$ or $\Z$ at a given step. We then show inductively for the finite-horizon case that, given any history $h$ and state $s$, both the $\ZH$ and $\Z$ cases have the same value at horizon $d=1$ and minimize over the same domain, therefore achieving the same value. In the infinite-horizon case, we show both Bellman updates converge to fixed points. Given the finite-horizon equivalence of the two updates for any horizon length, the two converge to the same infinite-horizon value.
\end{proof}

A consequence of the equivalence of the history-dependent and node-dependent observation function problems is that it enables a mathematical representation of the minimization problem \cref{eq:innernode} that is computationally efficient to solve. \rev{As with other works \citep{galesloot_pessimistic_2024,robustpolsynth,cubuktepe_robust_2021}, we leverage uncertain Markov chains to determine the worst-case value. Specifically, we use an interval Markov chain (iMC) \cite{jonsson_specification_1991}.} The FSC evaluation described in \cref{def:MC} can be used to compute the value given a singleton node-augmented observation function $\Z$. The transition function for the Markov chain is a product of the POMDP transition and observation functions. For $\delta > 0$, the node-augmented observation function is not a singleton but the set $\mathbf{\Z}_{\delta}$ with radius $\delta$ defined by $\prball{\Z_0, \delta}$. Specifically, each element of the observation function lies in an interval of width $\delta$. \rev{ This set may be captured as an interval Markov chain where $\delta$ forms the interval on the observation function and consequently the Markov chain transition interval.}

Given an iMC, established methods \cite[e.g.][]{givan_bounded-parameter_2000} can be used to find the minimum value over the interval.
The most straightforward MC model is shown in \cref{fig:MC_product}. However, this formulation has a key drawback -- the probability simplex ($\sum_{(s',n') \in Q_{\pi}} \left[ T_{\pi}((s',n')|(s,n))\right] = 1$) is only enforced on the product transition function $T_\pi$, not specifically on $\Z$ and $T$ which compose it.
\rev{Then, given a transition function with fixed probabilities, the observation function recovered from the Markov chain transition function may no longer obey the probability simplex requirement. This is despite the product of the two (the Markov chain transition function) obeying this requirement.} 
Thus, in order to leverage iMC methods to find a valid $\Z$, a different representation is needed.

\begin{figure}
    \centering
    \begin{subfigure}[b]{0.49\linewidth}
        \centering
        \includegraphics[width=0.8\linewidth]{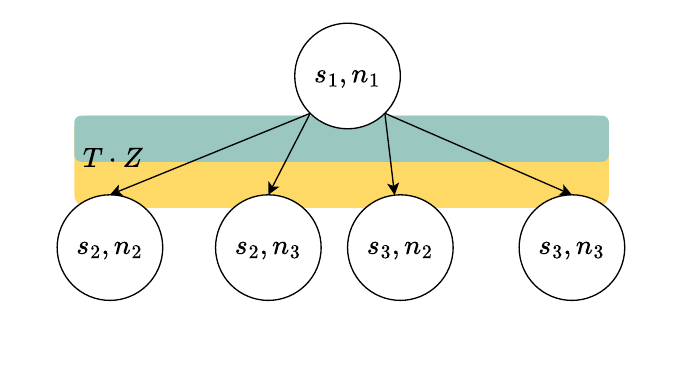}
        \caption{A product Markov chain}
        \label{fig:MC_product}
    \end{subfigure}
    \begin{subfigure}[b]{0.49\linewidth}
        \centering
        \includegraphics[width=0.8\linewidth]{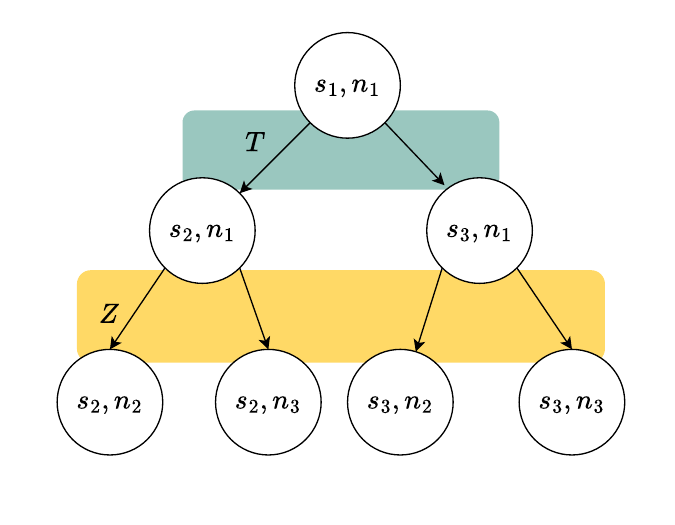}
        \caption{A two-step Markov chain}
        \label{fig:MC_twostep}
    \end{subfigure}
    \caption{Markov chain representations}
    \label{fig:MCs}
    \Description{A figure showing two Markov chain representations considered in this work. The first is a product Markov chain in which each transition is dependent on both a state transition and node transition (which is a function of the observation model). The second is a two-step Markov chain in which state transitions and node transitions are separated. Here, state transitions occur first followed at the next step by node transitions.}
\end{figure}

As an alternative to product Markov chains, we propose a two-step interval Markov chain (tsiMC) shown in \cref{fig:MC_twostep} that considers state transitions independently from observations, maintaining the probability simplex requirement without additional constraints. Since $T$ and $Z$ are treated as individual transitions in this representation, each is constrained to respect the probability simplex requirement. We now formally define a two-step Markov chain (tsMC) and a two-step interval MC and show how to solve the inner problem~\cref{eq:inner} using a tsiMC.

\begin{definition}[Two-Step Markov Chain]
\label{def:tsMC}
Given a POMDP model $\mathcal{M}$, with a node-augmented observation function $\Z$, an FSC $\pi$, and an evaluation horizon $d$, an induced two-step Markov chain is defined $\MC_{\twoStep} = (q_I, Q_{\twoStep}, T_{\twoStep}, R_{\twoStep}, \gamma_{\twoStep},d_{\twoStep})$, where

\begin{itemize}
    \item $q_I = (s_I, n_0)$ is the initial Markov chain state, where $s_I$ is an augmented initial state used to capture the initial belief distribution in the tsMC transition function;
    \item $Q_{\twoStep} = ( S \times N \times \{0,1\}) \cup \{q_I\}$ is the product of the POMDP states, controller nodes, and an indicator set;
    \item The transition function $T_{\twoStep}: Q_{\twoStep} \times Q_{\twoStep} \rightarrow [0,1]$ is defined
        \begin{align*}
        &T_{\twoStep}((s', n', i') | (s,n, i)) = \begin{cases}
            T(s' | s, \alpha(n))  &  \text{if } s \neq s_I, n' = n, i = 0, i' = 1 \\
            \Z(\mathcal{O}(n,n') | \alpha(n), s', n) & \text{if }s \neq s_I, s = s', i = 1, i' = 0\\
            b_0(s') & \text{if } s = s_I, n' = n_0, i' = 0\\ 
            0 & \text{o.w.}
        \end{cases};
    \end{align*}
    \item The reward function $R_{\twoStep}: (S \times N \times \{0,1\}) \rightarrow \mathbb{R}$ is defined as $R_{\twoStep}((s, n, i)) = 
    \begin{aligned}
        \begin{cases}
                R(s, \alpha(n)) & \text{ if } i = 0\\
                0 & \text{o.w.}
        \end{cases}
    \end{aligned}$;
    \item The discount factor $\gamma_{\twoStep} = \gamma^{0.5}$;
    \item The horizon $d_{\twoStep} = 2d$;
\end{itemize}
\end{definition}
Note the discount factor and horizon definitions account for the fact that only every other transition in the two-step MC is representative of a state transition.

Define the value of the two-step Markov chain as:
\begin{equation}\label{eq:tsMCval}
    \begin{aligned}
         V^{\MC_{\twoStep},d_{\twoStep}}&(q) = R_{\twoStep}(q) + \gamma_{{\twoStep}} \sum_{q'\in Q}T_{{\twoStep}}(q'|q)V^{\MC_{\twoStep},d_{\twoStep}-1}(q') \; \forall q\in Q.
    \end{aligned}
\end{equation}

We now show that this two-step Markov chain can be used to find the value of an FSC on a POMDP. This result allows us to show that tsiMCs can be used to solve \cref{eq:innernode}.

\begin{restatable}[]{lemma}{mcequivalence}
    \label{prop:mcequivalence}
    Let $\mathcal{M}$ be a POMDP with a finite-state controller policy $\pi$. Let $\MC_{\twoStep}$ be the induced two-step Markov chain constructed with $\mathcal{M}, \pi$. For any horizon $d$, we have that
    \begin{equation}
        V^{\pi,d}(h_0) = V^{\MC_{\twoStep},d_{\twoStep}}(q_I).
    \end{equation} 
\end{restatable}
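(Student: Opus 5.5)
The plan is to reduce the claim to the already-known equivalence between the FSC value on the POMDP and the product Markov chain $\MC$ of \cref{def:MC}. By \cref{eq:FSCValInit}, $V^{\pi,d}(h_0) = \sum_s b_0(s) V^{\pi,d}(s,n_0)$, where $V^{\pi,d}(s,n)$ satisfies the recursion \cref{eq:FSCVal}. So it suffices to show the following: the two-step value at an ``even'' state $(s,n,0)$ with two-step horizon $2d$ coincides with $V^{\pi,d}(s,n)$. The claim at $q_I$ then follows by one more step that uses the $b_0$ transitions.

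The core step is an induction on $d$ of the statement $V^{\MC_{\twoStep},2d}((s,n,0)) = V^{\pi,d}(s,n)$ for all $(s,n)$. The base case $d=0$ is immediate, since both values are zero. For the inductive step, I will unroll \cref{eq:tsMCval} twice from $(s,n,0)$.

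\emph{First step.} The reward is $R(s,\alpha(n))$, and the chain moves to $(s',n,1)$ with probability $T(s'|s,\alpha(n))$, contributing a factor $\gamma^{0.5}$.

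\emph{Second step.} At $(s',n,1)$ the reward is $0$, and the chain moves to $(s',n',0)$ with probability $\Z(\mathcal{O}(n,n')|\alpha(n),s',n)$, contributing another factor $\gamma^{0.5}$.

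Combining the two steps gives
\[
R(s,\alpha(n)) + \gamma \sum_{s',n'} T(s'|s,\alpha(n))\,\Z(\mathcal{O}(n,n')|\alpha(n),s',n)\, V^{\MC_{\twoStep},2(d-1)}((s',n',0)).
\]
Here $\Z(\mathcal{O}(n,n')|\cdot)$ means $\sum_{o\in\mathcal{O}(n,n')}\Z(o|\cdot)$. The product of the two transition probabilities is exactly $T_\pi((s',n')|(s,n))$ from \cref{def:MC}, with $Z$ replaced by the node-dependent $\Z$; when $\Z$ is node-independent, it is exactly the original $T_\pi$. Applying the induction hypothesis, this expression matches \cref{eq:FSCVal}. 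One subtlety needs checking: the sets $\mathcal{O}(n,n')$ partition the observations with nonzero probability, because $T_{FSC}$ is a function on its domain. Consequently the odd-step transitions form a valid distribution, and no mass is lost in the rearrangement.

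For the initial state, $q_I$ transitions to $(s',n_0,0)$ with probability $b_0(s')$. This initial step carries a discount and horizon offset that must be reconciled with \cref{eq:FSCValInit}, where $b_0$ enters undiscounted. I expect this bookkeeping to be the main (if minor) obstacle. My first approach is to treat the augmented initial state the same way as in \cref{def:MC}, as a zero-reward, undiscounted initialization step that is not counted in the horizon. This convention is implicit in the paper's use of $s_I$ and is also used for the product MC equivalence. Under it, $V^{\MC_{\twoStep}}(q_I)=\sum_s b_0(s)V^{\MC_{\twoStep},2d}((s,n_0,0))$.

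For the infinite-horizon case, I will take limits as $d\to\infty$. Both recursions are $\gamma$-contractions ($\gamma<1$), since the two-step operator applied twice is a $\gamma$-contraction. The finite-horizon values therefore converge to unique fixed points, and the equality is preserved in the limit.
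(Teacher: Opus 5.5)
Your proof is correct, but it takes a different route from the paper's sketch.

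\textbf{The paper's route.} The paper argues at the level of trajectories. Each state-augmented history of the FSC on $\mathcal{M}$ corresponds to exactly one path of $\MC_{\twoStep}$, with an interleaved $i=1$ state after every state transition, and vice versa. Corresponding paths have equal probability, and their discounted rewards agree: implicitly, each POMDP step contributes two factors $\gamma^{1/2}$, and the $i=1$ states carry zero reward. The details are deferred to the analogous product-chain result.

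\textbf{Your route.} You work with the Bellman recursions instead. Unrolling \cref{eq:tsMCval} twice from $(s,n,0)$ reproduces \cref{eq:FSCVal}, so induction on $d$ gives $V^{\MC_{\twoStep},2d}((s,n,0))=V^{\pi,d}(s,n)$. A limit argument then handles $d=\infty$.

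\textbf{What each buys.} The path view ties the chain directly to the history-based values \cref{eq:fh_value,eq:ih_value}, and it treats finite and infinite horizons uniformly as sums over paths. You, by contrast, take \cref{eq:FSCVal,eq:FSCValInit} (that is, the product-chain equivalence) as given and need a separate limit step. In exchange, your route shows explicitly why $\gamma_{\twoStep}=\gamma^{1/2}$ and $d_{\twoStep}=2d$ are the right choices. It also covers node-dependent $\Z$, as used in \cref{thm:ipe_inner}, at no extra cost. Finally, it exposes the bookkeeping at $q_I$ that the paper's sketch passes over.

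\textbf{Sharpening the $q_I$ point.} Applying \cref{eq:tsMCval} literally at $q_I$ gives $R_{\twoStep}(q_I)+\gamma^{1/2}\sum_s b_0(s)V^{\MC_{\twoStep},2d-1}((s,n_0,0))$. From an $i=0$ state, the one extra step counted by horizon $2d$ but not by $2d-1$ is spent in a zero-reward $i=1$ state. Hence $V^{\MC_{\twoStep},2d-1}$ and $V^{\MC_{\twoStep},2d}$ coincide on $i=0$ states, and the horizon offset is harmless. Only the extra factor $\gamma^{1/2}$, together with the unspecified $R_{\twoStep}(q_I)$, actually requires your zero-reward, undiscounted initialization convention. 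Without it, the identity would be off by a factor $\gamma^{1/2}$ whenever $\gamma<1$.
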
 

\begin{proof}[Proof Sketch]
    Given the same initial state, by construction any state-augmented history
    $h_t=(b_0,s_0,a_1,s_1,o_1,\allowbreak ...,a_{t-1},s_{t-1},o_{t-1})$ reached in the POMDP $\mathcal{M}$ by following FSC $\pi$ can be reached in the two-step Markov chain $\MC_\twoStep$ constructed with $\mathcal{M},\pi$ and vice versa. Likewise, given the same initial state, the probability of reaching these state-augmented histories is the same. Then, the discounted reward assigned to these histories will be the same in both models. This equivalence of models follows in a similar manner to existing theorems for product Markov chains \cite{meuleau_solving_2013}. 
\end{proof}

To analyze a policy performance over $\delta$, we now consider a generalization of the two-step Markov chain to intervals. By equipping the observation function $\Z$ with intervals, we obtain a two-step interval Markov chain. 

\begin{definition}[Two-Step Interval Markov Chain]\label{def:tsimc}
Given a POMDP $\mathcal{M}$ model, with a node-augmented observation function $\Z$, an FSC $\pi$, a horizon $d$, and an admissible observation deviation $\delta$, an induced two-step interval Markov chain is defined as 

\noindent$\MC^{\updownarrow}_{\twoStep} = (q_I,Q_{\twoStep}, T^{\downarrow}_{\twoStep},  T^{\uparrow}_{\twoStep}, R_{\twoStep}, \gamma_{\twoStep},d_{\twoStep})$, where
\begin{itemize}
     \item  $T^{\downarrow}_{\twoStep}: Q_{\twoStep} \times Q_{\twoStep} \rightarrow [0,1]$ defines the lower bound of the transition probability function, such that
    \begin{align*}
        &T^{\downarrow}_{\twoStep}((s', n', i') | (s,n, i)) = \begin{cases}
            T(s' | s, \alpha(n)) & \text{if } s\neq s_I, n' = n, i = 0, i' = 1\\
            \max(\Z(\mathcal{O}(n,n') | \alpha(n), s', n) - \delta, \epsilon_p) & \text{if } s\neq s_I, s = s', i = 1, i' = 0\\
            b_0(s') & \text{if } s = s_I, n' = n_0, i' = 0\\
            0 & \text{o.w.}
        \end{cases},
    \end{align*}
    where $\epsilon_p$ is the minimum probability, ensuring graph-preservation as in \citep{spel_finding_2021};
     \item $T^{\uparrow}_{\twoStep}: Q_{\twoStep} \times Q_{\twoStep} \rightarrow [0,1]$ defines the upper bound of the transition probability function, such that
     \begin{align*}
        &T^{\uparrow}_{\twoStep}((s', n', i') | (s,n, i)) = \begin{cases}
            T(s' | s, \alpha(n)) &\text{if } s\neq s_I, n' = n, i = 0, i' = 1\\
            \min(\Z(\mathcal{O}(n,n') | \alpha(n), s', n) + \delta, 1) &\text{if } s\neq s_I, s = s', i = 1, i' = 0, \Z(o | \alpha(n), s') > 0\\
            b_0(s') & \text{if } s = s_I, n' = n_0, i' = 0\\
            0 &\text{o.w.}
        \end{cases};
    \end{align*}
\end{itemize}
\noindent and all other terms are as in \cref{def:tsMC}.
\end{definition}

Given this tsiMC, we desire to find the worst-case (minimum value) tsMC on the interval as a means of finding the worst-case observation deviation. We will show in \cref{sec:AlgorithmNS} that this formulation is tractable.
\rev{Let $D^d_{\twoStep}$ denote the tsMC extension of \cref{eq:fhZdomain,eq:ihZdomain}, i.e. the set of two-step Markov chain transition probabilities being minimized over in either the finite or infinite horizon case. In this case, each element, $T^d_{\twoStep}$, must be in the interval $\left[T_{\twoStep}^{\downarrow},T_{\twoStep}^{\uparrow}\right]$.} 

\begin{restatable}[]{theorem}{ipeinner}
\label{thm:ipe_inner}
    Given a POMDP $\mathcal{M}$, FSC policy $\pi$, a horizon $d$, and $\delta$, construct $\MC^{\updownarrow}_{\twoStep}$. Then, 
    \begin{equation}
       \min_{D^d_{\Z}} \: V^{\pi,d}_{\Z}(h_0) = \min_{D^d_{\twoStep} 
       } \: V^{\MC_{\twoStep},d_{\twoStep}}_{T_{\twoStep}}(q_I).
    \end{equation}
\end{restatable}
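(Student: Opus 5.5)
The plan is to establish a value-preserving bijection between the feasible sets $D^d_{\Z}$ and $D^d_{\twoStep}$, and then invoke \cref{prop:mcequivalence} pointwise, so that the two minimizations range over the same set of values.

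\textbf{Forward direction.} First I would show that every admissible $\Z \in \prball{\Z_0,\delta}$ induces an admissible two-step transition function $T_{\twoStep}$. The construction is to put $\Z$ on the observation step of \cref{def:tsMC}: $T_{\twoStep}((s,n',0)\mid(s,n,1)) = \Z(\mathcal{O}(n,n')\mid\alpha(n),s,n)$. In the finite-horizon case this is done separately for each $\Z^i$ at stage $i$.

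Then I check that the result lies in $[T^{\downarrow}_{\twoStep},T^{\uparrow}_{\twoStep}]$. The state-transition rows and the $s_I$ rows are fixed singletons equal to $T$ and $b_0$, so they hold trivially. Each observation row out of $(s,n,1)$ sums to one because the sets $\mathcal{O}(n,n')$ partition $O(n)$ and $\Z$ is a distribution. The bounds on each aggregate entry follow from the elementwise bounds in the probability ball.

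By \cref{prop:mcequivalence}, applied with the node-dependent $\Z$ in place of $Z$, we get $V^{\pi,d}_{\Z}(h_0)=V^{\MC_{\twoStep},d_{\twoStep}}_{T_{\twoStep}}(q_I)$. The lemma's argument is unchanged, since the only difference is that the observation probability is indexed by the current node. This gives the inequality $\min_{D_{\twoStep}} \le \min_{D_{\Z}}$.

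\textbf{Reverse direction and main obstacle.} The key step, and the main obstacle, is the reverse direction: given any $T_{\twoStep}$ in the interval set, recover a valid $\Z\in\prball{\Z_0,\delta}$ that induces it. The difficulty is that the tsiMC intervals are placed on node-transition probabilities $\Z(\mathcal{O}(n,n')\mid\cdot)$, which aggregate several observations, rather than on individual observations.

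I would handle this row by row, for fixed $(s,n)$. Suppose values $x_{n'}$ for the aggregates are given, each within $\delta$ of its nominal aggregate, at least $\epsilon_p$, and summing to one. I then distribute each $x_{n'}$ over the observations in $\mathcal{O}(n,n')$. The natural choice is to shift all of the aggregate's deviation onto a single observation, or to spread it proportionally, so that every individual observation stays within $\delta$ of nominal and above $\epsilon_p$.

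This lifting is where care is needed. If it fails in general, I would fall back to one of two options:
\begin{itemize}
\item interpret the tsiMC interval as built per observation, meaning one intermediate edge per $o\in O(n)$ that is then merged deterministically into $n'=T_{FSC}(n,o)$; this makes the correspondence exact, because interval lower and upper bounds then act on each $\Z(o\mid\cdot)$ separately;
\item show that the worst case over aggregates is attained at a point that can be lifted.
\end{itemize}
Observations outside the support are forced to zero in both sets, so they cause no issue.

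\textbf{Finite and infinite horizon.} With the correspondence in hand, \cref{prop:mcequivalence} gives equality of values in both directions, so the two minima coincide. For the finite horizon, the stage-indexed family $\{\Z^d,\dots,\Z^1\}$ matches the non-stationary choice of $T_{\twoStep}$ at the odd (observation) steps of the $2d$-step chain. The discount $\gamma^{0.5}$ applied twice reproduces $\gamma$ per POMDP step.

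For the infinite horizon, there are two ways to finish:
\begin{itemize}
\item argue directly with stationary $\Z$ and stationary $T_{\twoStep}$;
\item if stationarity of the iMC minimizer is in question, note that robust interval value iteration is a $\gamma$-contraction on the two-step chain and admits a stationary worst-case minimizer (as in \citet{givan_bounded-parameter_2000}); that minimizer then lifts back to a single $\Z$.
\end{itemize}
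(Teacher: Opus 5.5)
Your overall strategy is the same as the paper's. You match the transition rows out of the states $(s',n,1)$ with the rows of $\prball{\Z_0,\delta}$, then apply \cref{prop:mcequivalence} to each realization; the paper likewise argues that the edge bounds $T^{\downarrow}_{\twoStep},T^{\uparrow}_{\twoStep}$ coincide with $\Z^{\downarrow},\Z^{\uparrow}$.

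The gap is in the step you treat as routine. In the forward direction you assert that the bound on each aggregate entry $\Z(\mathcal{O}(n,n')\mid\alpha(n),s',n)$ follows from the elementwise bounds of the ball. It does not. If $\mathcal{O}(n,n')$ contains $k\ge 2$ supported observations, each can move by $\delta$ in the same direction, so the aggregate can move by up to $k\delta$, while \cref{def:tsimc} allows only $\delta$. Concretely, let $o_1,o_2$ enable $n\to n'$ and $o_3,o_4$ enable $n\to n''$, each with nominal probability $1/4$ at $(\alpha(n),s')$, and take $\delta=0.1$, $\epsilon_p=0$. The distribution $(0.35,0.35,0.15,0.15)$ lies in the ball and sends mass $0.7$ to $n'$, but the tsiMC interval on that edge is $[0.4,0.6]$. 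If the continuation value is $0$ after $n'$ and $1$ after $n''$, the ball attains $0.3$ at this step while the tsiMC attains only $0.4$. So the inequality $\min_{D^d_{\twoStep}}\le\min_{D^d_{\Z}}$ that you derive from the forward direction is false for the aggregated construction as written.

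Your diagnosis of where the difficulty lies is therefore inverted. The lifting you call the main obstacle works when $\epsilon_p=0$. Scale each block proportionally, $z_o=\Z_0(o\mid\cdot)\,x_{n'}/\Z_0(\mathcal{O}(n,n')\mid\cdot)$. This moves each entry by at most $|x_{n'}-\Z_0(\mathcal{O}(n,n')\mid\cdot)|\le\delta$ and keeps it in $[0,1]$, so every tsiMC realization lifts and $\min_{D^d_{\Z}}\le\min_{D^d_{\twoStep}}$ holds. With $\epsilon_p>0$ even this can fail, because the aggregate lower bound $\epsilon_p$ does not enforce the $k\epsilon_p$ that the individual constraints require.

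The other inequality needs a different chain, and your first fallback is the right fix. Use one interval edge per observation $o$, then a deterministic move to $T_{FSC}(n,o)$, adjusting $\gamma_{\twoStep}$ and $d_{\twoStep}$ for the extra layer. Each row is then exactly a box intersected with the simplex, i.e.\ exactly a row of $\prball{\Z_0,\delta}$, and both directions become identities. Alternatively, restrict to FSCs in which each $\mathcal{O}(n,n')$ contains at most one observation in the support of $Z_0(\cdot\mid\alpha(n),s')$.

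The paper's proof does not address aggregation either. Its identification of $T^{\downarrow}_{\twoStep}$ with $\Z^{\downarrow}$ is exact only under one of these two readings. With either repair, the rest of your plan goes through, including the stage-indexed matching for finite $d$ and the stationary-minimizer argument for $d=\infty$.
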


\begin{proof}[Proof]
    By \cref{prop:mcequivalence}, any realized $\MC_{\twoStep}  \in \MC^{\updownarrow}_{\twoStep}$ from the interval is equivalent in value to the composition of a POMDP $\mathcal{M}$ and an FSC $\pi$. Then if the bounds on the two problems are the same, the minimum value will be the same.
    
    For ease of notation and without loss of generality, consider the case where $d=\infty$. The bounds on $\mathbf{\Z}=\prball{\Z_0,\delta}$, the probability ball with radius $\delta$ around $\Z$, \rev{for any given $o,a,s',n$} are
    \begin{equation}
        \Z^{\downarrow}(o|a,s',n)=\max(\Z_0(o | a, s', n) - \delta, \epsilon_p)
    \end{equation}
    and 
    \begin{equation}
        \Z^{\uparrow}(o|a,s',n)=\
        \begin{aligned}
            \begin{cases}
                \min(\Z_0(o | a, s', n) + \delta, 1) & \text{if } \Z(o | a, s', n) > 0\\
                0 &\text{o.w.}
            \end{cases} .
        \end{aligned}
    \end{equation}
    
    By construction, $\Z^{\downarrow}, \Z^{\uparrow}$ are bounds on all possible valid node-dependent observation probability functions in $\mathbf{\Z}$, although they themselves are not valid distributions. The transition distributions as defined in $T^{\downarrow}_{\twoStep}$ from the MC states with $i=1$ are equivalent to observation distributions in $\Z^{\downarrow}$ and likewise for $T^{\uparrow}_{\twoStep}$ and $\Z^{\uparrow}$. Thus, both problems have the same bounds on transitioning to new node $n'$ from node $n$ as a function of an observation $o$.
\end{proof}

Given this result, we can now formulate \cref{eq:nested2node} with a tsiMC in the inner optimization problem. 
Having established the equivalence of the value of the two-step Markov chain and the FSC on a POMDP, \cref{eq:nested2} can be written as
 \begin{subequations}\label{eq:mcprob}
    \begin{align}
        & \begin{aligned}\label{eq:mcproba}
            \delta_{NS} = \max_{\delta\in[0,1]} & \: \delta\\
            \text{s.t. } & V^{\MC_{\twoStep},d_{\twoStep}}_{T_{\twoStep_0}}(q_I)-V^{\MC_{\twoStep},d_{\twoStep}}_{T_{\twoStep}'}(q_I) \leq \threshold\\ 
        \end{aligned}\\
        &\begin{aligned}\label{eq:mcprobb}
              & \qquad \qquad \quad V^{\MC_{\twoStep},d_{\twoStep}}_{T'_{\twoStep}}(q_I) = \min_{ D^d_{T_{\twoStep}}
              } \: V^{\MC_{\twoStep},d_{\twoStep}}_{T_{\twoStep}}(q_I),\\
        \end{aligned}
    \end{align}
\end{subequations}

where $V^{\MC_{\twoStep}}_{T_{\twoStep}}(q_I)$ denotes the value of the two-step Markov chain, \cref{eq:tsMCval}, constructed with some $T_{\twoStep}$. To solve \cref{eq:mcprob}, we must search over $\delta$ while respecting the threshold constraints. As established in \cref{sec:monotonicity}, this inner minimization is monotonic in $\delta$. We state this formally in the following lemma.

\begin{lemma}\label{prop:minmono}
    The minimum value over the two-step interval Markov chain defined by $\delta$, \cref{eq:mcprobb}, is monotonically non-increasing in $\delta$.
\end{lemma}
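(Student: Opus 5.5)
The plan is to reduce the claim to set inclusion of the feasible transition sets, exactly as in the proof of \cref{lm:monotonicity}. Fix $\delta_a < \delta_b$. I will show that the feasible domain $D^d_{T_{\twoStep}}$ generated by the tsiMC $\MC^{\updownarrow}_{\twoStep}$ at radius $\delta_a$ is contained in the one generated at radius $\delta_b$. A minimum over a larger set can only be smaller or equal, so this gives $\min_{D^d_{T_{\twoStep}}(\delta_b)} V \le \min_{D^d_{T_{\twoStep}}(\delta_a)} V$, which is the claim.

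The first step is to check the interval bounds entrywise from \cref{def:tsimc}. Entries for the state-transition step ($i=0 \to i'=1$) and the initial step from $q_I$ do not depend on $\delta$. For the observation step ($i=1 \to i'=0$), the map $\delta \mapsto \max(\Z_0 - \delta, \epsilon_p)$ is non-increasing. The map $\delta \mapsto \min(\Z_0+\delta,1)$ is non-decreasing, and zero-probability entries stay at $0$ for every $\delta$. Hence $[T^{\downarrow}_{\twoStep}(\delta_a), T^{\uparrow}_{\twoStep}(\delta_a)] \subseteq [T^{\downarrow}_{\twoStep}(\delta_b), T^{\uparrow}_{\twoStep}(\delta_b)]$ entrywise. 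Every row of a valid tsMC drawn from the smaller interval also satisfies the simplex constraint within the larger interval. In the finite-horizon case the elements $T^d_{\twoStep}, \dots, T^1_{\twoStep}$ are chosen independently per step, so the inclusion carries over to the product domain. The infinite-horizon case uses a single $T_{\twoStep}$ and the same inclusion applies.

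The second step is to conclude by the minimum-over-supersets argument. Any minimizer at $\delta_a$ is feasible at $\delta_b$, and its value is therefore an upper bound on the minimum at $\delta_b$. Alternatively, one can route the argument through \cref{thm:ipe_inner} and the inclusion $\prball{\Z_0,\delta_a} \subseteq \prball{\Z_0,\delta_b}$ already used in \cref{lm:monotonicity}.

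The only delicate point is making sure the minimum is well defined. Each feasible set is a nonempty compact polytope, or a finite product of them: it is nonempty because it contains $T_{\twoStep_0}$, since $\epsilon_p$ is below the nominal positive probabilities. The value is continuous in the transition probabilities for $\gamma<1$ or finite $d$. The minimum is therefore attained, and the inequality between minima is legitimate rather than merely an inequality between infima.
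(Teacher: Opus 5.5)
Your proposal is correct and takes essentially the same route as the paper, which justifies this lemma only by the nested-uncertainty-set argument of \cref{lm:monotonicity}: enlarging $\delta$ enlarges the feasible set, so the worst-case minimum cannot increase. Your entrywise check of the bounds in \cref{def:tsimc}, your treatment of the per-step product domain in the finite-horizon case, and your attainment remark make that argument explicit; the nonemptiness step rests, as you say, on $\epsilon_p$ not exceeding any positive nominal probability, an assumption the paper also makes implicitly.
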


Intuitively, as the size of the permissible uncertainty set expands with $\delta$, the worst-case minimum can only decrease.
The monotonicity of \cref{eq:mcprobb} in $\delta$ allows us to use the efficient modified bisection method to find a solution to \cref{eq:mcprob} while guaranteeing convergence and soundness.

\subsection{\algoName{}}\label{sec:AlgorithmNS}
In this section, we instantiate RIS for the non-sticky formulation by specifying the corresponding feasibility evaluator. The resulting algorithm, \algoName{} (\shortAlgoName{}), shown in \cref{alg:overall_nonsticky}, computes the maximal admissible deviation $\delta$ for \cref{prob:non-sticky} via interval search. When the inner 
evaluation (cf.\ \cref{eq:mcprobb}) is performed exactly, \shortAlgoName{} 
returns a $\delta$ within $\epsilon_{\modifiedBisectionShort{}}$ of the 
optimal admissible deviation $\delta^*$ defined in \cref{eq:mcprob}.

The algorithm constructs a two-step interval Markov chain $\MC_{\twoStep}$ 
from the POMDP--FSC pair and evaluates the nominal policy value 
$V^{\MC_{\twoStep},d_{\twoStep}}_{T^0_{\twoStep}}$. For a candidate 
radius $\delta$, the feasibility evaluator
\begin{equation}\label{eq:f_TZ}
    f_{TZ}(\delta) =V^{\MC_{\twoStep},d_{\twoStep}}_{T^0_{\twoStep}}(q_I)-\threshold-\min_{D^d_{\twoStep} }V^{\MC_{\twoStep},d_{\twoStep}}_{T'_{\twoStep}}(q_I).
\end{equation}
is non-positive when the robustness constraint is satisfied at radius $\delta$ 
and positive otherwise. \rev{Note $D^d_{\twoStep}$ is a function of $\delta$ (cf. \cref{def:tsimc}).} This function, together with the search interval 
$[0,1]$ and tolerance $\epsilon_{\modifiedBisectionShort{}}$, is passed 
to \modifiedBisectionShort{} (\cref{alg:MBS}).

\begin{algorithm}
    \caption{\algoName{}}
    \hspace{-7.9cm} \texttt{\shortAlgoName{}}($\mathcal{M},\pi,d,\threshold, \epsilon_{\modifiedBisectionShort{}}$)
    \begin{algorithmic}[1]\label{alg:overall_nonsticky}
        \STATE  $\MC_{\twoStep} \gets $\texttt{make\_tsMC} $(\mathcal{M,\pi})$
        \STATE $d_{\twoStep} \gets 2d$
        \STATE $V^{\MC_{\twoStep},d_{\twoStep}}_{T^0_{\twoStep}} \gets $\texttt{evaluate}(\texttt{make\_tsiMC}$(\MC_{\twoStep},0) )$ \COMMENT{\cref{eq:mcprobb}}
        \STATE $\delta_a \gets 0$
        \STATE $\delta_b \gets 1$
        \STATE $f_{TZ}(\delta) \gets V^{\MC_{\twoStep},d_{\twoStep}}_{T^0_{\twoStep}}-\threshold- $\texttt{evaluate} (\texttt{make\_tsiMC}$(\MC_{\twoStep},\delta) )$ \COMMENT{\cref{eq:f_TZ}}
        \RETURN \texttt{\modifiedBisectionShort{}}($f_{TZ},\delta_a,\delta_b,\epsilon_{\modifiedBisectionShort{}}$) \COMMENT{\cref{alg:MBS}}
    \end{algorithmic}
\end{algorithm}

We next provide analysis of \shortAlgoName{} and its convergence.

\subsection{Analysis of \shortAlgoName{}}\label{sec:NSAnalysis}
We show that \shortAlgoName{} is sound and guaranteed to converge to within $\epsilon_{\modifiedBisectionShort{}}$ of $\delta^*$ and does so in polynomial time when \cref{eq:mcprobb} is solved exactly.
We first provide a lemma on the monotonicity of $f_{\twoStep}$, which underpins the convergence of \algoName{}.

\begin{lemma}\label{prop:fz}
    The non-sticky feasibility evaluator $f_{\twoStep}$, \Cref{eq:f_TZ}, is monotone non-decreasing in $\delta$.
\end{lemma}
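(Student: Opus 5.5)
The plan is to split $f_{\twoStep}$ from \cref{eq:f_TZ} into a $\delta$-independent part and a $\delta$-dependent part. Write
\[
f_{\twoStep}(\delta) = c - g(\delta), \qquad c = V^{\MC_{\twoStep},d_{\twoStep}}_{T^0_{\twoStep}}(q_I)-\threshold, \qquad g(\delta)=\min_{D^d_{\twoStep}} V^{\MC_{\twoStep},d_{\twoStep}}_{T'_{\twoStep}}(q_I).
\]
The constant $c$ depends only on the nominal two-step Markov chain. It is computed once in \cref{alg:overall_nonsticky} from \texttt{make\_tsiMC}$(\MC_{\twoStep},0)$, so it does not depend on $\delta$. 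Hence it suffices to show that $g$ is monotone non-increasing in $\delta$, since then $-g$, and with it $f_{\twoStep}$, is non-decreasing.

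Monotonicity of $g$ is exactly \cref{prop:minmono}. I would still spell out why it holds, since it rests only on nesting of the feasible sets. Take $\delta_a<\delta_b$ and compare the interval bounds in \cref{def:tsimc}.
\begin{itemize}
\item The lower bound $\max(\Z(\mathcal{O}(n,n')|\alpha(n),s',n)-\delta,\epsilon_p)$ is non-increasing in $\delta$.
\item The upper bound $\min(\Z(\cdot)+\delta,1)$ is non-decreasing in $\delta$.
\item The state-transition rows and the $b_0$ rows are fixed and do not involve $\delta$.
\end{itemize}
So every interval $[T^{\downarrow}_{\twoStep},T^{\uparrow}_{\twoStep}]$ at $\delta_a$ is contained in the corresponding interval at $\delta_b$. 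The simplex constraint on each row is the same for both radii. Therefore every admissible transition function (or sequence of them, in the finite-horizon domain $D^d_{\twoStep}$) at $\delta_a$ is admissible at $\delta_b$. A minimum over a superset is no larger, so $g(\delta_b)\le g(\delta_a)$.

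An alternative, which I would mention as a cross-check, goes through the original problem. By \cref{thm:ipe_inner} and \cref{thm:history-node}, $g(\delta)$ equals the worst-case value over $\prball{\ZH_0,\delta}$. These balls are nested in $\delta$, as already used in the proof of \cref{lm:monotonicity}, which gives the same conclusion.

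The main obstacle is showing that the minima are attained, or are at least well-defined infima, and that the argument respects the domain $D^d_{\twoStep}$. In the infinite-horizon case, the domain is a stationary choice of transition probabilities within the intervals. This set is compact, and the value is continuous in the transition probabilities because $\gamma<1$, so the minimum exists. In the finite-horizon case the domain is a finite product of compact sets, and the same argument applies. In both cases the nesting argument works componentwise, so no further structure of the interval Markov chain is needed.

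One minor point to check: for large $\delta$, the clipping at $\epsilon_p$ and at $1$ could seem to break strict growth of the intervals. But the argument only needs weak containment, which clipping preserves.
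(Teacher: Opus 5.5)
Your proposal is correct and follows the paper's route. The paper obtains this lemma directly from \cref{prop:minmono}: the worst-case tsiMC value is non-increasing in $\delta$ because the admissible interval sets are nested, and the remaining terms of $f_{\twoStep}$ do not depend on $\delta$. Your extra discussion of whether the minimum is attained is harmless but unnecessary, since an infimum over a superset is never larger than an infimum over a subset.
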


This lemma follows from the monotonicity of \cref{eq:mcprobb} in $\delta$ (\cref{prop:minmono}) and the construction of $f_{\twoStep}$. We now provide the key result on the convergence and soundness of \shortAlgoName{} with exact evaluation.

\begin{restatable}[]{theorem}{AlgoThm}
\label{thm:AlgoThm}
    Let $\delta^*$ be the unique optimal solution to the optimization problem in \cref{eq:mcprob}.
    
    Given exact solutions to \cref{eq:mcprobb}, \algoName{} (\Cref{alg:overall_nonsticky}) returns a solution $\delta$ such that $V^{\MC_{\twoStep},d_{\twoStep}}_{T_{\twoStep_0}}(q_I)-V^{\MC_{\twoStep},d_{\twoStep}}_{T'_{\twoStep}}(q_I) \leq \threshold$ and $|\delta^*-\delta| \leq \epsilon_{\modifiedBisectionShort{}}$.
\end{restatable}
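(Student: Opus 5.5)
The proof will mirror that of \cref{thm:AlgoThmSticky}: verify that $f_{\twoStep}$ from \cref{eq:f_TZ} meets the hypotheses of \cref{lm:bisect_sound} and \cref{lm:bisectconverge}, and then read off both soundness and the $\epsilon_{\modifiedBisectionShort{}}$ accuracy from those propositions.

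\emph{Step 1: well-definedness.} For every $\delta\in[0,1]$ the tsiMC of \cref{def:tsimc} is well-defined. The lower bounds $\max(\cdot-\delta,\epsilon_p)$ and upper bounds $\min(\cdot+\delta,1)$ always enclose the nominal distribution, so the feasible set $D^d_{\twoStep}$ is nonempty. It is also compact, being a product of polytopes, one per $(s,n)$ row at each step. Hence the minimum in \cref{eq:mcprobb} is attained and is a finite real number, since rewards are bounded and either $d<\infty$ or $\gamma<1$. By assumption this minimum is computed exactly, so $f_{\twoStep}(\delta)$ is an exact, well-defined real-valued function on $[0,1]$.

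\emph{Step 2: the endpoint condition.} At $\delta=0$ every interval collapses to the nominal value. Strictly, the $\epsilon_p$ clipping leaves the nominal point feasible whenever the nominal probabilities on the support are at least $\epsilon_p$, which is the standing convention. The minimum in \cref{eq:mcprobb} therefore equals $V^{\MC_{\twoStep},d_{\twoStep}}_{T_{\twoStep_0}}(q_I)$, giving $f_{\twoStep}(0)=-\threshold\le 0$ because $\threshold\ge 0$. This supplies the hypothesis $f(\delta_a)\le 0$ with $\delta_a=0$ and $\delta_b=1$.

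\emph{Step 3: monotonicity and invoking the MBS results.} By \cref{prop:fz}, which rests on \cref{prop:minmono}, $f_{\twoStep}$ is monotone non-decreasing in $\delta$. Moreover, $f_{\twoStep}(\delta)\le 0$ holds exactly when the constraint in \cref{eq:mcproba} holds, with $T'_{\twoStep}$ the minimizer of \cref{eq:mcprobb}. So \cref{eq:mcprob} coincides with \cref{eq:bisectminmax} for $f=f_{\twoStep}$, and its optimum is $\delta^*$.

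\cref{lm:bisect_sound} then gives that the returned $\delta$ satisfies $f_{\twoStep}(\delta)\le 0$, i.e.\ $V^{\MC_{\twoStep},d_{\twoStep}}_{T_{\twoStep_0}}(q_I)-V^{\MC_{\twoStep},d_{\twoStep}}_{T'_{\twoStep}}(q_I)\le\threshold$. \cref{lm:bisectconverge} gives $0\le\delta^*-\delta<\epsilon_{\modifiedBisectionShort{}}$. Nonnegativity holds because soundness places $\delta$ in the feasible set, whose maximum is $\delta^*$. Together these yield $|\delta^*-\delta|\le\epsilon_{\modifiedBisectionShort{}}$.

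\emph{Main obstacle.} The delicate step is reconciling the early-return branch of \cref{alg:MBS} with the theorem. When $\mathrm{sign}(f(0))=\mathrm{sign}(f(1))$ the algorithm returns $\delta_b=1$. That is correct if both values are negative, since monotonicity then makes $\delta^*=1$.

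Two edge cases need separate handling:
\begin{itemize}
\item When $\threshold=0$, $f(0)=0$ has sign $0$, so the signs of $f(0)$ and $f(1)$ agree only if $f(1)=0$ as well, and then $\delta^*=1$ again.
\item Otherwise the sign comparison treats $0$ as its own sign, and the loop's $f_c=0$ modification keeps $\delta_a$ feasible.
\end{itemize}

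I would check this case split explicitly, or note that it is already covered by the proofs of \cref{lm:bisect_sound,lm:bisectconverge}. The rest of the argument is a direct appeal to those propositions.
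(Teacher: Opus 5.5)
Your proposal is correct and follows the paper's own route. You check that $f_{\twoStep}$ is well-defined and exact, that $f_{\twoStep}(0)\le 0$, and that it is monotone non-decreasing by \cref{prop:fz}, then read off soundness and the $\epsilon_{\modifiedBisectionShort{}}$ accuracy from \cref{lm:bisect_sound,lm:bisectconverge}. Your additional details make explicit what the paper's short proof leaves implicit without changing the argument: the nominal support probabilities must be at least $\epsilon_p$ so that $f_{\twoStep}(0)=-\threshold$; soundness gives $\delta\le\delta^*$, which turns the one-sided bound into $|\delta^*-\delta|\le\epsilon_{\modifiedBisectionShort{}}$; and you handle the zero-sign cases of the early return.
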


\begin{proof}\label{proof:AlgoThm}
    By \cref{prop:fz}, $f_{TZ}$ as defined in \cref{eq:f_TZ} is monotone \rev{non-decreasing} in $\delta$. It has $f_{TZ}(0) \leq 0$, is well-defined, and, by assumption, returns an exact value. Then, by \cref{lm:bisect_sound} and \cref{lm:bisectconverge} modified bisection search on $f$ will converge to $\delta$ within $\epsilon_{MBS}$ of $\delta^*$ which ensures $V^{\MC_{\twoStep},d_{\twoStep}}_{T_{\twoStep_0}}(q_I)-V^{\MC_{\twoStep},d_{\twoStep}}_{T'_{\twoStep}}(q_I) \leq \threshold$.
\end{proof}

These guarantees extend to \cref{prob:non-sticky} as a result of \cref{thm:ipe_inner}.

\begin{restatable}[]{corollary}{SolveOriginalS}
    Let $\delta^*$ be the unique optimal solution to the optimization problem in \cref{eq:prob_nonsticky}.
     Given exact solutions to \cref{eq:mcprobb}, \algoName{} (\Cref{alg:overall_nonsticky}) returns a solution $\delta$ such that $V^{\pi,d}_{\ZH_{0}}(h_0)-V^{\pi,d}_{\ZH'}(h_0) \leq \threshold$ and $|\delta^*-\delta| \leq \epsilon_{\modifiedBisectionShort{}}$.
\end{restatable}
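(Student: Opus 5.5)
The corollary follows by chaining \cref{thm:AlgoThm} with the two equivalence results of this section: \cref{thm:history-node} and \cref{thm:ipe_inner}. The core step is to show that the tsiMC problem \cref{eq:mcprob} has the same feasible set of $\delta$ as \cref{prob:non-sticky} (equivalently \cref{eq:nested2}). Its unique optimum $\delta^*$ is then the same number in both formulations, and the guarantees of \cref{thm:AlgoThm} transfer verbatim.

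First I fix $\delta$ and compare the inner values. By \cref{eq:inner}, the constraint in \cref{eq:prob_nonsticky_constraints} holds exactly when $V^{\pi,d}_{\ZH_0}(h_0)-\min_{\ZH\in\prball{\ZH_0,\delta}}V^{\pi,d}_{\ZH}(h_0)\le\threshold$, since the worst case dominates. The full proof of \cref{thm:history-node} establishes more than equality of the optimal $\delta$: it shows, for every $\delta$ and horizon, that the inner minimum \cref{eq:inner} equals the node-dependent minimum \cref{eq:innernode}. I will invoke this per-$\delta$ statement, and if needed extract it explicitly from that proof. \cref{thm:ipe_inner} then equates the node-dependent minimum with \cref{eq:mcprobb}.

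The nominal values agree as well. $\ZH_0$, $\Z_0$ and $T_{\twoStep_0}$ all induce the nominal model, so \cref{prop:mcequivalence} gives $V^{\pi,d}_{\ZH_0}(h_0)=V^{\MC_{\twoStep},d_{\twoStep}}_{T_{\twoStep_0}}(q_I)$. Combining these equalities, the constraints of \cref{eq:mcproba} and \cref{eq:prob_nonsticky_constraints} are equivalent for every $\delta\in[0,1]$. The two maximization problems therefore have identical feasible sets and the same $\delta^*$.

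Finally I apply \cref{thm:AlgoThm}. The returned $\delta$ satisfies the tsiMC constraint and $|\delta^*-\delta|\le\epsilon_{\modifiedBisectionShort{}}$. By the per-$\delta$ equivalence, this $\delta$ also satisfies $V^{\pi,d}_{\ZH_0}(h_0)-V^{\pi,d}_{\ZH'}(h_0)\le\threshold$ for the worst-case $\ZH'$, and hence for all $\ZH'\in\prball{\ZH_0,\delta}$.

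The main obstacle is the per-$\delta$ form of \cref{thm:history-node}, because its statement only asserts equality of the optimal $\delta$. I expect the Bellman-induction argument in its proof to give equality of the inner minima for each fixed $\delta$ directly, and I would cite it in that form. A second, minor check concerns the $\epsilon_p$ bounds, which appear in \cref{eq:pball} and in the tsiMC. \cref{def:tsimc} uses the upper bound $\min(\cdot+\delta,1)$, which must be consistent with the $\epsilon_p$ floor in \cref{eq:pball}. This consistency is already asserted by \cref{thm:ipe_inner}, so I would rely on that result rather than re-derive it.
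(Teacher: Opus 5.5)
Your proposal is correct and follows the paper's route. The paper simply states that the guarantees of \cref{thm:AlgoThm} extend to \cref{prob:non-sticky} via \cref{thm:ipe_inner}, with \cref{thm:history-node} implicitly linking \cref{eq:inner} to \cref{eq:innernode}; you make explicit the per-$\delta$ equality of the inner minima and of the nominal values, and the fixed-$\delta$ form of \cref{thm:history-node} you need is exactly what its appendix proof establishes in \cref{eq:nodeHistorySame}.
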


Thus, \shortAlgoName{} with exact feasibility evaluation solves the non-sticky variant of \problemName{} problem with soundness and convergence guarantees.

Next we provide an analysis of the time complexity of \shortAlgoName{} for both the infinite- and finite-horizon cases. We begin by establishing the complexity of \cref{eq:mcprobb} by showing it can be posed as a linear program. \Cref{eq:mcprobb} may be written as an extension of the MDP value linear program \citep{puterman2014markov} where the transition probabilities are now free variables and there are no actions. \rev{This form is similar to those discussed in works by \citet{iyengar2005robust} and \citet{nilim2005robust}.} 
Without loss of generality, for the infinite-horizon case,
\begin{equation}\label{eq:BLP}
     \begin{aligned}
     V^{\MC_{\twoStep},\infty}_{T'_{\twoStep}}(q_I) = \min_{D_{BLP}} & \: V^{\MC_{\twoStep},\infty}(q_I)\\
            \text{s.t.} & \: V^{\MC_{\twoStep},\infty}(q) \geq R_{\twoStep}(q)+\gamma\sum_{q' \in Q}T_{\twoStep}(q'|q)V^{\MC_{\twoStep},\infty}(q') \; \forall q \in Q \; \\ 
            & \:\sum_{q'\in Q} T_{\twoStep}(q'|q) = 1 \; \forall q \in Q \;.\\ 
    \end{aligned}
\end{equation}
where $D_{BLP} = \{ V^{\MC_{\twoStep},\infty}(q_1) \in \mathbb{R},...,V^{\MC_{\twoStep},\infty}(q_{|Q|})\in \mathbb{R},T_{\twoStep}\in[T^{\downarrow}_{\twoStep},T^{\uparrow}_{\twoStep}]\}$.
This bilinear problem can be converted to a linear program using a transformation of $T_{\twoStep}$ following the approach established in work by \citet{mazouz2024piecewise}.

\begin{restatable}[]{proposition}{IPEComplexLeq}
\label{thm:IPEComplexLeq}
   \algoName{} (\cref{alg:overall_nonsticky}) is polynomial in the number of states $|Q|$ and bits used to represent the two-step Markov chain for $\gamma < 1$. 
\end{restatable}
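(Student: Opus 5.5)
The plan is to split the running time of \shortAlgoName{} into two parts: the number of outer iterations of \modifiedBisectionShort{}, and the cost of one call to the inner feasibility evaluator $f_{\twoStep}$ from \cref{eq:f_TZ}. The statement then follows by multiplying the two bounds.

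\emph{Outer loop.} \modifiedBisectionShort{} (\cref{alg:MBS}) halves the interval $[\delta_a,\delta_b]\subseteq[0,1]$ at each step. It therefore terminates after at most $\lceil \log_2(1/\epsilon_{\modifiedBisectionShort{}})\rceil + 2$ evaluations of $f_{\twoStep}$; the extra two are the endpoint evaluations. This count is polynomial in the number of bits of $\epsilon_{\modifiedBisectionShort{}}$ and does not depend on $|Q|$. Building the tsMC takes $O(|Q|^2)$ time, and computing the nominal value $V^{\MC_{\twoStep},d_{\twoStep}}_{T^0_{\twoStep}}$ amounts to solving a linear system, which is also polynomial.

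\emph{Inner evaluation.} I would first observe that for a fixed $\delta$ the worst-case value in \cref{eq:mcprobb} is the value of a single-player robust MDP with $(s,a)$-rectangular interval uncertainty. The tsiMC makes this structure explicit: the uncertainty on each row $T_{\twoStep}(\cdot\mid q)$ is an independent box intersected with the simplex. Two routes show polynomiality.

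The first route is the LP route. Write \cref{eq:BLP} and linearize the bilinear terms $T_{\twoStep}(q'|q)V(q')$ following \citet{mazouz2024piecewise}. A more self-contained alternative is LP duality: for fixed $V$, the inner minimization over each row is itself a small LP,
\[
\min_{t\in[T^{\downarrow}_{\twoStep}(\cdot|q),T^{\uparrow}_{\twoStep}(\cdot|q)],\ \sum t=1} \textstyle\sum_{q'} t(q')V(q').
\]
Replacing it with its dual introduces one multiplier for the simplex constraint and two for the bound constraints per entry. This gives a single LP with $O(|Q|^2)$ variables and constraints whose coefficients are the input numbers. By interior-point or ellipsoid methods, that LP is solvable in time polynomial in $|Q|$ and in the bit length $L$ of the tsMC.

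The second route, which I would give as a remark, is interval value iteration \citep{givan_bounded-parameter_2000}. Each Bellman backup sorts successor values in $O(|Q|\log|Q|)$ time per row. Because $\gamma_{\twoStep}=\gamma^{0.5}<1$, the backup is a contraction, so $O(\log(1/\epsilon)/(1-\gamma))$ sweeps suffice. This bound is only pseudo-polynomial in $\gamma$, which is why the LP route is the primary argument.

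\emph{Main obstacle.} The hardest step is making the LP argument rigorous. The constraint $V(q)\ge R+\gamma\sum T V$ with $T$ free is not convex jointly in $(T,V)$, so I cannot write it as an LP naively. The dualization must be done in the correct direction. The objective minimizes $V(q_I)$ and the adversary minimizes over $T$, so the feasible set $\{V : V\ge \min_T(\ldots)\}$ is not convex in general.

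To handle this, I would reformulate the problem as a maximization. Take the largest $V$ satisfying $V(q)\le R+\gamma\min_T\sum T V$, which is feasible because of the robust Bellman fixed point. The right-hand side is concave in $V$, being a minimum of linear functions, so the constraint set is convex. Dualizing the inner minimum then yields a genuine LP in which all constraints are of the form $V(q)\le R+\gamma(\mu_q+\ldots)$ together with the dual feasibility constraints. With this formulation, polynomial size in $|Q|$ and $L$, correctness via the unique fixed point of the contraction, and multiplication by the logarithmic outer iteration count together complete the proof.
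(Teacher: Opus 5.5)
Your proposal is correct and has the same skeleton as the paper: a logarithmic number of \modifiedBisectionShort{} calls, each solving a polynomial-size LP. It differs in how the LP is obtained.

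The paper writes the inner problem as the bilinear program \cref{eq:BLP}: minimize $V(q_I)$ subject to $V\ge R_{\twoStep}+\gamma T_{\twoStep}V$, with $T_{\twoStep}$ free in its intervals. It then defers the linearization to a citation of \citet{mazouz2024piecewise}. You correctly see that this minimization form is the wrong direction for dualization. Its feasible set is the union over $T_{\twoStep}$ of polyhedra, i.e.\ $\{V : V\ge\Phi V\}$ with $\Phi V=R_{\twoStep}+\gamma_{\twoStep}\min_{T}TV$ concave in $V$. That set is non-convex in general, and dualizing the row-wise minimum there gives a semi-infinite constraint rather than an LP.

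Your fix is to maximize $V(q_I)$ subject to $V\le\Phi V$, which is a convex set. Dualizing each row's interval-simplex LP then gives an explicit LP with $O(|Q|^2)$ variables and constraints. Its optimum is $V^*(q_I)$: any feasible $V$ satisfies $V\le\Phi^kV\to V^*$ by monotonicity and contraction, and $V^*$ itself is feasible. This is the standard robust-MDP LP. It makes self-contained exactly the step the paper's sketch leaves implicit. The paper's version is shorter but does not explain why the pessimistic direction linearizes. Your remark that interval value iteration alone is only pseudo-polynomial in $1/(1-\gamma)$ is also apt.

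One bit-size caveat, shared with the paper. The interval endpoints contain $\delta_c$, a dyadic rational with $O(\log(1/\epsilon_{\modifiedBisectionShort{}}))$ bits; this is harmless. However, $\gamma_{\twoStep}=\gamma^{0.5}$ need not be rational. Discounting by $\gamma$ only on the $i=1\to i=0$ transitions removes the issue. Your fixed-point argument survives this change because $\Phi^2$ is still a $\gamma$-contraction.
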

\begin{proof}[Proof Sketch]
    For the case where $\gamma < 1$, linear programs may be solved in time which is polynomial in the number of variables, constraints, and bits used to represent these. Here, the number of variables and constraints are polynomial in $|Q|$.
  
    Bisection search is known to be logarithmic in the inverse of the tolerance $\frac{1}{\epsilon_{\modifiedBisectionShort{}}}$ and the interval width \citep{burden38216numerical}. Then, since a logarithmic number of inner optimization calls are required, \shortAlgoName{} will be polynomial in the number of tsiMC states $|Q|$ and bits used to represent the parameters of \rev{the} two-step Markov chain.
\end{proof}

\begin{restatable}[]{proposition}{IPEComplexEq}
\label{thm:IPEComplexEq}
   For a horizon expressed in unary, \algoName{} (\cref{alg:overall_nonsticky}) is polynomial in the horizon, the number of states $|Q|$, and bits used to represent the two-step Markov chain for $\gamma=1$.
\end{restatable}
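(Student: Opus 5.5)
The plan is to mirror the proof of \cref{thm:IPEComplexLeq}. The only difference is that with $\gamma=1$ we cannot use the infinite-horizon linear program \cref{eq:BLP}, since the Bellman operator is no longer a contraction. Instead, we exploit the finite horizon $d$ directly. With $\gamma = 1$ we must have $d\in\mathbb{N}$ (the infinite-horizon case requires $\gamma<1$ by the Remark in \cref{sec:preliminaries}), so $d_{\twoStep}=2d$ is finite. The inner problem \cref{eq:mcprobb} is then a finite-horizon worst-case evaluation over $D^d_{\twoStep}$. There, a separate transition function $T^k_{\twoStep}\in[T^{\downarrow}_{\twoStep},T^{\uparrow}_{\twoStep}]$ is chosen at each step.

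The first step is to decompose \cref{eq:mcprobb} into backward induction. Starting from $V^{0}\equiv 0$, for $k=1,\dots,2d$ we compute
\begin{equation*}
V^{k}(q)=R_{\twoStep}(q)+\gamma_{\twoStep}\min_{T(\cdot|q)\in[T^{\downarrow}_{\twoStep}(\cdot|q),T^{\uparrow}_{\twoStep}(\cdot|q)],\ \sum_{q'}T(q'|q)=1}\ \sum_{q'}T(q'|q)V^{k-1}(q').
\end{equation*}
The decomposition is valid for two reasons. The per-step transitions in $D^d_{\twoStep}$ are chosen independently across steps. The interval constraints are also rectangular across source states $q$. The minimization therefore commutes with the recursion, just as in the standard robust dynamic programming argument of \citet{iyengar2005robust,nilim2005robust} and in interval policy evaluation \citep{givan_bounded-parameter_2000}.

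Each inner minimization is a linear program in the $|Q|$ variables $T(\cdot|q)$, with box constraints and one equality constraint. It can in fact be solved greedily: sort successors by $V^{k-1}$, assign the lower bounds, then push the remaining mass to the lowest-valued successors up to their upper bounds. This costs $O(|Q|\log|Q|)$ per state. One backup therefore costs $O(|Q|^2\log|Q|)$, and the inner evaluation costs $O(d\,|Q|^2\log|Q|)$ arithmetic operations. Because $d$ is given in unary, this is polynomial in the input size. The nominal evaluation in line 3 of \cref{alg:overall_nonsticky} is the special case $\delta=0$.

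The main obstacle is bit complexity. We must ensure that the values $V^{k}$ do not require super-polynomially many bits after $2d$ rounds of multiply-and-add. Here the plan is to argue two things. First, each $V^{k}$ is a sum of at most $2d$ bounded rewards weighted by products of at most $2d$ interval endpoints or $b_0$, $T$ entries. The greedy solution selects transition probabilities that are either endpoints or $1$ minus sums of endpoints, so they have polynomial bit length. Second, a product of $2d$ such rationals has bit length $O(d\cdot L)$, where $L$ is the input bit size. Hence all intermediate numbers stay polynomial in $d$, $|Q|$ and $L$. The factor $\gamma_{\twoStep}=\gamma^{0.5}=1$ causes no irrationality in this case.

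Finally, as in \cref{thm:IPEComplexLeq}, \modifiedBisectionShort{} makes $O(\log(1/\epsilon_{\modifiedBisectionShort{}}))$ calls to $f_{\twoStep}$ \citep{burden38216numerical}. Each call consists of one construction of the tsiMC, which is polynomial in $|Q|$, plus one inner evaluation, which is polynomial by the argument above. Multiplying the number of calls by the per-call cost gives the claimed polynomial bound in the horizon, $|Q|$, and the bits representing the two-step Markov chain.
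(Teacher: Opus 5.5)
Your proposal is correct, but it takes a different route from the paper.

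\textbf{The paper's route.} The paper extends the linear program \cref{eq:BLP} to the finite-horizon, $\gamma=1$ case by introducing one value variable per state $q\in Q$ and per timestep. It notes that the numbers of variables and constraints are polynomial in $|Q|$ and the unary horizon $d$, then invokes polynomial-time solvability of linear programs. It multiplies this by the logarithmic number of bisection calls, exactly as in \cref{thm:IPEComplexLeq}.

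\textbf{Your route.} You solve the inner problem by backward induction with the greedy interval step, which is in effect the IPE procedure the paper actually implements. You justify this by the per-step independence and rectangularity of $D^d_{\twoStep}$. This gives an explicit count of $O(d\,|Q|^2\log|Q|)$ arithmetic operations per call.

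\textbf{What each buys.} Your route is more elementary and ties the complexity claim to the algorithm used in the experiments. It also sidesteps a step the paper leaves implicit: turning the bilinear program, which minimizes jointly over $V$ and $T_{\twoStep}$, into a genuine LP. The price is that you must handle bit complexity yourself, whereas the LP route delegates it to LP theory.

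\textbf{Tightening the bit-length step.} $V^k(q)$ is not a sum of $2d$ single products. It is a sum over exponentially many paths, so you need a common-denominator induction. Let $M$ be the product of the denominators of all input rationals: the entries of $T$, $\Z_0$, $b_0$, $R$, $\epsilon_p$, and the current bisection point $\delta_c$. The denominator of $\delta_c$ is at most $2^{\lceil\log_2(1/\epsilon_{\modifiedBisectionShort{}})\rceil}$.
\begin{itemize}
\item Every probability chosen by the greedy step is an endpoint or $1$ minus a sum of endpoints, so its denominator divides $M$.
\item The recursion $V^k(q)=R_{\twoStep}(q)+\sum_{q'}T(q'|q)V^{k-1}(q')$ then shows by induction that the denominator of $V^k$ divides $M^{k+1}$.
\item The magnitude satisfies $|V^k|\le k\max|R|$.
\end{itemize}
Hence every intermediate number has $O(d\log M+\log d+\log\max|R|)$ bits, which is polynomial in the input size and completes your argument.
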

\begin{proof}[Proof Sketch]
    It is straightforward to extend \cref{eq:BLP} to the finite-horizon case with $\gamma=1$, where there is a value function for each state $q \in Q$ and \rev{each} timestep less than horizon $d$. Then, \cref{thm:IPEComplexEq} follows from the sketch for \cref{thm:IPEComplexLeq}.
\end{proof}

\rev{Having established the soundness, convergence, and polynomial complexity of \shortAlgoName{}, we now evaluate implementations of both algorithms, conducting validation and scalability experiments before demonstrating their use on several case studies.}

\section{Evaluation}\label{sec:Evaluation}
To the best of our knowledge, this work is the first to propose and solve the \ProblemName{} Problem.  
Although our problem is similar to the parameter space partitioning problem in \citet{junges_parameter_2024} and other works, these methods
would require additional post-processing to solve our robustness problem. Thus, there are no existing algorithms that directly solve the same problem to compare to.
As such, we first provide validation results for the approximate inner optimization implementations of our algorithms which show the interval found meets the value constraint $\threshold$, and then scalability results for \shortAlgoName{} with IPE and \shortAlgoNameSticky{} with PLA. We then provide relevant case studies which demonstrate the applicability of our algorithms and highlight the difference between \cref{prob:non-sticky} and \cref{prob:sticky}. Problem \rev{instances are implemented using} the POMDPs.jl package \citep{egorov2017pomdps} in Julia (v1.9.0), IntervalMDPs.jl \citep{mathiesen2024intervalmdp} is used for IPE, and STORM (v1.9.0) \citep{dehnert_storm_2017} is used for PLA in C++ (via Docker, limited to 23GB of RAM) with calls from Julia. All experiments were run on an Intel Core i7-11700K CPU with 32GB system RAM with Ubuntu 24.04. Unless noted, the threshold for modified bisection search convergence is $\epsilon_{\modifiedBisectionShort{}}=10^{-7}$, the precision for IPE is $\epsilon_{IPE}=10^{-7}$, and the precision for PLA is $\epsilon_{PLA}=10^{-7}$ for all experiments.

\subsection{Implementation Details}

\paragraph*{\shortAlgoName{ }} For the Non-Sticky variant, we use dynamic programming to approximate the value of \cref{eq:f_TZ} to within $\epsilon_{IPE}$. 
For this we use a well-established, efficient procedure called Interval Policy Evaluation (IPE) \citep{givan_bounded-parameter_2000}.
IPE computes the upper and lower bounds on the value of a policy on an interval Markov decision process, and by extension the upper and lower bounds on the value of an interval Markov chain. It adapts the value iteration algorithm of \citet{bellman57} by using a modified Bellman backup equation. In each iteration, this backup calculates the upper bound by optimistically (or pessimistically) selecting transition probabilities from within their prescribed intervals to maximize (or minimize) the resulting value. \rev{Due to approximate value calculations, the returned $\delta$ may exceed the $\epsilon_{MBS}$ tolerance. Nevertheless, this implementation ensures the value under the deviation exceeds the threshold by at most the inner optimization precision, i.e., $V^{\MC_{\twoStep},d_{\twoStep}}_{T_{\twoStep_0}}(q_I)-V^{\MC_{\twoStep},d_{\twoStep}}_{T'_{\twoStep}}(q_I) \leq \threshold - \epsilon_{IPE}$}.

In order to simplify construction of $\MC^{\updownarrow}_{\twoStep}$, we build a tsiMC with all possible states in $Q_{\twoStep}$, which may lead to states which are not reachable from $q_I$ having incomplete distributions over $q'$ as arbitrary nodes and states are not guaranteed to share observation support and some $q$ are never reached by another state. In order to ensure valid transition distributions for these non-reachable states and a valid MC for use in IPE with \citet{mathiesen2024intervalmdp}, we modify $T_{\twoStep}$ in the following manner: In the case where $q$ has zero outbound transition probability, we add a transition $T_{\twoStep}(q'|q)=1.0$. In cases where transition probabilities do not sum to 1, we normalize these distributions to ensure they are valid.

\paragraph*{\shortAlgoNameSticky{}} For the Sticky variant, \cref{eq:f_p} in \cref{alg:overall_sticky} (\shortAlgoNameSticky{}) -- \rev{checking if all parameters in $P(Z_0,\delta)$ satisfy the inequality $V^{\MC_{\parameterized},d}_{p^0}(q_I)- V^{\MC_{\parameterized},d}_{p'}(q_I)\leq \threshold$} -- is solved approximately by finding the parameters which minimize the value of the pMC to a threshold of $\epsilon_{PLA}$, in a manner similar to the approach used for \shortAlgoName{}. \rev{As noted in \cref{sec:bilevels}, when the inner optimization is solved exactly, this representation is equivalent to the original problem. For the problems we considered, we found this approach yielded faster performance than a feasibility check of the full inequality, \cref{eq:mcproba_sticky_con}.} We use the well-established Parameter Lifting Algorithm (PLA)  \rev{implementation} \citep{junges_parameter_2024,spel_finding_2021} in \rev{the probabilistic model checker} STORM \citep{dehnert_storm_2017} due to its comparative ability to scale to higher numbers of parameters in pMCs. 
PLA \citep{10.1007/978-3-319-46520-3_4,junges_parameter_2024} finds the worst-case parameter instantiation over a parameter region (intervals over parameter values). 
\rev{This is accomplished using parameter lifting to simplify analysis by eliminating dependencies on parameters which are shared between multiple transitions, transforming the pMC into an MDP which provides a conservative approximation, and iteratively refining the subregions over which this lifting is done to reduce the conservativeness of the approximation to within some tolerance \citep{junges_parameter_2024}.} PLA operates on pMCs which \rev{are multi-affine in} parameters \cite{10.1007/978-3-319-46520-3_4}. \rev{As in \citet{spel_finding_2021}, we consider simple pMCs, a subclass of multi-affine pMCs to which general pMCs can be converted via a process similar to that described for POMDPs in \citet{junges_finite-state_nodate}.}

\rev{Due to approximate value calculations, the returned $\delta$ may exceed the $\epsilon_{MBS}$ tolerance. Nevertheless, this implementation ensures the value under the deviation exceeds the threshold by at most the inner optimization precision, i.e., $V^{\MC_{\parameterized},d}_{p^0}(q_I)-V^{\MC_{\parameterized},d}_{p'}(q_I) \leq \threshold - \epsilon_{PLA}$.}

\subsection{\shortAlgoName{} Validation}
In order to validate \shortAlgoName{}, we consider several standard POMDP benchmarks: Tiger POMDP \citep{Kaelbling1998pomdp}, RockSample \citep{smith2004heuristic} (with a $5\times5$ grid and 3 rocks), and BabyPOMDP \citep{kochenderfer2015decision}. FSCs are constructed from SARSOP \citep{kurniawati_sarsop_2008} $\epsilon_{\pi}$-optimal policies, with $\epsilon_{\pi}=10^{-5}$. Solutions are found with \shortAlgoName{} with $d=\infty$ and $\epsilon_p=0$.

\subsubsection{\shortAlgoName{} Benchmark Models}\label{sec:NS_Benchmark}
Results are shown in \cref{tab:validation}. In all environments, we determine $\delta$ with a threshold defined: $\threshold = \eta\left|V^{\MC_{\twoStep}}_{T_{\twoStep}^0}\right|$. Given $\delta$ as found with \shortAlgoName{}, we find a worst-case two-step Markov chain $T'_{\twoStep}$ using IPE and calculate its empirical degradation $\eta_s(T'_{\twoStep}) = \frac{V^{\MC_{\twoStep}}_{T^0_{\twoStep}}-V^{\MC_{\twoStep}}_{T'_{\twoStep}}}{\left| V^{\MC_{\twoStep}}_{T^0_{\twoStep}}\right|}$ using Value Iteration \citep{bellman57}. Additionally, we randomly sample 100,000 (200,000 for BabyPOMDP) $T'_{\twoStep}$ from the set of extrema probabilities defined by $\delta$, calculate $\eta$ using Value Iteration, and show the worst-case $\eta$ from the samples ($rand(T'_{\twoStep})$). Likewise, we randomly sample 100,000 (200,000 for BabyPOMDP) $Z'$ (the sticky \rev{variant}) and shown the worst-case ($rand(Z')$) where values are calculated using Value Iteration on the tsMC constructed from $\pi$ and $Z'$. Here $T'_{\twoStep}$ are candidates to the non-sticky minimizer and $Z'$ are candidates to the sticky solution. Extrema are chosen as these are the candidate points at which value is minimized \citep{givan_bounded-parameter_2000}. 

\begin{table}[ht]
    \centering
    \caption{ \shortAlgoName{} Benchmark Models - Target $\eta$, resulting $\delta$, and empirical $\eta_s$ for the resulting minimizing matrix $T'_{\twoStep}$, the lowest value $T'_{\twoStep}$ from randomly sampled $T_{\twoStep}$ at the extrema of $\delta$, and the lowest value $Z'$ from randomly sampled $Z$ at the extrema of $\delta$.}
    \begin{tabular}{c|cc|ccc}
        \toprule
        \multirow{2}{*}{Domain} & \multirow{2}{*}{$\eta$} & \multirow{2}{*}{$\delta$}  & \multicolumn{3}{c}{$\underline{\hspace{10mm} \eta_s \hspace{10mm}}$}\\
          & &  & $T_{\twoStep}'$&  $rand(T_{\twoStep}')$ & $rand(Z')$\\
        \midrule
        \midrule
        \multirow{5}{*}{Tiger}     & 0.05 & 0.0031 & 0.05 & 0.05 & 0.05\\
             & 0.25 & 0.0149 & 0.25 & 0.25 & 0.25\\
             & 0.45 & 0.0260 & 0.45 & 0.45 & 0.45\\
             & 0.65 & 0.0367 & 0.65 & 0.65 & 0.65\\
             & 0.85 & 0.0469 & 0.85 & 0.85 & 0.85\\
        \midrule
        \multirow{5}{*}{RS(5,5)}  & 0.05 & 0.1413 & 0.0500 & 0.0500& 0.0500\\
             & 0.25 & 0.7063 &  0.2500 & 0.2500 &  0.2500\\
             & 0.45 & 1.0000 &  0.3479 & 0.3479 & 0.3479\\
             & 0.65 & 1.0000 &  0.3479 & 0.3479 & 0.3479\\
             & 0.85 & 1.0000 &  0.3479 & 0.3479 & 0.3479\\
        \midrule
        \multirow{5}{*}{Baby}    & 0.05 & 0.0268 & 0.0500 & 0.0491 & 0.0500\\
            & 0.25 & 0.1405 &  0.2500 & 0.2474 & 0.2500\\
            & 0.45 & 0.2622 &  0.4500 & 0.4492 & 0.4500\\
            & 0.65 & 0.3848 &  0.6500 & 0.6492 & 0.6500\\
            & 0.85 & 0.4980 &  0.8500 & 0.8488 & 0.8500\\
        \bottomrule
    \end{tabular}
    \label{tab:validation}
\end{table}

These results demonstrate that \shortAlgoName{} is successful in finding an interval $\delta$ for which the minimizing $T'_{\twoStep}$ found by IPE and evaluated with Value Iteration does one of two things: achieves the desired $\eta$, or achieves the worst-case $\eta$ for the problem \rev{instance} if it is less than the desired $\eta$, as in RockSample. Likewise, the same $\eta$ is achieved by the worst-case $T'_{\twoStep}$ randomly sampled from the same $\delta$, showing that IPE does select a valid minimizer. Thus, the $\delta$ selected does achieve the specified or problem \rev{instance} worst-case $\eta$. Additionally, the worst-case randomly sampled $Z'$ (which is strictly dependent on $a$, $s'$) achieves the desired or problem \rev{instance} worst-case $\eta$. This indicates that, in these cases, the non-sticky solution is not conservative, but does find a worst-case POMDP model for the specified $\eta$.

\subsubsection{\shortAlgoName{} Scalability}
To demonstrate the scalability of \shortAlgoName{}, we find $\delta$ using $\eta=0.1$ for RockSample domains of varying size with corresponding infinite-horizon policies and finite-horizon TigerPOMDP domains of varying horizon length with their corresponding finite-horizon policies. FSCs here are generated from a SARSOP policy generated with a 1,000 iteration limit in addition to the $\epsilon_{\pi}$ criteria listed above. Results are shown in \cref{tab:scalability}.

\begin{table}[ht]
    \centering
    \caption{\shortAlgoName{} Scalability - The number of nodes in $\pi$ $|N|$, number of states in $\mathcal{M}$ $|S|$, size of the tsMC state space $|Q_{\twoStep}|$, time to find a solution (from one run), and solution $\delta$ for $\eta=0.1$.}
    \begin{tabular}{c|cc|c|c|c}
        \toprule
        Domain  & $|N|$ & $|S|$ & $|Q_{\twoStep}|$ & time (sec) & $\delta$ \\
        \midrule
        \midrule
        RS(3,3)  & 13 & 73 & 1899 & 1.4223 &  0.2798 \\
        RS(4,4)  & 17 & 257 & 8,739 &  0.3642 & 0.3654\\
        RS(5,5)  & 21 & 801 & 33,643 & 1.6772 &  0.4678 \\
        RS(6,6)  & 25 & 2,305 & 115,251 &  8.3200 & 0.5903 \\
        RS(7,7)  & 30 & 6,273 & 376,381 & 350.7606 & 0.7361  \\
        RS(8,8)  & 47 & 16,385 & 1,540,191 &  1,843.6515 & 0.4235 \\
        \midrule
        $\text{Tiger-3}$ & 9 & 8 & 145 & 0.0021 & 0.0056\\
        $\text{Tiger-4}$ & 15 & 10 & 301 & 0.0045 & 0.0052\\
        $\text{Tiger-5}$ & 16 & 12 & 385 & 0.0061 & 0.0093\\
        $\text{Tiger-6}$ & 26 & 14 & 729 & 0.0131 & 0.0061\\
        $\text{Tiger-7}$ & 23 & 16 & 737 & 0.0151 & 0.0090\\
        \bottomrule
    \end{tabular}
    \label{tab:scalability}
\end{table}

For small problem \rev{instances}, \shortAlgoName{} finds solutions in seconds and is capable of solving problem \rev{instances} with tsMC statespaces in the millions. We note that for anticipated use cases, longer run times are acceptable, as validation is not likely to be completed in an online fashion. However, the short run time of our algorithm enables rapid design iteration for policies.

\subsection{\shortAlgoNameSticky{} Validation}
In order to validate our \shortAlgoNameSticky{}, we consider finite-horizon, undiscounted evaluation of several standard POMDP benchmarks: Tiger POMDP \citep{Kaelbling1998pomdp}, RockSample \citep{smith2004heuristic} (with a $3\times3$ grid and 3 rocks), and BabyPOMDP \citep{kochenderfer2015decision}. FSCs are constructed from SARSOP \citep{kurniawati_sarsop_2008} $\epsilon_{\pi}$-optimal policies, with $\epsilon_{\pi}=10^{-5}$, on the infinite-horizon \rev{instance of each} problem. $\epsilon_p=0.01$ in all results.

\subsubsection{\shortAlgoNameSticky{} Benchmark Models}
Results are shown in \cref{tab:validation_pla}. In all environments, we find $\delta$ given a threshold defined: $\threshold = \eta\left|V^{\MC_{\parameterized}}_{p^0}\right|$. 
Using this $\delta$, we find a pMC with worst-case $p'$ and its value using PLA and calculate the degradation $\eta_s(p') = \frac{V^{\MC_{\parameterized}}_{p^0}-V^{\MC_{\parameterized}}_{p'}}{\left|V^{\MC_{\parameterized}}_{p^0}\right|}$. 
Additionally, we sample 1,000 $Z$ from the set of extrema probabilities defined by $\delta$ and select the worst-case $\eta_s(Z')$, as in \cref{sec:NS_Benchmark}, where the value is calculated using IPE with zero interval on a tsMC constructed from $\mathcal{M}_Z'$ and $\pi$. $h$ denotes the evaluation horizon.

\begin{table}[ht]
    \centering
    \caption{\shortAlgoNameSticky{} Benchmark Models - Target $\eta$, resulting $\delta$, and empirical $\eta_s$ for the resulting minimizing parameter set $p'$ and the lowest value from $Z'$ randomly sampled from the extrema of $\delta$. $h$ denotes the evaluation horizon.}
    \begin{tabular}{c|cc|cc}
        \toprule
        \multirow{2}{*}{Domain} & \multirow{2}{*}{$\eta$} & \multirow{2}{*}{$\delta$}  & \multicolumn{2}{c}{$\underline{\hspace{10mm} \eta_s \hspace{10mm}}$}\\
          & &  & $p'$& $rand(Z')$\\
        \midrule \midrule
        \multirow{5}{*}{Tiger (h=21)}     
             & 0.05 & 0.0031 & 0.05 & 0.05 \\
             & 0.25 & 0.0149 & 0.25 & 0.25\\
             & 0.45 & 0.0261 & 0.45 & 0.45\\
             & 0.65 & 0.0368 & 0.65 & 0.65\\
             & 0.85 & 0.0470 & 0.85 & 0.85\\
        \midrule
        \multirow{5}{*}{RS(3,3) (h=21)}  
             & 0.05 & 0.1241 & 0.0500 & 0.0500\\
             & 0.25 & 0.6207 &  0.2500 & 0.2500 \\
             & 0.45 & 1.000 &  0.3919 & 0.3919 \\
             & 0.65 & 1.000 &  0.3919 & 0.3919 \\
             & 0.85 & 1.000 &  0.3919 & 0.3919 \\
        \midrule
        \multirow{5}{*}{Baby (h=31)}    
            & 0.05 & 0.0268 & 0.05 & 0.05 \\
            & 0.25 & 0.1403 & 0.25 & 0.25 \\
            & 0.45 & 0.2598 & 0.45 & 0.45 \\
            & 0.65 & 0.3766 & 0.65 & 0.65 \\
            & 0.85 & 0.4797 & 0.85 & 0.85 \\
        \bottomrule
    \end{tabular}
    \label{tab:validation_pla}
\end{table}

We note that the worst-case $\eta$ calculated from randomly sampled $Z'$ matches that of the input $\eta$ and the $\eta$ calculated using \shortAlgoNameSticky{} in cases where the input $\eta$ is achievable under the problem \rev{instance}. In cases where it is not achievable, the $\delta$ returned gives the problem \rev{instance} worst-case $\eta$. These results demonstrate that $\delta$ calculated by \shortAlgoNameSticky{} does indeed have a worst-case value which meets the desired or problem \rev{instance} worst-case $\eta$.

\subsubsection{\shortAlgoNameSticky{} Scalability}
To demonstrate the scalability of \shortAlgoNameSticky{}, we find $\delta$ using $\eta=0.1$ for RockSample domains of varying size with corresponding infinite-horizon policies and finite-horizon TigerPOMDP domains of varying horizon length with corresponding finite-horizon policies. All problem \rev{instances are evaluated to} horizon $h=35$. FSCs here are generated from a SARSOP policy generated with a 1,000 iteration limit. Results are shown in \cref{tab:scalability_pla}. Note that the time reported here is the cumulative time for calls to PLA for model processing and feasibility as timed by STORM. These times do not include outer loop model-writing time in Julia, \modifiedBisection{} time in Julia, or time for Docker start-up, and thus are a lower bound on the wall clock time.

\begin{table}[ht]
    \centering
    \caption{\shortAlgoNameSticky{} Scalability - The number of nodes in $\pi$ $|N|$, number of states in $\mathcal{M}$ $|S|$, the number of problem parameters and relevant problem parameters in the optimization after model simplification (as reported by STORM) $|P|/|P^*|$, time to find a solution (from one run, STORM run time only), and solution $\delta$ for $\eta=0.1$ \rev{evaluated to a finite} horizon \rev{of} $h=35$.} 
    \begin{tabular}{c|c c|c|c H|c}
        \toprule
        Domain  & $|N|$ & $|S|$ & $|P|/|P^*|$ & time (sec) & t J (sec) & $\delta$ \\
        \midrule \midrule
        RS(3,3)  & 13 & 73 & 192/2 & 1.507 & 8.9691 & 0.2483\\
        RS(4,4)  & 17 & 257 & 960/2 & 9.323 &  19.5960 & 0.2983  \\
        RS(5,5)  & 21 & 801 & 3840/2 & 120.676 & 164.4027 & 0.3483\\
        RS(6,6)  & 25 & 2,305 & 13440/2 & 1,296.775 & 1687.3779 & 0.3983 \\
        RS(7,7)  & 30 & 6,273 & 43008/2 & 12,487.129 & 16577.7750 & 0.4476 \\
        \midrule
         $\text{Tiger-3}$ & 9 & 8 & 10/4 & 1.240 &  & 0.0056 \\
         $\text{Tiger-4}$ & 15 & 10 & 18/6 & 14.031 &  & 0.0052 \\
         $\text{Tiger-5}$ & 16 & 12 & 24/8 & 276.678 &  & 0.0093 \\
         $\text{Tiger-6}$ & 26 & 14 & 28/10 & MO & MO & MO \\
         $\text{Tiger-7}$ & 23 & 16 & 34/12 & MO & MO & MO \\
        \bottomrule
    \end{tabular}
    \label{tab:scalability_pla}
\end{table}

For problem \rev{instances} with few relevant parameters $P^*$ as reported by PLA, \shortAlgoNameSticky{} is able to find solutions across a range of state sizes, albeit in more time than \shortAlgoName{}. As $P^*$ increases, \shortAlgoNameSticky{} scales less well. This is due in part to the need for increased memory for PLA as $P^*$ increases.

\subsection{Case Studies}

Here, we present motivating case studies which demonstrate the impact of observation error, the differences between the sticky and non-sticky \rev{variants}, and potential applications of our POMDP \ProblemName{} Problem.

\subsubsection{Rover Navigation}\label{sec:toyrover}
We first consider the motivating robotics example, \cref{ex:rover}, introduced in \cref{sec:nonstickyproblem} to demonstrate the impact of observation distribution model error on performance. Consider a Rover POMDP shown in \cref{fig:rover_env}. 
Here, a rover must choose between traversing two paths to a target cell, one which is longer but consistently traversable and another which may or may not have a sand type which causes the rover to become stuck. Noisy observations of both the sand size and texture indicate if the region can be successfully traversed. 
The full \rev{POMDP} definition is provided in \cref{sec:probdefs}. 
We calculate values with the policy shown in \cref{fig:roverpg}.

Here we suppose a decrease of $\eta=20\%$ in the undiscounted value of the policy, i.e. the frequency with which the target state is reached \rev{minus} the penalty for the left path, is admissible and \rev{use \shortAlgoName{}} to find the corresponding observation robustness interval $\delta$. \shortAlgoName{} with $\epsilon_p=0.0$ and a finite evaluation horizon of $h=50$ \rev{returns} $\delta=0.2$.

Given \rev{this} admissible deviation, we present the difference in the frequency with which the corridors are selected as a function of the underlying two-step interval Markov chain, which is generated from the POMDP model and the FSC.  \Cref{fig:not_sandy} shows the frequency of path selection for both the original model and the worst-case $T_{\twoStep}$ model across 10,000 Monte Carlo simulations for both a case where the corridor is traversable and where the corridor is not traversable.

\begin{figure}[ht]
    \begin{subfigure}[t]{0.23\textwidth}
        \centering
        \includegraphics[width=0.8\linewidth]{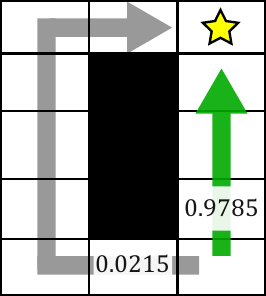}
        \caption{Nominal model tsMC}
        \label{fig:not_sandy_normal}
    \end{subfigure}
    \begin{subfigure}[t]{0.23\textwidth}
        \centering
        \includegraphics[width=0.8\linewidth]{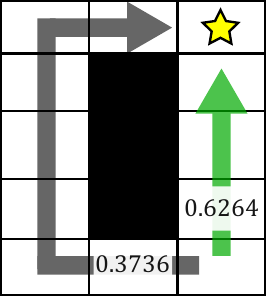}
        \caption{Min. tsMC for $\delta=0.2$}
        \label{fig:not_sandy_delta}
    \end{subfigure}
        \begin{subfigure}[t]{0.23\textwidth}
        \centering
        \includegraphics[width=0.8\linewidth]{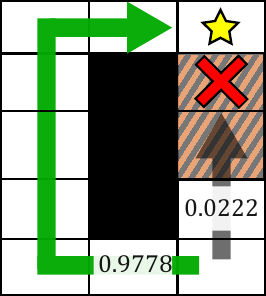}
        \caption{Nominal model tsMC}
        \label{fig:sandy_normal}
    \end{subfigure}
    \begin{subfigure}[t]{0.23\textwidth}
        \centering
        \includegraphics[width=0.8\linewidth]{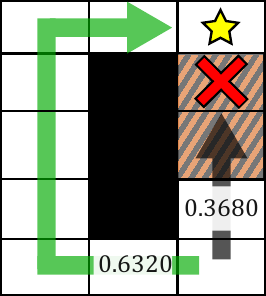}
        \caption{Min. tsMC for $\delta=0.2$}
        \label{fig:sandy_delta}
    \end{subfigure}
    \caption{Differences in path frequencies between the nominal and worst-case (minimizing) two-step Markov chains for a case where the corridor is traversable (a),(b) and not traversable (c),(d).}
    \label{fig:not_sandy}
    \Description{Four figures showing the frequencies with which the two corridors are selected in the rover navigation POMDP problem. When the shorter corridor is traversable (subfigures a and b), the rover takes the shorter corridor with frequency 0.9785 and the longer corridor the rest of the time. Under the worst-case two-step Markov chain for delta 0.2, the rover takes the shorter corridor with frequency 0.6264 and the longer corridor the rest of the time. When the shorter corridor is not traversable (subfigures c and d), the rover takes the longer corridor with frequency 0.9778 and the shorter corridor (resulting in failure) the rest of the time. Under the worst-case two-step Markov chain for delta 0.2, the rover takes the longer corridor with frequency 0.6320 and the shorter corridor (resulting in failure) the rest of the time.}
\end{figure}

In the case where the path is traversable, the increase in (incorrect) observations indicating it isn't traversable results in the longer path being taken more often. This does not impact the frequency with which the target is reached. In the case where the path is not traversable, the system receives more (incorrect) observations indicating it is traversable and consequently fails to reach the target upon entering the non-traversable path. This decrease in reaching the target roughly corresponds to the $20\%$ decrease allowed by $\eta$ when weighted by the probability of a non-traversable initial state. Thus, our approach is able to successfully find observation deviations which correspond to admissible decreases in performance.

\subsubsection{Toy Rover: Sticky and Non-Sticky Differences}
Here we demonstrate a case where the sticky and non-sticky solutions to a problem diverge.
We consider a simplified version of the Rover \rev{POMDP} problem discussed previously, specifically focusing on the corridor decision and abstracting away the rest of the navigation.
The policy shown in \cref{fig:roverpg} is used. \cref{tab:toyrover_forward} shows the worst-case values for $\delta = 0.1$, with graph-preservation $\epsilon_p=0.01$ and evaluation horizon $h=5$.

\begin{table}[ht]
    \centering
    \caption{Policy evaluation of Sticky and Non-Sticky Minimizers for Toy Rover with $\delta=0.1$, $\epsilon_p=0.01$, $\epsilon_{\modifiedBisectionShort{}}=10^{-5}$, where $SM$ is short for smooth, $AN$ is short for angular, $L$ is short for large, $S$ is short for small, and the state for all observations is large, angular sand.}
    \setlength\tabcolsep{2.25pt}
    \begin{tabular}{c|c|cc|cc|cc}
        \toprule
        Problem \rev{Variant}  & $\min(V)$ & $Z(AN|N_2)$ & $Z(SM|N_2)$ & $Z(AN|N_3)$ & $Z(SM|N_3)$ & $Z(L|N_1)$ & $Z(S|N_1)$ \\
        \midrule \midrule
        Sticky  & 0.8521 & 0.89 & 0.11 & 0.89 & 0.11 & 0.89 & 0.11 \\
        Non-Sticky  & 0.8466 & 0.89 & 0.11 & $\mathbf{0.99}$ & $\mathbf{0.01}$ & 0.89 & 0.11\\
        \bottomrule
    \end{tabular}
    \label{tab:toyrover_forward}
\end{table}

The value gap observed in \cref{tab:toyrover_forward} can be attributed to the difference in observation probabilities assigned to nodes $N_2$ and $N_3$ in the FSC. Consider the case where the sand type is large, angular. In this case, a lower value is achieved when (incorrect) observations which lead to the rover traversing the sand are more likely, i.e. "smooth" from $N_2$ and "angular" from $N_3$. In the non-sticky \rev{variant}, this minimization is fairly straightforward. At both $N_2$ and $N_3$ the probability of observing smooth and angular respectively can be maximized as there is no constraint on these nodes. However in the sticky \rev{variant}, since nature is constrained to assign the same probability to all nodes which share a state, action, and observation, it is not free to maximize the probability of observing angular at $N_3$ as it was previously. Therefore the minimum value of the sticky \rev{variant} on the interval is higher.

This likely explains the differences in the admissible observation distribution model deviation $\delta$ for the sticky and non-sticky \rev{variants} shown in \cref{tab:toyrover}. For a given $\delta$, the non-sticky worst-case value is a lower bound on the sticky worst-case value. Thus, we see a corresponding decrease $\delta$ for the non-sticky \rev{variant} compared to the sticky \rev{variant}.

\begin{table}[ht]
    \centering
    \caption{Policy robustness values for Sticky and Non-Sticky $\delta$ for Toy Rover with $\eta=0.1$ and $\epsilon_p=0.01$.}
    \begin{tabular}{c|c|c}
        \toprule
        Problem \rev{Variant} & $\eta$ & $\delta$ \\
        \midrule \midrule
        Non-Sticky  & 0.1 &  0.1006\\
        Sticky & 0.1 & 0.1078\\
        \bottomrule
    \end{tabular}
    \label{tab:toyrover}
\end{table}

\subsubsection{Cancer Diagnosis}
Consider a cancer diagnosis POMDP inspired by \citet{ayer_or_2012}, where tests are used to determine when to treat in-situ cancer or invasive cancer. Reward is measured in quality-adjusted life years (QALYs). A treatment policy (\cref{fig:cancer_pg}) is developed using a test which has 20\% probability of a false negative when in-situ (pre-invasive) cancer is present and has a 5\% probability of a false positive when no cancer is present. A full description of the model is provided in \cref{sec:probdefs}. This policy achieves 98.53 QALYs with the baseline model which is based on the test. Using this policy with \shortAlgoName{}, we can determine the acceptable decrease in accuracy of the test given an admissible percent decrease in QALYs.

\begin{figure}[ht]
    \centering
    \begin{subfigure}[t]{0.39\linewidth}
        \centering
        \includegraphics[width=0.6\linewidth]{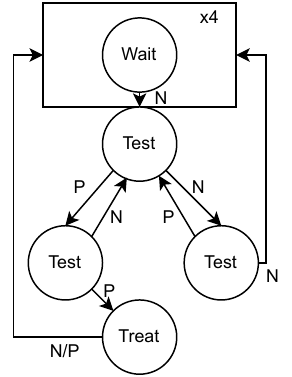}
        \caption{FSC for the cancer case study. Note N and P correspond to negative and positive results. "x4" indicates 4 wait actions are taken successively.}
        \label{fig:cancer_pg}
    \end{subfigure}
    \;
    \begin{subfigure}[t]{0.59\linewidth}
        \centering
        \includegraphics[width=0.85\linewidth]{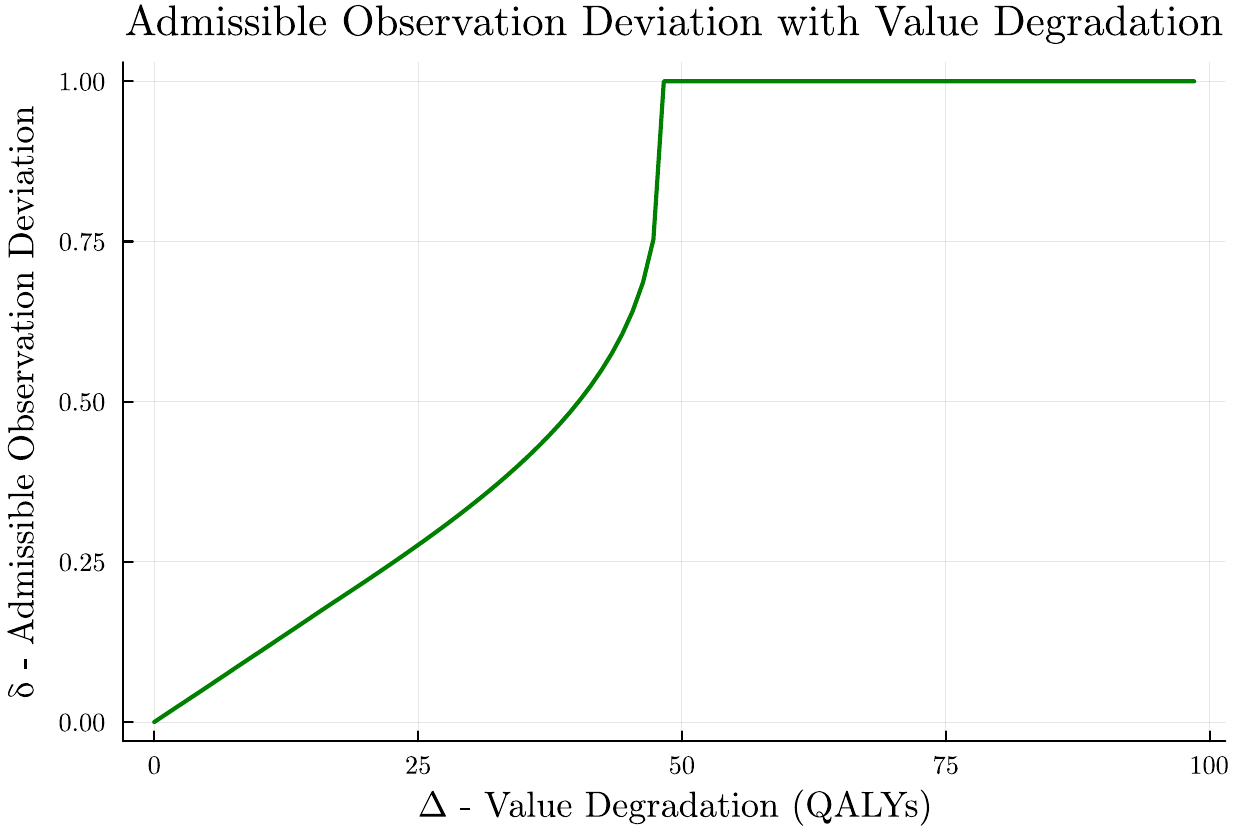}
        \caption{Change in admissible observation deviation $\delta$ with input of value degradation threshold $\threshold$. Note that there is a minimum, nonzero QALYs threshold, at which $\delta=1$ is achieved.}
        \label{fig:cancer_plot}
    \end{subfigure}
    \caption{Policy and results for the cancer case study.}
    \label{fig:cancer_pg_plot}
    \Description{A figure showing the finite-state controller policy and a plot of admissible observation deviation with input value threshold for the cancer case study. The finite-state controller policy takes the wait action four times before taking the testing action until two consecutive observations of the same type occur. If both are positive, the treatment action is taken. If both are negative, another four wait actions are taken. The plot shows a curve which increases from zero admissible deviation roughly linearly to delta of roughly 0.3 at 30 QALYs, at which point it increases exponentially to delta of 1.0 at roughly 48 QALYs.}
\end{figure}

\Cref{fig:cancer_plot} shows the increase in admissible observation deviation $\delta$ as a function of degradation in QALYs $\threshold$.
Observe that admissible $\delta$ increases up to $1.0$ at $\threshold \approx 48$ QALYs, indicating the observation function only impacts the policy value up to this value threshold. Beyond this, we observe no impact in the admissible observation deviation. This can be attributed in part to the structure of the policy. The policy will take action "wait" before testing for cancer, which has a reward of 1 QALY, at least four times before testing. Likewise, the probability of having cancer which progresses all the way to death is $0.0012$ if untreated. Given this low probability and the reward for waiting, the value influenced by observation quality is inherently limited for this problem.

Note that \shortAlgoNameSticky{} is not run on this benchmark as the PLA implementation used does not natively support discounted, infinite-horizon problems.

\subsubsection{Part Quality Control}

Consider a part checking POMDP inspired by operations research. Here, inspection machines with some accuracy are used to determine whether a part is passing or failing required specifications. The underlying state of passing or failing is partially observable. The problem terminates when either "accept" or "reject" actions are taken. The probability of the initial state being passing is $0.5$. A full description of the \rev{POMDP} is in \cref{sec:probdefs}. Two heuristic policies are considered (\cref{fig:partPGs}): policy 1,  which takes advantage of a higher quality inspection system with better measurement accuracy ($ac_1 = 99 \%$), and policy 2, which accounts for a less accurate system ($ac_2 = 90.234\%$). Both policies initially achieve approximately the same classification error rate of approximately 1 in 100 out of specification parts being passed on their individual models, which have 99\% and 90.234\% measurement accuracy, respectively.

\begin{figure}[ht]
    \centering
    \begin{subfigure}[b]{0.45\linewidth}
        \centering
        \includegraphics[width=0.45\linewidth]{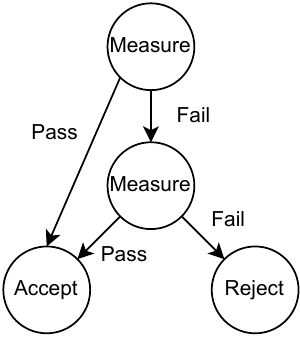}
        \caption{Policy 1}
        \label{fig:part_pg1}
    \end{subfigure}
    \;
    \begin{subfigure}[b]{0.45\linewidth}
        \centering
        \includegraphics[width=0.54\linewidth]{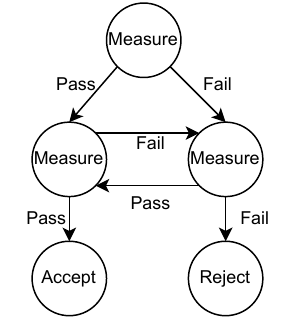}
        \caption{Policy 2}
        \label{fig:part_pg2}
    \end{subfigure}
    \caption{FSCs for the Part Checking POMDP. Note that the initial measure node is returned to after accepting or rejecting a part.}
    \label{fig:partPGs}
    \Description{A figure showing two finite-state controller policies for the part checking POMDP. Policy 1 measures once. If the pass observation is received, it takes the accept action. If the fail observation is received, it takes the measure action again and takes the accept action if pass is observed and otherwise takes the reject action. Policy 2 measures until two consecutive observations agree. If both are pass, the accept action is taken. If both are fail, the reject action is taken.}
\end{figure}

\Cref{fig:parts} shows the admissible measurement inaccuracy, defined $\min(1-ac+\delta,1)$, on each POMDP for the respective policy as a function of admissible fraction of failing parts accepted $\threshold$ for both \shortAlgoName{} with $\epsilon_p=0.0$ and \shortAlgoNameSticky{} with $\epsilon_p=0.01$. The evaluation horizon is limited to $200$ steps. Policy 2 is more robust to measurement inaccuracy, up until approximately $\threshold=0.5$, at which point both policies approach admissible inaccuracy of 1. Thus, it is shown quantitatively that policy 2 is much more robust to measurement inaccuracy than policy 1. Similar studies could be performed to select a policy which best balances robustness and efficiency.

As with the cancer case study, we observe an asymptote in the admissible inaccuracy as value degradation threshold increases. This is due in part to the underlying initial state probability, as $50\%$ of parts are passing initially. Since we are only concerned with false positives (accepting failing parts), the worst-case performance with respect to this metric is $50\%$ failure. Thus, at $50\%$ false positives, the entire probability simplex is admissible.

\begin{figure}[ht]
    \centering
    \begin{subfigure}[b]{0.49\linewidth}
        \centering
        \includegraphics[width=\linewidth]{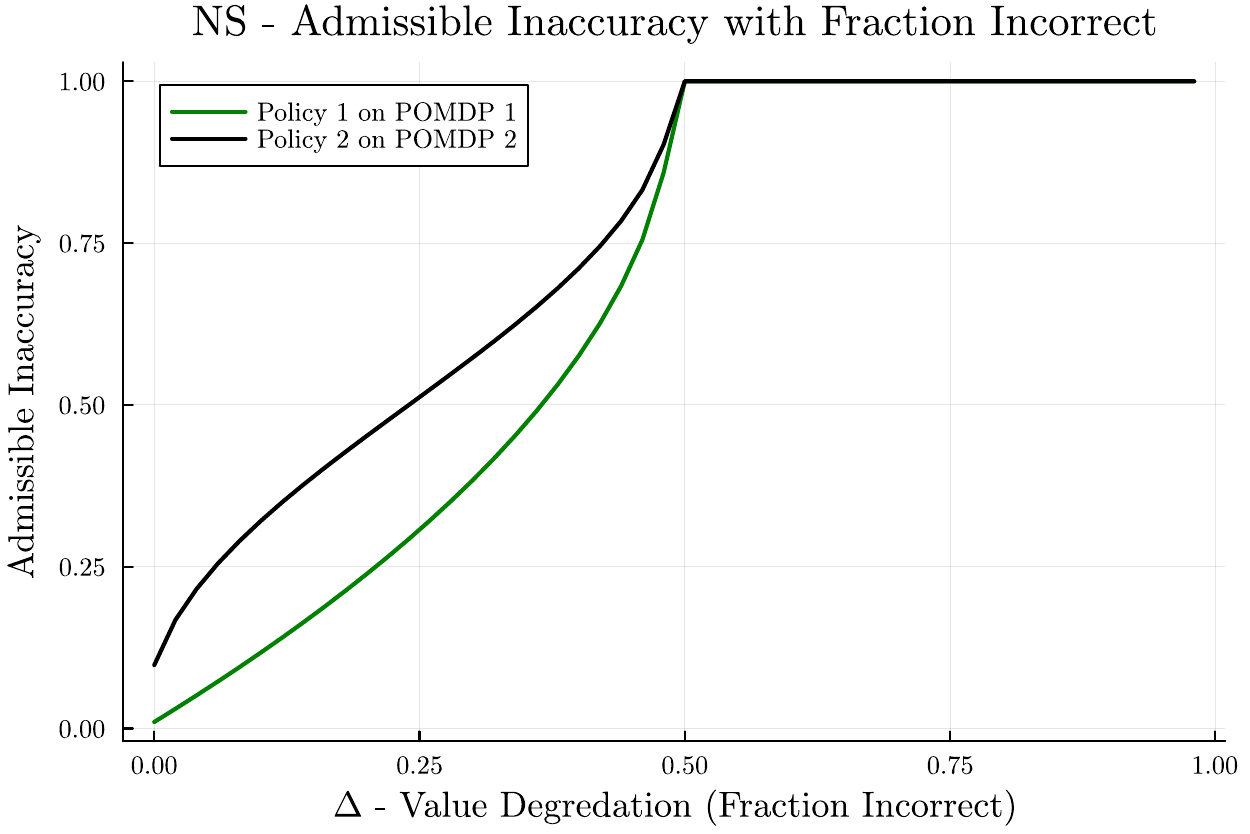}
        \caption{\shortAlgoName{}}
        \label{fig:partsNS}
    \end{subfigure}
    \;
    \begin{subfigure}[b]{0.49\linewidth}
        \centering
        \includegraphics[width=\linewidth]{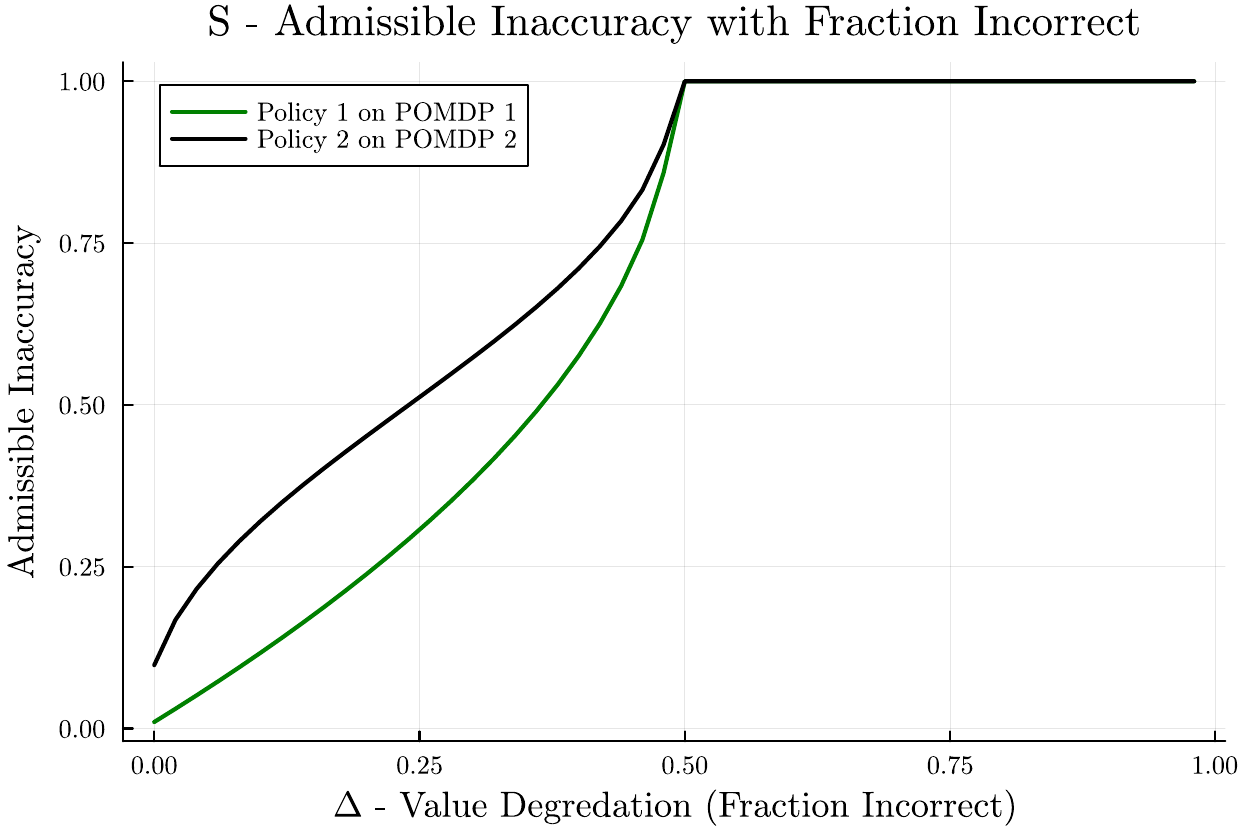}
        \caption{\shortAlgoNameSticky{}}
        \label{fig:partsS}
    \end{subfigure}
    \caption{Comparison of Policy 1 and Policy 2 for the Part Checking POMDP.}
    \label{fig:parts}
    \Description{A figure with two plots showing the admissible inaccuracy as a function of value degradation threshold for RIS-NS and RIS-S. Both the plot for RIS-NS and the plot for RIS-S follow the same trend. They both show a curve for policy 1 which increases from zero admissible observation inaccuracy at zero degradation roughly parabolically to admissible inaccuracy of 1.0 at roughly 50\% incorrect labeling of parts. They also show a curve for policy 2 which increases from roughly 0.125 admissible inaccuracy at zero degradation to delta of 1.0 at roughly 50\% incorrect, consistently remaining above the policy 1 curve.}
\end{figure}
\section{Conclusion and Future Work}
We introduce and analyze the \ProblemName{} Problem, where some admissible observation distribution model deviation $\delta$ is found given some admissible value degradation threshold $\threshold$ for both history-dependent observations in \Cref{prob:non-sticky}, the non-sticky \rev{variant}, and strictly state- and action-dependent observations in \Cref{prob:sticky}, the sticky \rev{variant}. In the non-sticky \rev{variant}, we show that it is sufficient to consider a node-dependent representation of the history if using an FSC policy.  We show that uncertain Markov chain representations provide a convenient representation for solving the inner optimization worst-case value problem and that a univariate optimization can be used in tandem with this inner loop to find $\delta$.

We introduce two algorithms, \algoName{} and \algoNameSticky{}, to solve the \rev{two variants}. Our results show the approximate-value non-sticky solution approach scales well to problem \rev{instances} with Markov chain state space sizes in the millions, while the approximate-value sticky solution approach is generally restricted to smaller problem \rev{instances}. We also demonstrate the applicability of our problem to several case studies.

There are several compelling directions for future work. In this work, our implementations use near-optimal approximations of value. While the value threshold can be met with these approximations, further analysis is needed on the convergence of Robust Interval Search to $\delta^*$ under these approximations.

In this work, our solution method for RIS-S uses parametric Markov chains for evaluation, which allow for convenient representation of constraints introduced by \cref{prob:sticky} but has limited scalability as problem size increases. Developing an algorithm for RIS-S which scales better with problem size is an area of future work.

We show theoretically that the solution to the non-sticky \rev{variant} provides a lower bound on the solution to the sticky \rev{variant}. Interestingly, in many problem \rev{instances}, the resulting solutions are very close to each other. 
Further analysis of the relationship between the sticky and non-sticky \rev{variants}, particularly the cases in which their solutions are close, may provide insights on developing more efficient algorithms for RIS-S.

Furthermore, while this work provides bounds on the admissible deviation for distributions in the observation function, it does not provide feedback on the sensitivity of the policy value with respect to each distribution in the observation model. A methodology that provides this information could have additional relevance to
practitioners deploying these policies.

\begin{acks}
    We thank Rayan Mazouz and Tyler Becker for their insightful discussions on this work.
    
    This material is based upon work supported by the National Science Foundation Graduate Research Fellowship Program under Grant No. DGE 2040434. Any opinions, findings, and conclusions or recommendations expressed in this material are those of the author(s) and do not necessarily reflect the views of the National Science Foundation. 

     Part of this research was carried out at the Jet Propulsion Laboratory, California Institute of Technology, under a contract with the National Aeronautics and Space Administration (80NM0018D0004). Qi Heng Ho was supported by a Strategic University Research Partnership (SURP) grant from the NASA Jet Propulsion Laboratory (JPL) (RSA 1716011). 
\end{acks}

\bibliography{real}

@article{givan_bounded-parameter_2000,
	title = {Bounded-parameter {Markov} decision processes},
	volume = {122},
	issn = {0004-3702},
	url = {https://www.sciencedirect.com/science/article/pii/S0004370200000473},
	doi = {10.1016/S0004-3702(00)00047-3},
	number = {1},
	urldate = {2024-11-15},
	journal = {Artificial Intelligence},
	author = {Givan, Robert and Leach, Sonia and Dean, Thomas},
	month = {Sep},
	year = {2000},
	pages = {71--109},
}

@article{junges_finite-state_nodate,
	title = {Finite-state {Controllers} of {POMDPs} via {Parameter} {Synthesis}},
	language = {en},
	author = {Junges, Sebastian and Jansen, Nils and Wimmer, Ralf and Quatmann, Tim and Winterer, Leonore and Katoen, Joost-Pieter and Becker, Bernd},
    year = 2018,
    journal = "Conference on Uncertainty in Artifical Intelligence"
}

@inproceedings{bovy_imprecise_2024,
author = {Bovy, Eline M. and Suilen, Marnix and Junges, Sebastian and Jansen, Nils},
title = {Imprecise probabilities meet partial observability: game semantics for robust POMDPs},
year = {2024},
isbn = {978-1-956792-04-1},
url = {https://doi.org/10.24963/ijcai.2024/740},
doi = {10.24963/ijcai.2024/740},
booktitle = {Proceedings of the Thirty-Third International Joint Conference on Artificial Intelligence},
articleno = {740},
numpages = {10},
location = {Jeju, Korea},
series = {IJCAI '24}
}

@incollection{galesloot_pessimistic_2024,
  title={Pessimistic Iterative Planning with RNNs for Robust POMDPs},
  author={Galesloot, Maris FL and Suilen, Marnix and Sim{\~a}o, Thiago D and Carr, Steven and Spaan, Matthijs TJ and Topcu, Ufuk and Jansen, Nils},
  booktitle={ECAI 2025},
  pages={4823--4831},
  year={2025},
  publisher={IOS Press}
}

@article{junges_parameter_2024,
	title = {Parameter synthesis for {Markov} models: covering the parameter space},
	volume = {62},
	issn = {1572-8102},
	shorttitle = {Parameter synthesis for {Markov} models},
	url = {https://doi.org/10.1007/s10703-023-00442-x},
	doi = {10.1007/s10703-023-00442-x},
	language = {en},
	number = {1},
	urldate = {2025-02-09},
	journal = {Formal Methods in System Design},
	author = {Junges, Sebastian and Ábrahám, Erika and Hensel, Christian and Jansen, Nils and Katoen, Joost-Pieter and Quatmann, Tim and Volk, Matthias},
	month = jun,
	year = {2024},
	pages = {181--259},
}

@inproceedings{cubuktepe_robust_2021,
  title={Robust finite-state controllers for uncertain POMDPs},
  author={Cubuktepe, Murat and Jansen, Nils and Junges, Sebastian and Marandi, Ahmadreza and Suilen, Marnix and Topcu, Ufuk},
  booktitle={Proceedings of the AAAI Conference on Artificial Intelligence},
  volume={35},
  number={13},
  pages={11792--11800},
  year={2021}
}

@inproceedings{bork_verification_2020,
	address = {Cham},
	title = {Verification of {Indefinite}-{Horizon} {POMDPs}},
	isbn = {978-3-030-59152-6},
	doi = {10.1007/978-3-030-59152-6_16},
	language = {en},
	booktitle = {Automated {Technology} for {Verification} and {Analysis}},
	publisher = {Springer International Publishing},
	author = {Bork, Alexander and Junges, Sebastian and Katoen, Joost-Pieter and Quatmann, Tim},
	editor = {Hung, Dang Van and Sokolsky, Oleg},
	year = {2020},
	pages = {288--304},
}

@inproceedings{chatterjee_qualitative_2015,
	title = {Qualitative analysis of {POMDPs} with temporal logic specifications for robotics applications},
	url = {https://ieeexplore.ieee.org/abstract/document/7139019},
	doi = {10.1109/ICRA.2015.7139019},
	urldate = {2025-02-09},
	booktitle = {2015 {IEEE} {International} {Conference} on {Robotics} and {Automation} ({ICRA})},
	author = {Chatterjee, Krishnendu and Chmelík, Martin and Gupta, Raghav and Kanodia, Ayush},
	month = may,
	year = {2015},
	note = {ISSN: 1050-4729},
	pages = {325--330},
}

@inproceedings{kara_robustness_2021,
	address = {Cham},
	title = {Robustness to {Approximations} and {Model} {Learning} in {MDPs} and {POMDPs}},
	isbn = {978-3-030-76928-4},
	doi = {10.1007/978-3-030-76928-4_9},
	language = {en},
	booktitle = {Modern {Trends} in {Controlled} {Stochastic} {Processes}:},
	publisher = {Springer International Publishing},
	author = {Kara, Ali Devran and Yüksel, Serdar},
	editor = {Piunovskiy, Alexey and Zhang, Yi},
	year = {2021},
	pages = {166--191},
}

@article{fard_variance_2008,
	title = {A {Variance} {Analysis} for {POMDP} {Policy} {Evaluation}},
	volume = {2},
	issn = {2159-5399},
	doi = {10.1901/jaba.2008.2-1056},
	language = {eng},
	journal = {Proceedings of the ... AAAI Conference on Artificial Intelligence. AAAI Conference on Artificial Intelligence},
	author = {Fard, Mahdi Milani and Pineau, Joelle and Sun, Peng},
	month = jan,
	year = {2008},
	pmid = {20582245},
	pmcid = {PMC2892224},
	pages = {1056--1061},
}

@misc{konsta_what_2024,
	title = {What should be observed for optimal reward in {POMDPs}?},
	url = {http://arxiv.org/abs/2405.10768},
	language = {en},
	urldate = {2024-09-25},
	publisher = {arXiv},
	author = {Konsta, Alyzia-Maria and Lafuente, Alberto Lluch and Matheja, Christoph},
	month = jul,
	year = {2024},
	note = {arXiv:2405.10768 [cs]},
}

@inproceedings{kurniawati_sarsop_2008,
	title = {{SARSOP}: {Efficient} point-based {POMDP} planning by approximating optimally reachable belief spaces},
	shorttitle = {{SARSOP}},
	booktitle = {{Proc}. {Robotics}: {Science} and {Systems}},
	author = {Kurniawati, Hanna and Hsu, David and Lee, Wee Sun},
	year = {2008},
}

@inproceedings{smith2004heuristic,
  title={Heuristic search value iteration for POMDPs},
  author={Smith, Trey and Simmons, Reid},
  booktitle={Proceedings of the 20th conference on Uncertainty in artificial intelligence},
  pages={520--527},
  year={2004}
}

@article{ye_despot_2017,
	title = {{DESPOT}: {Online} {POMDP} {Planning} with {Regularization}},
	volume = {58},
	copyright = {Copyright (c)},
	issn = {1076-9757},
	shorttitle = {{DESPOT}},
	url = {https://www.jair.org/index.php/jair/article/view/11043},
	doi = {10.1613/jair.5328},
	language = {en},
	urldate = {2022-06-03},
	journal = {Journal of Artificial Intelligence Research},
	author = {Ye, Nan and Somani, Adhiraj and Hsu, David and Lee, Wee Sun},
	month = jan,
	year = {2017},
	pages = {231--266},
}

@inproceedings{silver_monte-carlo_2010,
	title = {Monte-{Carlo} {Planning} in {Large} {POMDPs}},
	volume = {23},
	url = {https://proceedings.neurips.cc/paper/2010/hash/edfbe1afcf9246bb0d40eb4d8027d90f-Abstract.html},
	urldate = {2025-02-09},
	booktitle = {Advances in {Neural} {Information} {Processing} {Systems}},
	publisher = {Curran Associates, Inc.},
	author = {Silver, David and Veness, Joel},
	year = {2010},
}

@article{ayer_or_2012,
	title = {{OR} {Forum}—{A} {POMDP} {Approach} to {Personalize} {Mammography} {Screening} {Decisions}},
	volume = {60},
	issn = {0030-364X},
	url = {https://pubsonline.informs.org/doi/abs/10.1287/opre.1110.1019},
	doi = {10.1287/opre.1110.1019},
	number = {5},
	urldate = {2025-02-10},
	journal = {Operations Research},
	author = {Ayer, Turgay and Alagoz, Oguzhan and Stout, Natasha K.},
	month = oct,
	year = {2012},
	note = {Publisher: INFORMS},
	pages = {1019--1034},
}

@inproceedings{10.5555/3535850.3535914,
author = {Gupta, Himanshu and Hayes, Bradley and Sunberg, Zachary},
title = {Intention-Aware Navigation in Crowds with Extended-Space POMDP Planning},
year = {2022},
isbn = {9781450392136},
publisher = {International Foundation for Autonomous Agents and Multiagent Systems},
address = {Richland, SC},
booktitle = {Proceedings of the 21st International Conference on Autonomous Agents and Multiagent Systems},
pages = {562–570},
numpages = {9},
location = {Virtual Event, New Zealand},
series = {AAMAS '22}
}

@article{10.1145/3359616,
author = {Chen, Min and Nikolaidis, Stefanos and Soh, Harold and Hsu, David and Srinivasa, Siddhartha},
title = {Trust-Aware Decision Making for Human-Robot Collaboration: Model Learning and Planning},
year = {2020},
issue_date = {June 2020},
publisher = {Association for Computing Machinery},
address = {New York, NY, USA},
volume = {9},
number = {2},
url = {https://doi.org/10.1145/3359616},
doi = {10.1145/3359616},
journal = {J. Hum.-Robot Interact.},
month = jan,
articleno = {9},
numpages = {23}
}

@ARTICLE{9899480,
  author={Lauri, Mikko and Hsu, David and Pajarinen, Joni},
  journal={IEEE Transactions on Robotics}, 
  title={Partially Observable Markov Decision Processes in Robotics: A Survey}, 
  year={2023},
  volume={39},
  number={1},
  pages={21-40},
  doi={10.1109/TRO.2022.3200138}}

@inbook{burden38216numerical,
  title={Numerical Analysis 7th Edition},
  author={Burden, Richard L and Faires, J Douglas},
  publisher={Thomson Learning},
  isbn= {0-534-38216-9},
  chapter=2,
  year =2001 
}

@article{Kaelbling1998pomdp,
title = {Planning and acting in partially observable stochastic domains},
journal = {Artificial Intelligence},
volume = {101},
number = {1},
pages = {99-134},
year = {1998},
author = {Leslie Pack Kaelbling and Michael L. Littman and Anthony R. Cassandra}
}

@article{egorov2017pomdps,
  title={POMDPs. jl: A framework for sequential decision making under uncertainty},
  author={Egorov, Maxim and Sunberg, Zachary N and Balaban, Edward and Wheeler, Tim A and Gupta, Jayesh K and Kochenderfer, Mykel J},
  journal={The Journal of Machine Learning Research},
  volume={18},
  number={1},
  pages={831--835},
  year={2017},
  publisher={JMLR. org}
}

@article{mathiesen2024intervalmdp,
title = {IntervalMDP.jl: Accelerated Value Iteration for Interval Markov Decision Processes},
journal = {IFAC-PapersOnLine},
volume = {58},
number = {11},
pages = {1-6},
year = {2024},
note = {8th IFAC Conference on Analysis and Design of Hybrid Systems ADHS 2024},
issn = {2405-8963},
doi = {https://doi.org/10.1016/j.ifacol.2024.07.416},
url = {https://www.sciencedirect.com/science/article/pii/S2405896324005159},
author = {Frederik Baymler Mathiesen and Morteza Lahijanian and Luca Laurenti}
}

@inbook{kochenderfer2015decision,
  title={Decision making under uncertainty: theory and application},
  author={Kochenderfer, Mykel J},
  year={2015},
  publisher={MIT press},
  chapter={6}
}

@inproceedings{spel_finding_2021,
	address = {Cham},
	title = {Finding {Provably} {Optimal} {Markov} {Chains}},
	isbn = {978-3-030-72016-2},
	doi = {10.1007/978-3-030-72016-2_10},
	language = {en},
	booktitle = {Tools and {Algorithms} for the {Construction} and {Analysis} of {Systems}},
	publisher = {Springer International Publishing},
	author = {Spel, Jip and Junges, Sebastian and Katoen, Joost-Pieter},
	editor = {Groote, Jan Friso and Larsen, Kim Guldstrand},
	year = {2021},
	pages = {173--190},
}

@inproceedings{dehnert_storm_2017,
	address = {Cham},
	title = {A {Storm} is {Coming}: {A} {Modern} {Probabilistic} {Model} {Checker}},
	isbn = {978-3-319-63390-9},
	shorttitle = {A {Storm} is {Coming}},
	doi = {10.1007/978-3-319-63390-9_31},
	language = {en},
	booktitle = {Computer {Aided} {Verification}},
	publisher = {Springer International Publishing},
	author = {Dehnert, Christian and Junges, Sebastian and Katoen, Joost-Pieter and Volk, Matthias},
	editor = {Majumdar, Rupak and Kunčak, Viktor},
	year = {2017},
	pages = {592--600},
}

@inproceedings{daws2004symbolic,
  title={Symbolic and parametric model checking of discrete-time Markov chains},
  author={Daws, Conrado},
  booktitle={International Colloquium on Theoretical Aspects of Computing},
  pages={280--294},
  year={2004},
  organization={Springer}
}

@inproceedings{mastin_loss_2012,
	title = {Loss bounds for uncertain transition probabilities in {Markov} decision processes},
	url = {https://ieeexplore.ieee.org/document/6426504/?arnumber=6426504&tag=1},
	doi = {10.1109/CDC.2012.6426504},
	urldate = {2024-09-25},
	booktitle = {2012 {IEEE} 51st {IEEE} {Conference} on {Decision} and {Control} ({CDC})},
	author = {Mastin, Andrew and Jaillet, Patrick},
	month = {Dec},
	year = {2012},
	note = {ISSN: 0743-1546},
	pages = {6708--6715},
}

@article{tan_sensitivity_2011,
	title = {Sensitivity {Analysis} in {Markov} {Decision} {Processes} with {Uncertain} {Reward} {Parameters}},
	volume = {48},
	issn = {0021-9002},
	url = {https://www.jstor.org/stable/23066436},
	number = {4},
	urldate = {2024-09-25},
	journal = {Journal of Applied Probability},
	author = {Tan, Chin Hon and Hartman, Joseph C.},
	year = {2011},
	note = {Publisher: Applied Probability Trust},
	pages = {954--967},
}

@inproceedings{robustpolsynth,
  title     = {Robust Policy Synthesis for Uncertain POMDPs via Convex Optimization},
  author    = {Suilen, Marnix and Jansen, Nils and Cubuktepe, Murat and Topcu, Ufuk},
  booktitle = {Proceedings of the Twenty-Ninth International Joint Conference on
               Artificial Intelligence, {IJCAI-20}},
  publisher = {International Joint Conferences on Artificial Intelligence Organization},
  editor    = {Christian Bessiere},
  pages     = {4113--4120},
  year      = {2020},
  month     = {7},
  note      = {Main track},
  doi       = {10.24963/ijcai.2020/569},
  url       = {https://doi.org/10.24963/ijcai.2020/569},
}

@inproceedings{meuleau_solving_2013,
author = {Meuleau, Nicolas and Kim, Kee-Eung and Kaelbling, Leslie Pack and Cassandra, Anthony R.},
title = {Solving POMDPs by searching the space of finite policies},
year = {1999},
isbn = {1558606149},
publisher = {Morgan Kaufmann Publishers Inc.},
address = {San Francisco, CA, USA},
booktitle = {Proceedings of the Fifteenth Conference on Uncertainty in Artificial Intelligence},
pages = {417–426},
numpages = {10},
location = {Stockholm, Sweden},
series = {UAI'99}
}

@article{white_parameter_1986,
	title = {Parameter {Imprecision} in {Finite} {State}, {Finite} {Action} {Dynamic} {Programs}},
	volume = {34},
	issn = {0030-364X},
	url = {https://www.jstor.org/stable/170676},
	number = {1},
	urldate = {2024-11-15},
	journal = {Operations Research},
	author = {White, Chelsea C. and El-Deib, Hany K.},
	year = {1986},
	note = {Publisher: INFORMS},
	pages = {120--129},
}

@inproceedings{jonsson_specification_1991,
	title = {Specification and refinement of probabilistic processes},
	url = {https://ieeexplore.ieee.org/document/151651},
	doi = {10.1109/LICS.1991.151651},
	urldate = {2025-05-14},
	booktitle = {[1991] {Proceedings} {Sixth} {Annual} {IEEE} {Symposium} on {Logic} in {Computer} {Science}},
	author = {Jonsson, B. and Larsen, K.G.},
	month = jul,
	year = {1991},
	pages = {266--277},
}

@phdthesis{hauskrecht1997planning,
  title={Planning and control in stochastic domains with imperfect information},
  author={Hauskrecht, Milos},
  year={1997},
  school={Massachusetts Institute of Technology}
}

@InProceedings{10.1007/978-3-319-46520-3_4,
author="Quatmann, Tim
and Dehnert, Christian
and Jansen, Nils
and Junges, Sebastian
and Katoen, Joost-Pieter",
editor="Artho, Cyrille
and Legay, Axel
and Peled, Doron",
title="Parameter Synthesis for Markov Models: Faster Than Ever",
booktitle="Automated Technology for Verification and Analysis",
year="2016",
publisher="Springer International Publishing",
address="Cham",
pages="50--67",
isbn="978-3-319-46520-3"
}

@article{MADANI20035,
title = {On the undecidability of probabilistic planning and related stochastic optimization problems},
journal = {Artificial Intelligence},
volume = {147},
number = {1},
pages = {5-34},
year = {2003},
note = {Planning with Uncertainty and Incomplete Information},
issn = {0004-3702},
doi = {https://doi.org/10.1016/S0004-3702(02)00378-8},
url = {https://www.sciencedirect.com/science/article/pii/S0004370202003788},
author = {Omid Madani and Steve Hanks and Anne Condon}
}

@article{c785ceef-c26f-38c4-b878-e08088df3cc3,
 ISSN = {0364765X, 15265471},
 URL = {http://www.jstor.org/stable/3689975},
 author = {Christos H. Papadimitriou and John N. Tsitsiklis},
 journal = {Mathematics of Operations Research},
 number = {3},
 pages = {441--450},
 publisher = {INFORMS},
 title = {The Complexity of Markov Decision Processes},
 urldate = {2025-06-20},
 volume = {12},
 year = {1987}
}

@article{cubuktepe2021convex,
  title={Convex optimization for parameter synthesis in MDPs},
  author={Cubuktepe, Murat and Jansen, Nils and Junges, Sebastian and Katoen, Joost-Pieter and Topcu, Ufuk},
  journal={IEEE Transactions on Automatic Control},
  volume={67},
  number={12},
  pages={6333--6348},
  year={2021},
  publisher={IEEE}
}

@book{bellman57,
 ISBN = {9780691146683},
 URL = {http://www.jstor.org/stable/j.ctv1nxcw0f},
 author = {Richard Bellman and Stuart Dreyfus},
 publisher = {Princeton University Press},
 title = {Dynamic Programming},
 urldate = {2025-06-22},
 volume = {33},
 year = {2010}
}

@incollection{kroening_prophesy_2015,
	address = {Cham},
	title = {{PROPhESY}: {A} {PRObabilistic} {ParamEter} {SYnthesis} {Tool}},
	volume = {9206},
	isbn = {978-3-319-21689-8 978-3-319-21690-4},
	shorttitle = {{PROPhESY}},
	url = {http://link.springer.com/10.1007/978-3-319-21690-4_13},
	language = {en},
	urldate = {2025-07-05},
	booktitle = {Computer {Aided} {Verification}},
	publisher = {Springer International Publishing},
	author = {Dehnert, Christian and Junges, Sebastian and Jansen, Nils and Corzilius, Florian and Volk, Matthias and Bruintjes, Harold and Katoen, Joost-Pieter and Ábrahám, Erika},
	editor = {Kroening, Daniel and Păsăreanu, Corina S.},
	year = {2015},
	doi = {10.1007/978-3-319-21690-4_13},
	note = {Series Title: Lecture Notes in Computer Science},
	pages = {214--231},
}

@article{mazouz2024piecewise,
  title={Piecewise stochastic barrier functions},
  author={Mazouz, Rayan and Mathiesen, Frederik Baymler and Laurenti, Luca and Lahijanian, Morteza},
  journal={arXiv preprint arXiv:2404.16986},
  year={2024}
}

@book{puterman2014markov,
  title={Markov decision processes: discrete stochastic dynamic programming},
  author={Puterman, Martin L},
  year={2014},
  publisher={John Wiley \& Sons}
}

@article{sondik1978,
 ISSN = {0030364X, 15265463},
 URL = {http://www.jstor.org/stable/169635},
 author = {Edward J. Sondik},
 journal = {Operations Research},
 number = {2},
 pages = {282--304},
 publisher = {INFORMS},
 title = {The Optimal Control of Partially Observable Markov Processes Over the Infinite Horizon: Discounted Costs},
 urldate = {2025-09-08},
 volume = {26},
 year = {1978}
}

@article{iyengar2005robust,
  title={Robust dynamic programming},
  author={Iyengar, Garud N},
  journal={Mathematics of Operations Research},
  volume={30},
  number={2},
  pages={257--280},
  year={2005},
  publisher={INFORMS}
}

@article{nilim2005robust,
  title={Robust control of Markov decision processes with uncertain transition matrices},
  author={Nilim, Arnab and El Ghaoui, Laurent},
  journal={Operations Research},
  volume={53},
  number={5},
  pages={780--798},
  year={2005},
  publisher={INFORMS}
}

@article{hutschenreiter_parametric_2017,
	title = {Parametric {Markov} {Chains}: {PCTL} {Complexity} and {Fraction}-free {Gaussian} {Elimination}},
	volume = {256},
	issn = {2075-2180},
	shorttitle = {Parametric {Markov} {Chains}},
	url = {http://arxiv.org/abs/1709.02093},
	doi = {10.4204/EPTCS.256.2},
	urldate = {2025-09-15},
	journal = {Electronic Proceedings in Theoretical Computer Science},
	author = {Hutschenreiter, Lisa and Baier, Christel and Klein, Joachim},
	month = sep,
	year = {2017},
	note = {arXiv:1709.02093 [cs]},
	pages = {16--30},
}

@INPROCEEDINGS{9081758,
  author={Owen, Michael P. and Panken, Adam and Moss, Robert and Alvarez, Luis and Leeper, Charles},
  booktitle={2019 IEEE/AIAA 38th Digital Avionics Systems Conference (DASC)}, 
  title={ACAS Xu: Integrated Collision Avoidance and Detect and Avoid Capability for UAS}, 
  year={2019},
  volume={},
  number={},
  pages={1-10},
  doi={10.1109/DASC43569.2019.9081758}}

@article{BAIER2020104504,
title = {Parametric Markov chains: PCTL complexity and fraction-free Gaussian elimination},
journal = {Information and Computation},
volume = {272},
pages = {104504},
year = {2020},
note = {GandALF 2017},
issn = {0890-5401},
doi = {https://doi.org/10.1016/j.ic.2019.104504},
url = {https://www.sciencedirect.com/science/article/pii/S0890540119301208},
author = {Christel Baier and Christian Hensel and Lisa Hutschenreiter and Sebastian Junges and Joost-Pieter Katoen and Joachim Klein}
}

@inproceedings{pineau2003point,
  title={Point-based value iteration: An anytime algorithm for POMDPs},
  author={Pineau, Joelle and Gordon, Geoff and Thrun, Sebastian and others},
  booktitle={Ijcai},
  volume={3},
  pages={1025--1032},
  year={2003}
}

@article{HELDMANN2016308,
title = {Site selection and traverse planning to support a lunar polar rover mission: A case study at Haworth Crater},
journal = {Acta Astronautica},
volume = {127},
pages = {308-320},
year = {2016},
issn = {0094-5765},
doi = {https://doi.org/10.1016/j.actaastro.2016.06.014},
url = {https://www.sciencedirect.com/science/article/pii/S0094576515301934},
author = {Jennifer L. Heldmann and Anthony Colaprete and Richard C. Elphic and Ben Bussey and Andrew McGovern and Ross Beyer and David Lees and Matt Deans}
}

@inproceedings{colaprete2019overview,
  title={An overview of the volatiles investigating polar exploration rover (viper) mission},
  author={Colaprete, Anthony and Andrews, Daniel and Bluethmann, William and Elphic, Richard C and Bussey, Ben and Trimble, Jay and Zacny, Kris and Captain, Janine E},
  booktitle={AGU fall meeting abstracts},
  volume={2019},
  pages={P34B--03},
  year={2019}
}

@INPROCEEDINGS{VIPERMDP,
  author={Banerjee, Somrita and Balaban, Edward and Shirley, Mark and Bradner, Kevin and Pavone, Marco},
  booktitle={2024 IEEE Aerospace Conference}, 
  title={Contingency Planning Using Bi-level Markov Decision Processes for Space Missions}, 
  year={2024},
  volume={},
  number={},
  pages={1-9},
  doi={10.1109/AERO58975.2024.10521281}}

@INPROCEEDINGS{ICAA,
  author={Kraske, Benjamin D. and Saksena, Anshu and Buczak, Anna L. and Sunberg, Zachary N.},
  booktitle={2023 IEEE International Conference on Assured Autonomy (ICAA)}, 
  title={Explanation Through Reward Model Reconciliation using POMDP Tree Search}, 
  year={2023},
  volume={},
  number={},
  pages={137-140},
  doi={10.1109/ICAA58325.2023.00027}}

@article{sharma2024risk,
  title={Risk-Aware markov decision process contingency management autonomy for uncrewed aircraft systems},
  author={Sharma, Prashin and Kraske, Benjamin and Kim, Joseph and Laouar, Zakariya and Sunberg, Zachary and Atkins, Ella},
  journal={Journal of aerospace information systems},
  volume={21},
  number={3},
  pages={234--248},
  year={2024},
  publisher={American Institute of Aeronautics and Astronautics}
}

@inproceedings{ijcai2025p958,
  title     = {A Finite-State Controller Based Offline Solver for Deterministic POMDPs},
  author    = {Schutz, Alex and You, Yang and Mattamala, Matías and Caliskanelli, Ipek and Lacerda, Bruno and Hawes, Nick},
  booktitle = {Proceedings of the Thirty-Fourth International Joint Conference on
               Artificial Intelligence, {IJCAI-25}},
  publisher = {International Joint Conferences on Artificial Intelligence Organization},
  editor    = {James Kwok},
  pages     = {8617--8625},
  year      = {2025},
  month     = {8},
  note      = {Main Track},
  doi       = {10.24963/ijcai.2025/958},
  url       = {https://doi.org/10.24963/ijcai.2025/958},
}

@Inbook{Bai2011,
author="Bai, Haoyu
and Hsu, David
and Lee, Wee Sun
and Ngo, Vien A.",
editor="Hsu, David
and Isler, Volkan
and Latombe, Jean-Claude
and Lin, Ming C.",
title="Monte Carlo Value Iteration for Continuous-State POMDPs",
bookTitle="Algorithmic Foundations of Robotics IX: Selected Contributions of the Ninth International Workshop on the Algorithmic Foundations of Robotics",
year="2011",
publisher="Springer Berlin Heidelberg",
address="Berlin, Heidelberg",
pages="175--191",
isbn="978-3-642-17452-0",
doi="10.1007/978-3-642-17452-0_11",
url="https://doi.org/10.1007/978-3-642-17452-0_11"
}

@article{ou2025sequential,
  title={Sequential decision-making under uncertainty: a robust MDPs review},
  author={Ou, Wenfan and Bi, Sheng},
  journal={Annals of Operations Research},
  volume={353},
  number={3},
  pages={1239--1285},
  year={2025},
  publisher={Springer}
}

@article{norman2017verification,
  title={Verification and control of partially observable probabilistic systems},
  author={Norman, Gethin and Parker, David and Zou, Xueyi},
  journal={Real-Time Systems},
  volume={53},
  number={3},
  pages={354--402},
  year={2017},
  publisher={Springer}
}

@article{demirci2025sensitivity,
  title={Sensitivity of Filter Kernels and Robustness Bounds to Transition and Measurement Kernel Perturbations in Partially Observable Stochastic Control},
  author={Demirci, Yunus Emre and Kara, Ali Devran and Y{\"u}ksel, Serdar},
  journal={arXiv preprint arXiv:2508.10658},
  year={2025}
}

@article{doi:10.1137/100808976,
author = {Y\"{u}ksel, Serdar and Linder, Tam\'{a}s},
title = {Optimization and Convergence of Observation Channels in Stochastic Control},
journal = {SIAM Journal on Control and Optimization},
volume = {50},
number = {2},
pages = {864-887},
year = {2012},
doi = {10.1137/100808976},
URL = { https://doi.org/10.1137/100808976},
eprint = {https://doi.org/10.1137/100808976}
}

\appendix
\newpage

\section{Proof of Theorem \ref{thm:history-node}} \label{sec:history-node}

\equivalence*

\begin{proof}
    As the value of $\pi$ on the POMDP model $\mathcal{M}$ is a constant and $\threshold$ is an input, it is sufficient to show that \cref{eq:inner} and \cref{eq:innernode} have the same objective value, referred to as worst-case value here.
    
   We first consider the finite-horizon case and then extend to the infinite-horizon case.

    \noindent \textbf{Finite Horizon}    

    Formally, we will show

    \begin{equation} \label{eq:nodeHistorySame}
        \min_{\ZH\in \prball{\ZH_0, \delta}} \: V^{\pi,d}_{\ZH}(h_0) = \min_{\Z^d \in \prball{\Z_0, \delta}, \ ...\ , \ \Z^1 \in \prball{\Z_0, \delta}} \: V^{\pi,d}_{\Z}(h_0).
    \end{equation}

    Consider an arbitrary $t$-step history, $h$, in $H(n)$. The minimum non-sticky $d$-step value for history from state $s$ is given by the following Bellman equation:

    \begin{equation}\label{eq:historyBellman}
        V^{\pi,d}_{\ZH \min}(s,h) = R(s,\alpha(n_h(h)))+ \gamma \min_{\ZH^d(h)
        \in D_{h}} \sum_{s'\in S} T(s'|s,\alpha(n_h(h))\sum_{o \in O} \ZH^d(o|\alpha(n_h(h)),s',h)V^{\pi,d-1}_{\ZH \min}(s',h\cup(\alpha(n_h(h)),o)).
    \end{equation}
      
    where $n_h(h)$ is the FSC node corresponding to $h$ and the domain of minimization is
    \begin{equation}\label{eq:zhatdomain}
        D_{h} = \left\{\prball{\ZH_0(\alpha(n_h(h)),s',h), \delta} \forall s' \in S \right\},
    \end{equation}
    and the probability ball defined by $\delta$ around $\ZH$ is defined 
     \begin{equation}
        \begin{aligned}
            \prball{\ZH_0(\alpha(n),s',h), \delta} &= \left\{\ZH'(o|\alpha(n),s',h) \; \Big| \; \left|\ZH_0(o|\alpha(n),s',h)-\ZH'(\alpha(n),s',h)\right| \leq \delta,\right.\\
            &\ZH_0(o|\alpha(n),s',h)>0 \implies \ZH'(o|\alpha(n),s',h) \geq \epsilon_p,\\
            &\ZH_0(o|\alpha(n),s',h)=0 \implies \ZH'(o|\alpha(n),s',h)=0, \; \forall o \in O, \left. \sum_{o\in O} \ZH'(o|\alpha(n),s',h) = 1  \right\}.
        \end{aligned}
    \end{equation}

    Now, consider the $d$-step minimum non-sticky value from some node $n$ and state $s$, 
    \begin{equation} \label{eq:NodeMinVal}
        V^{\pi,d}_{\Z \min}(s,n_h(h)) = R(s,\alpha(n))+ \gamma \min_{\Z^d(n) \in D_n} \sum_{s'\in S} T(s'|s,\alpha(n))\sum_{o \in O} \Z^d(o|\alpha(n),s',n)V^{\pi,d-1}_{\Z \min}(s',T_{FSC}(n, o))
    \end{equation}
    where the domain of minimization for $\Z$ is
    \begin{equation}
        D_{n} = \left\{\prball{\Z_0(\alpha(n),s',n), \delta} \forall s' \in S \right\},
    \end{equation}
    and the probability ball defined by $\delta$ around $\Z$ is
    \begin{equation}
        \begin{aligned}
        \prball{\Z_0(\alpha(n),s',n), \delta} &= \left\{\Z'(\alpha(n),s',n) \; \Big| \; \left|\Z_0(\alpha(n),s',n)-\Z'(\alpha(n),s',n)\right| \leq \delta,\right.\\
            &\Z_0(o|\alpha(n),s',n)>0 \implies \Z'(o|\alpha(n),s',n) \geq \epsilon_p,\\
            &\left.\Z_0(o|\alpha(n),s',n)=0 \implies \Z'(o|\alpha(n),s',n)=0 \; \forall o \in O ,\sum_{o\in O} \Z'(o|\alpha(n),s',n) = 1 \right\}.            
        \end{aligned}
    \end{equation}

    By induction, we will show
    \begin{equation} \label{eq:allHNhavesameval}
        V^{\pi,d}_{\ZH \min}(s,h) = V^{\pi,d}_{\Z \min}(s,n)  \quad \forall h \in H(n), \forall n \in \pi, \forall s \in S, \forall d.
    \end{equation}
    Note, we are only concerned with $h \in H(n) \; \forall n \in \pi$ as these are the only histories reachable under $\pi$ from $b_0$.
    
    For base case $d=1$
    \begin{equation}\label{eq:baseCase}
        V^{\pi,1}(s,h)_{\ZH \min}  = R(s,\alpha(n_h(h))) = R(s,\alpha(n)) = V^{\pi,1}(s,n)_{\Z \min} \; \forall h \in H(n), \forall n \in \pi, \forall s \in S,
    \end{equation}
    and \cref{eq:allHNhavesameval} holds.

    We now prove inductively that if the $d-1$-step value is the same in the history- and node-dependent cases, then the $d$-step value will also be the same in both cases, that is, if
    \begin{equation} \label{eq:inductive_assumption}
        V^{\pi,d-1}_{\ZH \min}(s,h_{t+1}) = V^{\pi,d-1}_{\Z \min}(s,n_h(h_{t+1}))  \quad \quad \forall h_{t+1}, \forall s \in S,
    \end{equation}
    then
    \begin{equation} \label{eq:inductive_result}
        V^{\pi,d}_{\ZH \min}(s,h_{t}) = V^{\pi,d}_{\Z \min}(s,n_h(h_{t}))  \quad \quad \forall h_{t}, \forall s \in S.
    \end{equation}\\
   
    We will prove this by relating a history-dependent Bellman equation to a node-dependent Bellman equation and showing that the optimization domains are identical.

    Since $n_h(h) = n$; $n_h\left(h \cup (\alpha(n_h(h)), o)\right) = T_{FSC}(n, o)$; and \cref{eq:inductive_assumption} holds, we can substitute the node-dependent value function for $d-1$ into the Bellman equation (\ref{eq:historyBellman}), yielding
    \begin{equation}\label{eq:historyMinVal}
       V^{\pi,d}_{\ZH \min}(s,h) = R(s,\alpha(n))+ \gamma \min_{\ZH^d(h) \in D_{h}} \sum_{s'\in S} T(s'|s,\alpha(n))\sum_{o \in O} \ZH^d(o|\alpha(n),s',h)V^{\pi,d-1}_{\Z \min}(s',T_{FSC}(n, o)).
    \end{equation}
    
    By construction initially, 
    \begin{equation}\label{eq:sharedObsProbsNew}
        \ZH_0(o|a,s',h)=\Z_0(o|a,s',n) =Z(o|a,s') \quad \forall \; o \in O, a \in A,s' \in S, h \in H, n \in \pi.
    \end{equation}
    Observe that $\delta$ is the same for all elements of the observation function, then given \cref{eq:sharedObsProbsNew}, we have the following corollary.
    \begin{restatable}[]{corollary}{BoundsSame}
        \label{cor:boundssame}
        Given $\delta$, the upper and lower bounds on $\ZH$, $\Z$, and $Z$ established by $\prball{\cdot,\delta}$ for a given $s,a,o$ are the same initially across any other dimensions, i.e. $h$ and $n$.
    \end{restatable}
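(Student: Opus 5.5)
The plan is to write the bounds induced by $\prball{\cdot,\delta}$ explicitly and observe that they are functions only of the nominal probability at the triple $(o,a,s')$, together with the global quantities $\delta$ and $\epsilon_p$. Fix $o\in O$, $a\in A$, $s'\in S$. From the definition of the probability ball (\cref{eq:pball_sticky} for $Z$, \cref{eq:pball} for $\ZH$, and the ball used in the node-dependent problem for $\Z$), the admissible values at this coordinate form an interval. Its lower end is $\max(X_0(o|a,s',\cdot)-\delta,\epsilon_p)$ when the nominal entry is positive and $0$ otherwise. Its upper end is $\min(X_0(o|a,s',\cdot)+\delta,1)$ when the nominal entry is positive and $0$ otherwise. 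Here $X_0$ stands for $Z_0$, $\ZH_0$ or $\Z_0$, and the trailing argument is empty, $h$, or $n$ respectively. These are exactly the expressions $Z^{\downarrow}$, $Z^{\uparrow}$ and $\Z^{\downarrow}$, $\Z^{\uparrow}$ already written out in the proofs of \cref{thm:pla_inner} and \cref{thm:ipe_inner}.

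The second step is to substitute \cref{eq:sharedObsProbsNew}, which gives $\ZH_0(o|a,s',h)=\Z_0(o|a,s',n)=Z(o|a,s')$ for every $h\in H$ and $n\in\pi$. Because the single scalar $\delta$ and the single constant $\epsilon_p$ are applied uniformly to every entry, each bound is the same function applied to the same nominal number. Hence the lower bounds coincide for all $h$ and $n$, and so do the upper bounds. The support condition ($X_0=0\Rightarrow X'=0$) also coincides, since it depends only on whether $Z(o|a,s')>0$.

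Finally, I will remark that more than the bounds agree. For each fixed $(a,s')$, the feasible set is the intersection of these identical coordinate intervals with the same simplex constraint $\sum_o X'(o|a,s',\cdot)=1$. The slices of $\prball{\ZH_0,\delta}$ at any $h$ and of $\prball{\Z_0,\delta}$ at any $n$ are therefore the same subset of $\mathbf{D}(O)$. In particular, whenever $h\in H(n)$, the domains $D_h$ and $D_n$ in \cref{eq:historyMinVal} and \cref{eq:NodeMinVal} are identical, which is what the induction step needs.

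I do not expect a real obstacle. The only care needed is the case split on whether the nominal probability is zero, and noting that the phrase ``initially'' refers to the nominal functions. The deviated functions may differ across $h$ and $n$; only the constraint sets coincide.
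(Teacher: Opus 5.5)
Your proposal is correct and follows the paper's route. The paper justifies the corollary in one line: $\delta$ (and $\epsilon_p$) is applied uniformly to every entry, and \cref{eq:sharedObsProbsNew} makes the nominal entries coincide across $h$ and $n$; your explicit case split on zero nominal entries and your remark that the full slices coincide (so $D_h = D_n$ for $h \in H(n)$) just spell out what the paper then uses in the induction step.
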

    
    From \cref{cor:boundssame} we have that
   
    \begin{align*}
        D_h = D_n.
    \end{align*}
    
    Then, it follows that
    \begin{equation}
        \begin{aligned}
            V^{\pi,d}_{\ZH \min}(s,h) &= R(s,\alpha(n))+ \gamma \min_{\ZH^d(h) \in D_h} \sum_{s'\in S} T(s'|s,\alpha(n)\sum_{o \in O} \ZH^d(o|\alpha(n),s',h)V^{\pi,d-1}_{\Z \min}(s',T_{FSC}(n, o))\\
            &= R(s,\alpha(n))+ \gamma \min_{\Z^d(n) \in D_n} \sum_{s'\in S} T(s'|s,\alpha(n)\sum_{o \in O} \Z^d(o|\alpha(n),s',n)V^{\pi,d-1}_{\Z \min}(s',T_{FSC}(n, o))\\
            &= V^{\pi,d}_{\Z \min}(s,n(h)).
        \end{aligned}
    \end{equation}

    Thus, the inductive step (\cref{eq:inductive_assumption,eq:inductive_result}) has been proven, and, by induction, \cref{eq:allHNhavesameval} holds for all $d$.

    Then, since each history-dependent observation distribution in \cref{eq:inner} is assigned a unique value and \cref{cor:boundssame} holds,
    \begin{equation}\label{eq:min_bell_equiv}
         \min_{\ZH\in \prball{\ZH_0, \delta}} \: V^{\pi,d}_{\ZH}(h_0) = \sum_{s\in S} b_0(s)V^{\pi,d}_{\ZH \min}(s,h_0).
    \end{equation}

    Likewise since each node-dependent distribution in \cref{eq:innernode} is assigned a unique value at each $d$,
    \begin{equation} \label{eq:min_bell_equiv2}
        \min_{\Z^d \in \prball{\Z_0, \delta}, \ ...\ , \ \Z^1 \in \prball{\Z_0, \delta}} \: V^{\pi,d}_{\Z}(h_0) = \sum_{s\in S} b_0(s)V^{\pi,d}_{\Z \min}(s,n_0(h_0)),
    \end{equation} 
    therefore \cref{eq:nodeHistorySame} holds since for a shared $b_0$ the right sides of \cref{eq:min_bell_equiv} and \cref{eq:min_bell_equiv2} have the same value by \cref{eq:allHNhavesameval}.
    
    \noindent \textbf{Infinite Horizon}
    
    Formally, we will show

    \begin{equation} \label{eq:nodeHistorySameInf}
        \min_{\ZH\in \prball{\ZH_0, \delta}} \: V^{\pi,\infty}_{\ZH}(h_0) = \min_{\Z\in \prball{\Z_0, \delta}} \: V^{\pi,\infty}_{\Z}(h_0).
    \end{equation}
    
    We begin by showing that the infinite-horizon extensions of \cref{eq:historyBellman} and \cref{eq:NodeMinVal} are the same for some $s,n(h)$, that is
    \begin{equation} \label{eq:allHNhavesameval_inf}
        V^{\pi,\infty}_{\ZH \min}(s,h) = V^{\pi,\infty}_{\Z \min}(s,n)  \quad \forall h \in H(n), \forall n \in \pi, \forall s \in S.
    \end{equation}

    We will first show that $V^{\pi,d}_{\Z \min} \rightarrow V^{\pi,\infty}_{\Z \min}$ and $V^{\pi,d}_{\ZH \min} \rightarrow V^{\pi,\infty}_{\ZH \min}$ as $d \rightarrow \infty$. Given $V^{\pi,d}_{\Z \min} = V^{\pi,d}_{\ZH \min}$ as shown in the finite-horizon proof, this implies that $V^{\pi,\infty}_{\Z \min} = V^{\pi,\infty}_{\ZH \min}$.
  
    We first define the node-based worst-case backup equation $B_{n}$ and then show it is a contraction mapping. Formally,
    \begin{equation}
        \left[B_{n}V^{d}\right](s,n) = R(s,\alpha(n))+ \gamma \min_{\Z^d \in D_n} \sum_{s'\in S} T(s'|s,\alpha(n))\sum_{o \in O} \Z^d(o|\alpha(n),s',n)V^{d-1}(s',T_{FSC}(n, o)).
    \end{equation}

    We now show that $B_{n}$ is a contraction mapping which causes $V$ to converge to a fixed point. Formally, we will show for $\gamma = [0,1)$

    \begin{equation}
        \left\|B_{n}V_1-B_{n}V_2 \right\|_\infty \leq  \gamma \left\| V_1-V_2 \right\|_\infty \forall V_1,V_2.       
    \end{equation}

    For any $s,n$,
    \begin{equation}
        \begin{aligned}
            &\left| B_{n}V^d_1(s,n)-B_{n}V^d_2(s,n) \right|\\
            &= \left|  \gamma \min_{\Z^d \in D_n} \sum_{s'\in S} T(s'|s,\alpha(n))\sum_{o \in O} \Z^{d}(o|\alpha(n),s',n)V^{d-1}_1(s,n) - \gamma \min_{\Z^d \in D_n} \sum_{s'\in S} T(s'|s,\alpha(n)\sum_{o \in O} \Z^d(o|\alpha(n),s',n)V^{d-1}_2(s',n) \right|\\ 
            &\leq \gamma \max_{\Z^d \in D_n} \sum_{s'\in S} T(s'|s,\alpha(n))\sum_{o \in O} \Z^d(o|\alpha(n),s',n) \left| V^{d-1}_1(s',n) - V^{d-1}_2(s',n) \right|\\  
            &\leq \gamma \left\| V^{d-1}_1 - V^{d-1}_2 \right\|_\infty.\\  
        \end{aligned}
    \end{equation}

    Then, \cref{eq:NodeMinVal} is a contraction mapping and repeated application will lead to a unique value by the Banach fixed-point theorem and $V^{d,\pi}_{\Z \min}(s,n) \rightarrow V^{\infty,\pi}_{\Z \min}(s,n)$ as $d \rightarrow \infty$ for any $s,n$. This then implies that there is a stationary $\Z$ which is a minimizer with respect to $V^{\infty,\pi}_{\Z \min}$, that is
    \begin{equation}
        V^{\pi,\infty}_{\Z \min}(s,n(h)) = R(s,\alpha(n))+ \gamma \min_{\Z \in D_n} \sum_{s'\in S} T(s'|s,\alpha(n)\sum_{o \in O} \Z(o|\alpha(n),s',n)V^{\pi,\infty}_{\Z \min}(s',T_{FSC}(n, o)).
    \end{equation}
    
    We now define the history-based worst-case backup equation $B_h$ and then show it is a contraction mapping. Formally,
    \begin{equation}
        \left[B_{h} V^d\right](s,h) = R(s,\alpha(n_h(h)))+ \gamma \min_{\ZH^d 
        \in D_{h}} \sum_{s'\in S} T(s'|s,\alpha(n_h(h)))\sum_{o \in O} \ZH^d(o|\alpha(n_h(h)),s',h)V^{d-1}(s',h\cup(\alpha(n_h(h)),o)).
    \end{equation}
    
    We will show for $\gamma = [0,1)$

    \begin{equation}
        \left\|B_{h}V_1-B_{h}V_2 \right\|_\infty \leq  \gamma \left\| V_1-V_2 \right\|_\infty \forall V_1,V_2.
    \end{equation}

    For any $s$ and finite $h$, with $h' = h\cup(\alpha(n_h(h)),o)$, $a=\alpha(n(h))$
    \begin{equation}
        \begin{aligned}
            &\left| B_{h}V^d_1(s,h)-B_{h}V^d_2(s,h) \right|\\
            &= \left|  \gamma \min_{\Z^d \in D_h} \sum_{s'\in S} T(s'|s,a)\sum_{o \in O} \Z^{d}(o|a,s',h)V^{d-1}_1(s,h') - \gamma \min_{\Z^d \in D_h} \sum_{s'\in S} T(s'|s,a)\sum_{o \in O} \Z^d(o|a,s',h)V^{d-1}_2(s',h') \right|\\ 
            &\leq \gamma \max_{\Z^d \in D_h} \sum_{s'\in S} T(s'|s,a)\sum_{o \in O} \Z^d(o|a,s',n) \left| V^{d-1}_1(s',h') - V^{d-1}_2(s',h') \right|\\  
            &\leq \gamma \left\| V^{d-1}_1 - V^{d-1}_2 \right\|_\infty.\\  
        \end{aligned}
    \end{equation}

    Then, \cref{eq:historyMinVal} is a contraction mapping and repeated application will lead to a unique value by the Banach fixed-point theorem and $V^{\pi,d}_{\ZH \min}(s,h) \rightarrow V^{\pi,\infty}_{\ZH \min}(s,h)$ as $d \rightarrow \infty$ for any $s,h$.

    By the finite horizon proof, $V^{\pi,d}_{\ZH \min}(s,h) = V^{\pi,d}_{\Z \min}(s,n_h(h)) \; \forall s \in S, h \in H$. Then, since both $V^{\pi,d}_{\ZH \min}(s,h)$ and $V^{\pi,d}_{\Z \min}(s,n_h(h))$ converge, their limit must be the same, i.e. $V^{\pi,\infty}_{\ZH \min}(s,h) = V^{\pi,\infty}_{\Z \min}(s,n_h(h)) \; \forall s \in S, h \in H$.

    Then, since each history-dependent observation distribution in \cref{eq:inner} is assigned a unique value and \cref{cor:boundssame} holds,
    \begin{equation}\label{eq:min_bell_equiv_inf}
         \min_{\ZH\in \prball{\ZH_0, \delta}} \: V^{\pi,\infty}_{\ZH}(h_0) = \sum_{s\in S} b_0(s)V^{\pi,\infty}_{\ZH \min}(s,h_0).
    \end{equation}

    Likewise,
    \begin{equation}
        \min_{\Z \in \prball{\Z_0, \delta}} \: V^{\pi,\infty}_{\Z}(h_0) = \sum_{s\in S} b_0(s)V^{\pi,\infty}_{\Z \min}(s,n_0(h_0)),
    \end{equation} 
    therefore \cref{eq:nodeHistorySameInf} holds.
    
\end{proof}

\section{Proof of Proposition \ref{lm:bisect_sound}}\label{sec:bisect_sound}
\bisectsound*

\begin{proof}
    Modified bisection search is initialized with $\delta_a = 0$ and $\delta_b = 1$. 
    In the case where for initial $\delta_a,\delta_b$ $sign(f(\delta_a)) \neq sign(f(\delta_b))$, \modifiedBisectionShort{} will always return $\delta_a$. By construction, $\delta_a$ will only be assigned $\delta_c$ such that $sign(f(\delta_c)) = sign(f(\delta_a))$ or $f(\delta_c) = 0$. Then, $f(\delta_a)$ at any iteration cannot be greater than 0.  In the case where for initial $\delta_a,\delta_b$ $sign(f(\delta_a)) = sign(f(\delta_b))$ and $f(\delta_b)$ is returned, $f(\delta_a)<0 \implies f(\delta_b)<0$.
\end{proof}

\section{Proof of Proposition \ref{lm:bisectconverge} }\label{sec:bisect_converge}
\bisectconverge*

\begin{proof}
    By definition, $f(\delta^*) \leq 0$. This gives two possible cases for initial $f(\delta_a)$ and $f(\delta_b)$:
    \begin{itemize}
        \item  $sign(f(\delta_a)) \neq sign(f(\delta_b))$: By construction, $\delta_a \in \{\delta | f(\delta)\leq 0\}$. Then, by definition, $\delta_a \leq \delta^*$. There are two cases for $f(\delta_b)$:
    \begin{itemize}
        \item $f(\delta_b) > 0$: Since $f(\delta_b) > 0$ and $f(\delta_*) \leq 0$, $f(\delta_*) < f(\delta_b)$. Since $f$ is monotone non-decreasing, $f(\delta_*) < f(\delta_b) \implies \delta^*<\delta_b$. Thus, $\delta_a \leq \delta^* < \delta_b$. Since the interval will half every iteration, $\exists N \text{ st } \delta_b-\delta_a \leq \epsilon_{\modifiedBisectionShort{}}= \frac{1}{2^{N-1}}  \implies \delta^*-\delta_a < \epsilon_{\modifiedBisectionShort{}}$.
        \item $f(\delta_b) = 0$: Then, trivially $f(\delta_*) = f(\delta_b) = 0$ and only $\delta_a$ will change as $f(\delta_c) \leq 0$ by monotonicity of $f$. Thus, $\delta_a \leq \delta^* = \delta_b=1$. Since the interval will half every iteration, $\exists N \text{ st } \delta_b-\delta_a \leq \epsilon_{\modifiedBisectionShort{}}= \frac{1}{2^{N-1}} \implies \delta^*-\delta_a < \epsilon_{\modifiedBisectionShort{}}$.
    \end{itemize}
    \item $sign(f(\delta_a)) = sign(f(\delta_b))$: By assumption $f(\delta_a)\leq0$, since the signs are the same, $f(\delta_b)\leq0$. By assumption $f$ is monotone non-decreasing in $\delta$, i.e., $f(\delta_a) \leq f(\delta_b)$. Then, $f(\delta_b)$ is an upper bound on $f$ on the interval $[0,1]$ and since $f(\delta_b)\leq0$, $\delta_b=1$ is the largest admissible $\delta$ on $[0,1]$, i.e., $\delta^*=\delta_b=1$. The algorithm terminates immediately as defined on lines 2-4.
\end{itemize}
\end{proof}

\section{Case Study \rev{POMDP Definitions}}\label{sec:probdefs}

\subsection{Toy Rover}
The Toy Rover POMDP is a tuple $(S,A,O,T,Z,R,\gamma,b_0)$ where $S=\{\text{terminal} , \{ \text{large},\text{small}\} \times \{\text{smooth},\text{angular}\} \}$, $A=\{\text{measure size},\text{measure texture},\text{go through},\text{go around}\}$, $O= \{\text{true},\text{false} \}$,
\begin{equation*} T(s'|s,a)=
    \begin{aligned}
        \begin{cases} 
            1.0  &\text{ if } s' = s,  a \notin \{\text{go through},\text{go around}\}\\
            1.0 &\text{ if } s' = \text{terminal},  a \in \{\text{go through},\text{go around}\}\\
            1.0 &\text{ if } s'=\text{terminal}, s=\text{terminal}\\
            0.0 &\text{ o.w.}
        \end{cases},
    \end{aligned}
\end{equation*}
\begin{equation*} Z(o|a,s') = 
    \begin{aligned}
    \begin{cases}
         0.99 &\text{ if } o=\text{true}, s'=\text{large}, a=\text{measure size}\\
         0.01 &\text{ if } o=\text{false}, s'=\text{large}, a=\text{measure size}\\
         0.01 &\text{ if } o=\text{true}, s'=\text{small}, a=\text{measure size}\\
         0.99 &\text{ if } o=\text{false}, s'=\text{small}, a=\text{measure size}\\
         0.99 &\text{ if } o=\text{true}, s'=\text{smooth}, a=\text{measure texture}\\
         0.01 &\text{ if } o=\text{false}, s'=\text{smooth}, a=\text{measure texture}\\
         0.01 &\text{ if } o=\text{true}, s'=\text{angular}, a=\text{measure texture}\\
         0.99 &\text{ if } o=\text{false}, s'=\text{angular}, a=\text{measure texture}\\
        \end{cases},
    \end{aligned}
    \end{equation*}
\begin{equation*} R = 
    \begin{aligned}
    \begin{cases}
            0.0 &\text{ if } s=\text{terminal}\\
            1.0 &\text{ if } s\in \{\text{(large, smooth)},\text{(small, angular)}\},a=\text{go through}\\
            0.9 &\text{ if }  s\neq \text{terminal},a=\text{go around}\\
            0.0 &\text{ o.w.}
    \end{cases},
    \end{aligned}
\end{equation*}
$\gamma=0.99$, and $b_0 = \mathcal{U}(\{ \text{large},\text{small}\} \times \{\text{smooth},\text{angular}\})$.

\subsection{Rover Navigation}

The Rover POMDP is a tuple $(S,A,O,T,Z,R,\gamma,b_0)$ where $S=\{\text{terminal} , \{ \text{large},\text{small}\} \times \{\text{smooth},\text{angular}\} \times \{1..3\} \times \{1..5\}\} \}$, $A=\{\text{measure size},\text{measure texture},\text{up},\text{down},\text{right},\text{left}\}$, $O= \{\text{true},\text{false} \}$,

\begin{equation*} T((s'.sand,s'.pos)|(s.sand,s.pos),a)=
    \begin{aligned}
    \begin{cases} 
            1.0  &\text{ if } s'.sand=s.sand,  s'.pos=s.pos+dir(a,s) , s.pos \notin \{(1,3),(1,4)\}\\
            1.0  &\text{ if } s'.sand=s.sand,  s'.pos=s.pos+dir(a,s) , s.pos \in \{(1,3),(1,4)\}, \\
            &\quad \; s.sand \in \{\text{(large, smooth)},\text{(small, angular)}\} \\
            1.0  &\text{ if } s'=\text{terminal} , s.pos \in \{(1,3),(1,4)\},\\
            &\quad \;  s.sand \notin \{\text{(large, smooth)},\text{(small, angular)}\} \\
            1.0  &\text{ if } s'=\text{terminal} , s.pos = (1,5) \\
            1.0 &\text{ if } s'=\text{terminal}, s=\text{terminal}\\
            0.0 &\text{ o.w.}
    \end{cases},
    \end{aligned}
\end{equation*}
    
where 
\begin{equation*}
	dir(a,s) = 
\begin{aligned}
\begin{cases}
        (0,0) &\text{ if } a \in \{\text{measure size}, \text{measure texture}\}\\
        (0,1) &\text{ if } a=\text{up}, s.pos+(0,1) \in  \{1..3\}, \times \{1..5\}\\ 
        (0,-1) &\text{ if } a=\text{down},s.pos+(0,-1) \in  \{1..3\}, \times \{1..5\}\\ 
        (-1,0) &\text{ if } a=\text{left},s.pos+(-1,0) \in  \{1..3\}, \times \{1..5\}\\ 
        (1,0) &\text{ if } a=\text{right},s.pos+(1,0) \in  \{1..3\}, \times \{1..5\}\\ 
\end{cases},
\end{aligned}
\end{equation*}

\begin{equation*} Z(o|a,s') = 
    \begin{aligned}
        \begin{cases}
             0.99 &\text{ if } o=\text{true}, s'.sand=\text{large}, a=\text{measure size}\\
             0.01 &\text{ if } o=\text{false}, s'.sand=\text{large}, a=\text{measure size}\\
             0.01 &\text{ if } o=\text{true}, s'.sand=\text{small}, a=\text{measure size}\\
             0.99 &\text{ if } o=\text{false}, s'.sand=\text{small}, a=\text{measure size}\\
             0.99 &\text{ if } o=\text{true}, s'.sand=\text{smooth}, a=\text{measure texture}\\
             0.01 &\text{ if } o=\text{false}, s'.sand=\text{smooth}, a=\text{measure texture}\\
             0.01 &\text{ if } o=\text{true}, s'.sand=\text{angular}, a=\text{measure texture}\\
             0.99 &\text{ if } o=\text{false}, s'.sand=\text{angular}, a=\text{measure texture}\\
        \end{cases},
    \end{aligned}
\end{equation*}

\begin{equation*} R = 
    \begin{aligned}
    \begin{cases}
        0.0 &\text{ if } s=\text{terminal}\\
        1.0 &\text{ if } s=(3,5)\\
        -0.1 &\text{ if } a=\text{left}\\
        0.0 &\text{  o.w.}
    \end{cases},
    \end{aligned}
\end{equation*}

$\gamma=0.99$, and $b_0 = \mathcal{U}(\{(3,1)\}\times \{\text{smooth},\text{angular}\}\times \{\text{large},\text{small}\})$.

\subsection{Cancer Diagnosis}

The Cancer Diagnosis POMDP is a tuple $(S,A,O,T,Z,R,\gamma,b_0)$ where $S=\{\text{healthy},\text{in-situ}, \text{invasive},\text{death} \}$, $A=\{\text{wait},\text{test},\text{treat}\}$, $O= \{\text{positive}, \text{negative},\text{dead} \}$,

\begin{equation*} T(s'|s,a)=
    \begin{aligned}
    \begin{cases} 
        0.02  &\text{ if } s'= \text{in-situ}, s=\text{healthy} \\
        0.98  &\text{ if } s'= \text{healthy}, s=\text{healthy} \\
        0.1  &\text{ if } s'= \text{invasive}, s=\text{in-situ}, a\in
        \{\text{wait},\text{test}\}\\
        0.9  &\text{ if } s'= \text{in-situ}, s=\text{in-situ}, a \in\{\text{wait},\text{test}\}\\
        0.6  &\text{ if } s'= \text{death}, s=\text{invasive},  a\in\{\text{wait},\text{test}\}\\
        0.4  &\text{ if } s'= \text{invasive}, s=\text{invasive},  a\in\{\text{wait},\text{test}\}\\
        0.4  &\text{ if } s'= \text{in-situ}, s=\text{in-situ}, a =\text{treat}\\
        0.6  &\text{ if } s'= \text{healthy}, s=\text{in-situ}, a =\text{treat}\\
        
        0.2  &\text{ if } s'= \text{healthy}, s=\text{invasive}, a =\text{treat}\\
        0.2  &\text{ if } s'= \text{death}, s=\text{invasive}, a =\text{treat}\\
        0.6  &\text{ if } s'= \text{invasive}, s=\text{invasive}, a =\text{treat}\\
        1.0 &\text{ if } s'=\text{death}, s=\text{death}\\
         0.0 &\text{ o.w.}
    \end{cases},
    \end{aligned}
\end{equation*}

\begin{equation*} Z(o|a,s') = 
    \begin{aligned}
    \begin{cases}
        1.0  &\text{ if }  o=\text{negative}, a=\text{wait}\\
        
         0.05 &\text{ if } o=\text{positive}, a=\text{test}, s=\text{healthy} \\
         0.95 &\text{ if } o=\text{negative}, a=\text{test}, s=\text{healthy} \\
         0.8 &\text{ if } o=\text{positive}, a=\text{test}, s=\text{in-situ} \\
         0.2 &\text{ if } o=\text{negative}, a=\text{test}, s=\text{in-situ} \\
         1.0 &\text{ if } o=\text{positive}, a=\text{test}, s=\text{invasive} \\
        
        1.0  &\text{ if }  o=\text{negative}, a=\text{treat},s=\text{healthy}\\
        1.0  &\text{ if }  o=\text{positive}, a=\text{treat},s\in\{\text{in-situ},\text{invasive}\} \\	
        1.0 &\text{ if } o=\text{dead}, s=\text{dead}\\
         0.0 &\text{ o.w.}
    \end{cases},
    \end{aligned}
\end{equation*}

\begin{equation*} R =
    \begin{aligned}
    \begin{cases}
         0.0 &\text{ if } s = \text{dead}\\
         1.0 &\text{ if } a = \text{wait}\\
         0.8 &\text{ if } a = \text{test}\\
         0.1 &\text{ if } a = \text{treat}\\
        0.0 &\text{  o.w.}
    \end{cases},
    \end{aligned}
    \end{equation*}
$\gamma=0.999$, and $b_0 = 
\begin{aligned}
\begin{cases}
    1.0 &\text{ if }s=\text{healthy}\\
    0.0 &\text{ o.w.}
\end{cases}
\end{aligned}$.

\subsection{Part Quality Control}
The Part Quality Control POMDP is a tuple $(S,A,O,T,Z,R,\gamma,b_0)$ where $S=\{\text{terminal},\text{passing},\text{failing}\}$, \newline
$A=\{\text{measure},\text{accept},\text{reject}\}$, $O= \{\text{pass}, \text{fail} \}$,

\begin{equation*} T(s'|s,a)=
    \begin{aligned}
    \begin{cases} 
         1.0  &\text{ if } a=\text{measure}, s'= s\\
         1.0 &\text{ if } a\in\{\text{accept},\text{reject}\}, s'=\text{terminal}\\
         0.0 &\text{ o.w.}
    \end{cases},
    \end{aligned}
\end{equation*}

\begin{equation*} Z(o|a,s') = 
    \begin{aligned}
    \begin{cases}
        ac  &\text{ if }  o=\text{pass}, s=\text{passing}, a=\text{measure}\\
        1-ac &\text{ if }  o=\text{fail}, s=\text{passing}, a=\text{measure}\\
        ac  &\text{ if }  o=\text{fail}, s=\text{failing}, a=\text{measure}\\
        1-ac  &\text{ if }  o=\text{pass}, s=\text{failing}, a=\text{measure}\\
         0.0 &\text{ o.w.}
    \end{cases},
    \end{aligned}
\end{equation*}
where $ac$ is the accuracy of the inspection,

\begin{equation*} R = 
    \begin{aligned}
    \begin{cases}
         -1.0 &\text{ if } a = \text{accept}, s=\text{failing}\\
        0.0 &\text{ o.w.}
    \end{cases},
    \end{aligned}
\end{equation*}

$\gamma=1.0$, and $b_0 = \mathcal{U}(\{\text{passing},\text{failing}\})$.

\end{document}